  \tikzstyle{vec}=[circle,inner sep=1pt,outer sep=-1pt,fill]
  \tikzstyle{border}=[thick]
  \tikzstyle{favborder}=[border,dotted]
  \tikzstyle{exclborder}=[border,dashed]
\newcommand\reallywidehat[1]{\arraycolsep=0pt\relax%
\begin{array}{c}
\stretchto{
  \scaleto{
    \scalerel*[\widthof{\ensuremath{#1}}]{\kern-.5pt\bigwedge\kern-.5pt}
    {\rule[-\textheight/2]{1ex}{\textheight}} 
  }{\textheight} %
}{0.5ex}\\           
#1\\                 
\rule{-1ex}{0ex}
\end{array}
}
\newcommand{\pspace}{\varOmega}
\newcommand{\pr}{P}
\newcommand{\lpr}{{\underline{\pr}}}
\newcommand{\upr}{{\overline{\pr}}}
\newcommand{\apr}{Q}
\newcommand{\gambles}{\mathcal{L}}
\newcommand{\desirs}{\mathcal{D}}
\newcommand{\edesirs}{\mathcal{E}}
\newcommand{\reals}{\mathbb{R}}
\newcommand{\nats}{\mathbb{N}}
\DeclareMathOperator{\posi}{posi}
\newcommand{\partit}{\mathcal{B}}
\newcommand{\solp}{\mathcal{M}}
\newcommand{\eps}{\varepsilon}
\newcommand{\values}{\mathcal{X}}
\DeclareMathOperator{\median}{median}
\newtheorem{remark}{Remark}
\newtheorem{definition}{Definition}
\newtheorem{proposition}{Proposition}
\newtheorem{example}{Example}
\begin{document}

\title{Nonlinear desirability theory}

\author{Enrique Miranda} 
\address{University of Oviedo (Spain), Dep. of Statistics and Operations Research} 
\email{mirandaenrique@uniovi.es}
\author{Marco Zaffalon}
\address{IDSIA (Switzerland)}
\email{zaffalon@idsia.ch}

\begin{abstract}
Desirability can be understood as an extension of Anscombe and Aumann's Bayesian decision theory to sets of expected utilities. At the core of desirability lies an assumption of linearity of the scale in which rewards are measured. It is a traditional assumption used to derive the expected utility model, which clashes with a general representation of rational decision making, though. Allais has, in particular, pointed this out in 1953 with his famous paradox. We note that the utility scale plays the role of a closure operator when we regard desirability as a logical theory. This observation enables us to extend desirability to the nonlinear case by letting the utility scale be represented via a general closure operator. The new theory directly expresses rewards in actual nonlinear currency (money), much in Savage's spirit, while arguably weakening the founding assumptions to a minimum. We characterise the main properties of the new theory both from the perspective of sets of gambles and of their lower and upper prices (previsions). We show how Allais paradox finds a solution in the new theory, and discuss the role of sets of probabilities in the theory.\\
\emph{Keywords}: Desirability, imprecise probability, nonlinear utility, sets of desirable gambles, coherent lower previsions, credal sets.
\end{abstract}

\maketitle

\section{Introduction}

The standard paradigm within decision making under uncertainty is based on the expected utility model: given a set of alternatives, we should select the one(s) maximising our utility, given the probabilities of the different outcomes. 

The foundations for this paradigm can be traced back to Savage's \cite{savage1972} and Anscombe and Aumann's work \cite{anscombe1963}. Both their axiomatisations aimed at modelling situations of nonlinear utility; yet both critically depend on some linearity assumption about the scale in which utilities are measured. This limits the scope of the expected utility model in a way that Allais pointed out long ago with his famous paradox \cite{allais1953}.

The paradox sparked a great deal of research that has attempted to remedy the shortcomings of expected utility, as for instance Machina's \cite{machina1982}, prospect theory \cite{kahneman1979,tversky1992}, rank-dependent expected utility \cite{quiggin1993} or regret theory \cite{bell1982,loomes1982}. These approaches have different degrees of generality and depart more or less radically from the original paradigm.

In this paper we would like to go back to the foundations of expected utility, and re-start from there with what we believe is the minimal change we need to make to them so as to enable a faithful and general treatment of nonlinearity. Key to our aim is desirability theory as proposed by Williams \cite{williams1975} and popularised by Walley \cite{walley1991} (see \cite{quaeghebeur2014} for an introduction). 

Desirability, or the theory of coherent sets of desirable gambles, originated as a generalisation of de Finetti's theory of probability \cite{finetti19745} to deal with imprecise information. It is a very general theory of uncertainty that encompasses as particular cases non-additive measures, such as possibility measures \cite{dubois1988}, belief functions \cite{shafer1976}, p-boxes \cite{ferson2003}, and sets of probabilities (robust Bayesian models). Somewhat unexpectedly, we showed in \cite{zaffalon2017a,zaffalon2018a} that desirability also (essentially) coincides with Anscombe and Aumann's theory \cite{anscombe1963} once this is generalised to handle sets of probabilities and utilities (at that point, a number of authors had already `robustified' expected utility so as to handle imprecision in both probability and utility, see, e.g., \cite{galaabaatar2013,nau2006,seidenfeld1995}). This created a very general theory of decision making but at the same time it made apparent that Allais paradox was still lurking in the background, as an undesired inheritance of Anscombe and Aumann's original work.

It made also clear precisely where the problem originates from, though, thus hinting at a way to correct for it: i.e., replacing the linear definition of the utility scale, which is `hard-wired' in all the mentioned approaches to expected utility, with a general, nonlinear, one. This is what we set out to do in this paper. 

The resulting theory of rational decision making is founded on three simple axioms that can intuitively be described as follows: gaining money is desirable; losing money is undesirable; the value of money is measured on a logically consistent utility scale (we use the term `money' in a very broad sense to mean amounts of goods under consideration, with no restriction on their cardinality). Note that we take rewards to be paid in money. In doing so, we get closer in spirit to Savage's approach, which assumes gambles to pay rewards in actual currency, unlike Anscombe and Aumann's that pays rewards indirectly via compound lotteries.

Technically, we shall go after our endeavour by making use of closure operators, which will determine those sets of desirable gambles that are internally consistent, and which shall then encompass our nonlinear dispositions towards rewards. We shall give the main notions of our theory in Section~\ref{sec:scale}, after some technical preliminaries. 

In the standard case of a linear utility scale, a set of desirable gambles allows us to determine a lower and an upper prevision, which may be given a behavioural interpretation as acceptable buying and selling prices, thus extending the work by de Finetti \cite{finetti19745} to the imprecise case. In Section~\ref{sec:lpr} we shall study what happens to this correspondence in the nonlinear case. The connection with sets of probabilities is analysed in Section~\ref{sec:credal}. In Section~\ref{sec:allais} we look at the preference relations that are encompassed by a closure operator and show that they can be used to give a solution to Allais paradox. We conclude the paper in Section~\ref{sec:conclusions} with some additional remarks. 

\section{Desirability with linear and nonlinear utility scales}\label{sec:scale}

Consider a possibility space $\pspace$. A \emph{gamble} $f:\pspace\rightarrow\reals$ is a bounded real-valued function on $\pspace$. For any two gambles $f,g$, we use $f\gneq g$ to denote that $f\geq g$ and $f\neq g$. We denote by $\gambles(\pspace)$ the set of all the gambles on $\pspace$ and by $\gambles^+(\pspace)\coloneqq\{f\in\gambles(\pspace): f\gneq0\}$ the subset of the \emph{positive gambles}. We denote these sets also by $\gambles$ and $\gambles^+$, respectively, when there is no ambiguity about the space involved. Negative gambles are defined by $\gambles^-\coloneqq-\gambles^+$, and we shall also use $\gambles^-_0\coloneqq\gambles^-\cup\{0\}$ and $\gambles_{<}:=\{f: \sup f<0\}$. Events are denoted by capital letters such as $A,B,C\subseteq\pspace$. We shall identify events with indicator functions, whence disjunctions ($A\cap B$) will be represented by products ($AB$). As a consequence, the product $Bf$ is equal to $f$ on $B$ and zero elsewhere. It is interpreted as a conditional gamble: one that is called off if $B$ does not occur. 
Finally, given a partition $\partit$ of $\pspace$, a gamble is said to be $\partit$-\emph{measurable} when it is constant on the elements of $\partit$; we shall denote by $\gambles_\partit$ the set of $\partit$-measurable gambles. 

The traditional approach to coherence in Williams-Walley's theory assumes that the scale in which the rewards, represented by gambles, are measured is linear \cite[Sect.~2.2]{walley1991}. This implies that the gambles whose desirability is implied by those from a given set $\desirs$ are those in its conic hull: 

\begin{definition}[\bf{Conic hull}] \label{def:posi}Given a set $\desirs\subseteq\gambles(\pspace)$, let 
\begin{equation*}
\posi(\desirs)\coloneqq\left\{ \sum_{j=1}^{r} \lambda_{j}f_{j}: f_{j} \in \desirs, \lambda_{j} > 0, r \ge 1\right\}
\end{equation*}
denote the \emph{conic hull} of the original set.
\end{definition}

Then a set of desirable gambles $\desirs\subseteq\gambles$ is called (Williams-)\emph{coherent} if and only if the following conditions hold:
\begin{enumerate}[label=\upshape D\arabic*.,ref=\upshape D\arabic*]
\item\label{D1} $\gambles^+ \subseteq \desirs$ [Accepting Partial Gains];
\item\label{D2} $0\notin \desirs$ [Avoiding Status Quo];
\item\label{D3} $f,g \in \desirs \Rightarrow f+g \in \desirs$ [Additivity];
\item\label{D4} $f \in \desirs, \lambda>0 \Rightarrow \lambda f \in \desirs$ [Positive Homogeneity].
\end{enumerate}
This is equivalent to requiring that $\posi(\desirs\cup \gambles^+)=\desirs$ and $\desirs\cap\gambles^-_0=\emptyset$. When we regard the theory of desirability from a logical perspective, $\posi$ corresponds to the deductive closure (this was pointed out by de Cooman long ago \cite{cooman2005e}); \ref{D1} to the tautologies and \ref{D2} to the status quo---which combined with the other axioms defines the contradictions, i.e., $\gambles^-$. For a deeper account of \emph{desirability}, we refer to \cite{couso2011,cooman2012b,miranda2010c,quaeghebeur2014} and \cite[Sect.~3.7]{walley1991}.

We proceed to generalise desirability by retaining the tautologies and the contradictions while replacing $\posi$ with the standard definition of a closure operator:

\begin{definition}[{\bf Closure operator}]\label{def:clo-ope} Let $\mathcal{P}(\cdot)$ denote power set. A map $\kappa:\mathcal{P}(\gambles)\rightarrow\mathcal{P}(\gambles)$ is a \emph{closure operator} if and only if for any two sets $\desirs,\desirs'\subseteq\gambles$ it satisfies:
\begin{enumerate}[label=\upshape C\arabic*.,ref=\upshape C\arabic*]
\item\label{C1} $\desirs\subseteq\kappa(\desirs)$ \emph{[Extensiveness]};
\item\label{C2} $\desirs\subseteq\desirs'\Rightarrow\kappa(\desirs)\subseteq\kappa(\desirs')$ \emph{[Monotonicity]};
\item\label{C3} $\kappa(\kappa(\desirs))=\kappa(\desirs)$ \emph{[Idempotency]}.
\end{enumerate}
We shall denote by $\mathcal{K}$ the family of all closure operators.
\end{definition}

Let us stop a moment to reflect on such a conceptual step. In the traditional case, the linear utility scale represented by $\posi$ prescribes how gambles relate to one another in terms of desirability. With general closure operators we now allow for such a relation to take on very diverse, and in particular nonlinear, forms. Therefore closure operators not only replace the linear utility scale in our approach, but could be said to be new forms of utility scales, which are nonlinear. Let us stress, to avoid confusion, that utility and utility scale are different concepts: in our approach, for example, we have utility scales (closure operators) but we do not have a utility in general; adopting closure operators implies that considerations of uncertainty and value are intertwined in a way that prevents them from being disentangled in general. Still, it is possible to relate the two concepts in special cases, as in the next example.

\begin{example}[Probability-utility pairs]\label{ex:prob-utility}
As a motivating example, let us show that the preferences encoded by a probability-utility pair can be incorporated into our formalism. To keep things simple, let $u:\reals\rightarrow\reals$ be a monotone and invertible utility function satisfying $u(0)=0$. Given $\lambda>0,f,g\in\gambles$, let us define the following operations: 
\begin{align*}
f\oplus g&\coloneqq u^{-1}(u(f)+u(g)),\\
\lambda\odot f&\coloneqq u^{-1}(\lambda u(f)),
\end{align*}
where $u:\gambles(\pspace)\rightarrow\gambles(\pspace)$ is defined by $(u\circ f)(\omega):=u(f(\omega))$ for all $\omega\in\pspace,f\in\gambles(\pspace)$.

Consider now the operator $\kappa$ given by 
\begin{align*}
\kappa(\desirs)&\coloneqq\gambles^+\cup\left\{f\in\gambles: f\ge\bigoplus_{i=1}^n \lambda_i\odot f_i\text{ for some } n\ge1, \lambda_i>0, f_i\in\desirs\right\}\\
&=\left\{f\in\gambles: f=\bigoplus_{i=1}^n \lambda_i\odot f_i\text{ for some } n\ge1, \lambda_i>0, f_i\in\desirs\cup\gambles^+\right\}.
\end{align*}
It is not difficult to check that $\kappa$ satisfies~\ref{C1}--\ref{C3}. 
\end{example}

We shall use a few examples of closure operators to illustrate the different notions we shall introduce throughout. 

\begin{example}\label{ex:first-example}
The following operators satisfy axioms~\ref{C1}--\ref{C3}: 
\begin{itemize}
 \item $\kappa_1(\desirs)\coloneqq\posi(\desirs).$
 \item $\kappa_2(\desirs)\coloneqq\{\sum_{i=1}^{n} f_i: n\in\nats, f_1,\dots,f_n\in\desirs\}$.
 \item $\kappa_3(\desirs)\coloneqq\{g\geq \lambda f: f\in\desirs, \lambda>0\}$.
 \item $\kappa_4(\desirs)\coloneqq\{g\geq f:f\in\desirs\}$.
\end{itemize}
Out of these, $\kappa_1$ is the linear closure operator used in traditional (Williams-)coherence; $\kappa_4$ is the closure operator associated with $2$-convexity \cite[Sect.~6.2]{pelessoni2016}; while $\kappa_3$ is related, but not equivalent, to $2$-coherence, as showed also in \cite[Sect.~6.1]{pelessoni2016}.\footnote{The difference is that the axiomatisation of $2$-coherence in terms of desirability requires in addition that the sum of two desirable gambles must have a positive supremum.} 
\end{example}

\begin{remark}
In past work \cite{zaffalon2017a,zaffalon2018a}, we generalised desirability to utility considerations by introducing a set of prizes $\values$ besides a possibility space $\pspace$. Gambles were defined on the product space $\pspace\times\values$; coherence was kept standard via \ref{D1}--\ref{D4}. As mentioned already, this amounts to generalising Anscombe and Aumann's work \cite{anscombe1963} to sets of expected utilities and, under proper conditions, to obtain a utility function over prizes that is generally nonlinear. Nonetheless, the underlying machinery is still linear, due to \ref{D3} and \ref{D4} (that is,  due to $\kappa_1$), and for this reason it still incurs problems such as Allais paradox. In the current work, we directly target such a basic issue by replacing $\kappa_1$ with any closure operator, thus giving up on linearity altogether. Gambles are defined only on $\pspace$ and their values can naturally be interpreted as amounts of money, as in the tradition of Savage. 

On the other hand, the idea of nonlinear utility in the context of sets of desirable gambles has also been considered in the recent work in \cite{casanova2023a}. There are a few differences with the work we are carrying out here, though: on the one hand, the authors consider the implications of a \emph{finite} set of assessments of desirability, while we consider the implications of an arbitrary family $\desirs$; they focus on a few examples of axiomatisations that are weaker than traditional desirability, and show that they can be formulated as a classification problem, while here we work more generally with an arbitrary closure operator $\kappa$; and they also allow for assessments of rejection (non-desirability), which is something we do not take into account in our model. $\lozenge$
\end{remark}

Let us present how to analyse the consistency of a desirability assessment $\desirs$ with respect to some fixed closure operator $\kappa$. Traditionally, this is done by requiring that:
\begin{itemize}
    \item[(i)] the implications of our desirability assessments do not make us subject to a sure loss; and 
    \item[(ii)] our set is deductively closed, in that it includes all the gambles whose desirability is implied by those in $\desirs$. 
\end{itemize}
In order to generalise these two ideas, we need first to give a proper definition of what the \emph{implications} of our assessments are. This is given by what we shall call the natural extension: 

\begin{definition}[{\bf Natural extension}] Given a set $\desirs$ of desirable gambles, its \emph{natural extension} with respect to a closure operator $\kappa$ is given by $\edesirs_\kappa(\desirs)\coloneqq\kappa(\desirs\cup\gambles^+)$.
\end{definition}

The idea behind the above notion is straightforward: the natural extension is given by the closure of the set of gambles whose desirability we have already assessed. Since we can assume without loss of generality that any positive gamble must be desirable, we must also include those in our set before applying $\kappa$. 

Once we have established the notion of natural extension, we can give the expression of the requirements (i) and (ii) mentioned above in terms of a general closure operator $\kappa$: 

\begin{definition}[{\bf Avoiding partial and sure loss for gambles}] A set $\desirs$ of desirable gambles is said to \emph{avoid partial loss} if and only if $\gambles^-_0\cap\edesirs_\kappa(\desirs)=\emptyset$. It is said to \emph{avoid sure loss} if and only if $\gambles_{<}\cap\edesirs_{\kappa}(\desirs)=\emptyset$. 
\end{definition}

The difference between the two notions lies in which gambles are considered to be undesirable: in the case of avoiding partial loss, we exclude all those gambles $f$ that can never give us a positive utility, no matter the outcome of the experiment; while in the weaker notion of avoiding sure loss we only exclude the gambles that make us always lose some positive amount of utiles. This means for instance that a set of gambles that includes the zero gamble may avoid sure loss but it will never avoid partial loss. 

Requirement (ii) leads to the general notion of coherence:
\begin{definition}[{\bf Coherence relative to a set of gambles}]\label{def:rkcoh}
Say that $\desirs$ is \emph{coherent relative to} a superset $\apr\subseteq\gambles$ if and only if $\desirs$ avoids partial loss and
$\apr\cap\edesirs_\kappa(\desirs)\subseteq\desirs$ (and hence
$\apr\cap\edesirs_\kappa(\desirs)=\desirs$).
\end{definition}

The reason why we are considering in the definition above coherence with respect to some set of gambles is that we shall later apply this notion after making operations of marginalisation or conditioning, which will restrict our framework to proper subsets of $\gambles$. In the particular case where $\apr=\gambles$, we shall simply say that $\desirs$ is \emph{coherent}, and the condition can be characterised in the following manner: 

\begin{proposition}\label{prop:kcoh} $\desirs$ is coherent if and only if it satisfies the following conditions:
\begin{enumerate}[label=\upshape K\arabic*.,ref=\upshape K\arabic*]
\item\label{K1} $\gambles^+ \subseteq \desirs$ \emph{[Accepting Partial Gains]};
\item\label{K2} $\gambles^- _0\cap\desirs=\emptyset$ \emph{[Avoiding Partial Loss]};
\item\label{K3}$\kappa(\desirs)=\desirs$ \emph{[Deductive Closure]}.
\end{enumerate}
\end{proposition}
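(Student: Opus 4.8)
The plan is to unwind Definition~\ref{def:rkcoh} in the special case $\apr=\gambles$ and then to show that the two sides imply each other through a short chain of set inclusions driven by the closure axioms~\ref{C1}--\ref{C3}. First I would record the key simplification: since $\edesirs_\kappa(\desirs)=\kappa(\desirs\cup\gambles^+)\subseteq\gambles$, the intersection appearing in the coherence condition is redundant, so that $\gambles\cap\edesirs_\kappa(\desirs)=\edesirs_\kappa(\desirs)$. Hence $\desirs$ is coherent if and only if it avoids partial loss, i.e. $\gambles^-_0\cap\kappa(\desirs\cup\gambles^+)=\emptyset$, and in addition $\kappa(\desirs\cup\gambles^+)\subseteq\desirs$. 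Everything then reduces to comparing these two requirements with~\ref{K1}--\ref{K3}.

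For the ``only if'' direction, assume these two conditions. For~\ref{K1}, extensiveness~\ref{C1} gives $\gambles^+\subseteq\desirs\cup\gambles^+\subseteq\kappa(\desirs\cup\gambles^+)\subseteq\desirs$. Once~\ref{K1} is available I may replace $\desirs\cup\gambles^+$ by $\desirs$, so that $\kappa(\desirs)=\kappa(\desirs\cup\gambles^+)\subseteq\desirs$; combined with the reverse inclusion furnished by~\ref{C1}, this yields the fixpoint property $\kappa(\desirs)=\desirs$, which is~\ref{K3}. Finally~\ref{K2} follows because $\desirs\subseteq\desirs\cup\gambles^+\subseteq\kappa(\desirs\cup\gambles^+)$ by~\ref{C1}, whence $\gambles^-_0\cap\desirs\subseteq\gambles^-_0\cap\kappa(\desirs\cup\gambles^+)=\emptyset$.

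For the ``if'' direction, assume~\ref{K1}--\ref{K3}. Condition~\ref{K1} gives $\desirs\cup\gambles^+=\desirs$, so the natural extension collapses: $\edesirs_\kappa(\desirs)=\kappa(\desirs\cup\gambles^+)=\kappa(\desirs)=\desirs$, using~\ref{K3} in the last step. Avoiding partial loss is then exactly~\ref{K2}, and the inclusion $\edesirs_\kappa(\desirs)\subseteq\desirs$ holds as an equality, so both requirements of the simplified coherence condition are met.

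The argument is essentially definitional, and so I do not anticipate a genuine obstacle; the one point that needs care is the ordering in the ``only if'' direction, where~\ref{K1} must be derived before~\ref{K3}, since it is precisely the identity $\desirs\cup\gambles^+=\desirs$ supplied by~\ref{K1} that lets the natural extension reduce to $\kappa(\desirs)$ and thereby connects the coherence closure condition to the fixpoint form of~\ref{C3}. The same reduction is the hinge of the converse, so checking that $\edesirs_\kappa(\desirs)\subseteq\gambles$ and that~\ref{K1} performs this collapse is really the crux of the whole equivalence.
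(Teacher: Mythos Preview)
Your argument is correct and follows essentially the same route as the paper's own proof: first simplify coherence relative to $\gambles$ to the pair of conditions $\kappa(\desirs\cup\gambles^+)\subseteq\desirs$ and $\gambles^-_0\cap\kappa(\desirs\cup\gambles^+)=\emptyset$, then derive~\ref{K1} from~\ref{C1}, use it to collapse $\desirs\cup\gambles^+$ to $\desirs$ and obtain~\ref{K3}, and read off~\ref{K2}; the converse is identical to the paper's. The only cosmetic difference is that you get~\ref{K1} via~\ref{C1} alone, whereas the paper interposes~\ref{C2}, but this changes nothing of substance.
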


\begin{proof}
Def.~\ref{def:rkcoh} with $\mathcal{Q}:=\gambles$ implies that $\desirs$ is coherent if and only if $\kappa(\desirs\cup \gambles^+)=\desirs$ and $\desirs\cap\gambles^-_0=\emptyset$. Let us prove that this is equivalent to conditions~\ref{K1}--\ref{K3}. 

Assume that $\desirs$ is coherent. Applying~\ref{C1} and~\ref{C2}, we deduce that 
\[\gambles^+\subseteq\kappa(\gambles^+)\subseteq\kappa(\gambles^+\cup\desirs)=\desirs,
\] whence~\ref{K1} holds. This implies that $\kappa(\desirs)=\kappa(\desirs\cup\gambles^+)=\desirs$, whence~\ref{K3} also holds. Finally,~\ref{K2} follows by definition. 

Conversely, if $\desirs$ satisfies~\ref{K1}--\ref{K3} then 
\[
\edesirs_{\kappa}(\desirs)=\kappa(\desirs\cup\gambles^+)=\kappa(\desirs)=\desirs,
\]
using~\ref{K1} and~\ref{K3} for the second and third equalities, respectively. Moreover, by~\ref{K2}, $\gambles^-_0\cap \edesirs_{\kappa}(\desirs)=\gambles^-_0\cap \desirs=\emptyset$, whence $\desirs$ also avoids partial loss. From this we conclude that $\desirs$ is coherent. 
\end{proof}

\begin{example}
Let us consider again the closure operator $\kappa$ from Example~\ref{ex:prob-utility}, that is associated with an invertible utility function $u$. If we now consider a linear prevision (that is, the expectation operator with respect to a finitely additive probability) $\pr$ on $\pspace$ and the set of gambles
\[
\desirs\coloneqq\gambles^+\cup\{f: \pr(u(f))>0\},
\]
it holds that $\kappa(\desirs)=\desirs$ and that $\desirs\cap\gambles^-_0=\emptyset$. In other words, $\desirs$ is $\kappa$-coherent. Moreover, it is easy to see that
\[
\kappa(\desirs)=u^{-1}(\kappa_1(u(\desirs))),
\]
showing even more clearly that linear utility can be recovered as a special case. 
\end{example}

If we compare axioms~\ref{K1}--\ref{K3} with axioms~\ref{D1}--\ref{D4}, which characterise Williams-cohe\-rence, we can see more clearly that the closure operator takes the role of axioms~\ref{D3} (additivity) and~\ref{D4} (positive homogeneity). The natural extension by a closure operator $\kappa$ plays a similar role as in traditional desirability theory: 

\begin{proposition}\label{prop:natex}
Consider a closure operator $\kappa$, and let $\desirs$ be a set of gambles. Then 
$\desirs$ avoids partial loss if and only if it has a coherent superset. In that case, $\edesirs_{\kappa}(\desirs)$ is the smallest coherent superset of $\desirs$.
\end{proposition}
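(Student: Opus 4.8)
The plan is to leverage Proposition~\ref{prop:kcoh}, which characterises coherence through~\ref{K1}--\ref{K3}, together with the three defining properties~\ref{C1}--\ref{C3} of the closure operator. The whole argument turns on one observation: monotonicity forces $\edesirs_\kappa(\desirs)$ to sit inside every coherent superset of $\desirs$. This single containment will drive both the forward implication and the minimality claim.

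First I would prove that having a coherent superset implies avoiding partial loss, while simultaneously extracting the containment needed later. Suppose $\desirs'$ is coherent with $\desirs\subseteq\desirs'$. By~\ref{K1} we have $\gambles^+\subseteq\desirs'$, so $\desirs\cup\gambles^+\subseteq\desirs'$; applying~\ref{C2} and then~\ref{K3} yields $\edesirs_\kappa(\desirs)=\kappa(\desirs\cup\gambles^+)\subseteq\kappa(\desirs')=\desirs'$. Since~\ref{K2} gives $\gambles^-_0\cap\desirs'=\emptyset$, this containment forces $\gambles^-_0\cap\edesirs_\kappa(\desirs)=\emptyset$, that is, $\desirs$ avoids partial loss.

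For the converse, suppose $\desirs$ avoids partial loss; I would verify that $\edesirs_\kappa(\desirs)=\kappa(\desirs\cup\gambles^+)$ itself satisfies~\ref{K1}--\ref{K3}, so that Proposition~\ref{prop:kcoh} certifies it as coherent. Property~\ref{K1} follows from extensiveness~\ref{C1}, which already gives $\gambles^+\subseteq\desirs\cup\gambles^+\subseteq\kappa(\desirs\cup\gambles^+)$; property~\ref{K2} is precisely the avoiding-partial-loss hypothesis; and property~\ref{K3} is immediate from idempotency~\ref{C3}, since $\kappa(\kappa(\desirs\cup\gambles^+))=\kappa(\desirs\cup\gambles^+)$. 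Extensiveness also makes $\edesirs_\kappa(\desirs)$ a superset of $\desirs$, so $\desirs$ indeed has a coherent superset.

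Finally, minimality is immediate from the containment established in the first step: every coherent superset $\desirs'$ of $\desirs$ satisfies $\edesirs_\kappa(\desirs)\subseteq\desirs'$, and $\edesirs_\kappa(\desirs)$ is itself such a superset, so it is the smallest one. I do not anticipate a genuine obstacle here, as the argument is direct bookkeeping with the closure axioms; the one point requiring care is to combine $\desirs\subseteq\desirs'$ with $\gambles^+\subseteq\desirs'$ \emph{before} invoking monotonicity, rather than attempting to bound $\kappa(\desirs)$ and $\kappa(\gambles^+)$ separately.
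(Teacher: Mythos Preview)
Your proof is correct and follows essentially the same approach as the paper's own argument: both verify that $\edesirs_\kappa(\desirs)$ satisfies~\ref{K1}--\ref{K3} (so that it is coherent whenever $\desirs$ avoids partial loss), and both derive the key containment $\edesirs_\kappa(\desirs)\subseteq\desirs'$ for any coherent superset $\desirs'$ via~\ref{C2} and~\ref{K3} to obtain the converse implication and minimality. The only difference is the order in which you present the two directions.
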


\begin{proof}
First of all, note that
 \begin{equation*}
  \kappa(\edesirs_{\kappa}(\desirs))=\kappa(\kappa(\desirs\cup\gambles^+))=\kappa(\desirs\cup\gambles^+)=\edesirs_{\kappa}(\desirs),
 \end{equation*}
using \ref{C3} for the second equality. By \ref{C1}, $\gambles^+\subseteq \kappa(\desirs\cup\gambles^+)=\edesirs_{\kappa}(\desirs)$. Therefore, $\edesirs_{\kappa}(\desirs)$ is coherent if and only if it avoids partial loss, which by definition is equivalent to $\desirs$ avoiding partial loss. This shows that if $\desirs$ avoids partial loss then it has a coherent superset. 

Conversely, for any coherent superset $\desirs'$ of $\desirs$ it holds that 
\[
\desirs\cup\gambles^+ \subseteq \desirs'\Rightarrow \kappa(\desirs\cup\gambles^+) \subseteq \kappa(\desirs')=\desirs',
\]
whence $\edesirs_{\kappa}(\desirs)\subseteq\desirs'$. This implies that $\edesirs_{\kappa}(\desirs)\cap\gambles^-_0=\desirs'\cap\gambles^-_0=\emptyset$, whence $\edesirs_{\kappa}(\desirs)$ avoids partial loss and as a consequence $\desirs$ avoids partial loss. 

We conclude from the proof above that (a) if $\desirs$ avoids partial loss then $\edesirs_{\kappa}(\desirs)$ is a coherent superset; and (b) $\edesirs_{\kappa}(\desirs)$ is included in any other coherent superset of $\desirs$, and as a consequence it is the smallest one. 
\end{proof}

Given a closure operator $\kappa$ we shall denote by $\Lambda_{\kappa}$ the family of $\kappa$-coherent sets of gambles, i.e., those sets that are deductively closed with respect to $\edesirs_{\kappa}$. This leads to the following definition: 

\begin{definition}[{\bf Equivalence of closure operators}]\label{def:equiv}
Two operators $\kappa,\kappa'$ are called \emph{equivalent} if and only if $\Lambda_{\kappa}=\Lambda_{\kappa'}$; we say that $\kappa$ \emph{implies} $\kappa'$ if and only if $\Lambda_{\kappa}\subseteq\Lambda_{\kappa'}$, i.e., if and only if any set of gambles that is $\kappa$-coherent is also $\kappa'$-coherent. 
\end{definition}

It is immediate to prove that the above condition determines an equivalence relationship on the family of closure operators; we next show that it is not trivial, in the sense that two different closure operators $\kappa,\kappa'$ may be equivalent:  

\begin{example}\label{ex:equiv}
Consider $f\in\gambles_{<}$ and let us define the closure operators $\kappa$ and $\kappa'$ by: 
\[
\kappa(\desirs)\coloneqq\begin{cases}
 \gambles^+ &\text{ if } \desirs \subseteq\gambles^+ \\
 \gambles &\text{ otherwise}
 \end{cases}
\]
and 
\[
\kappa'(\desirs)\coloneqq\begin{cases}
 \gambles^+ &\text{ if } \desirs \subseteq\gambles^+ \\
 \kappa_4(\desirs\cup\{f\}) 
 &\text{ otherwise.}
 \end{cases}
\]
Then $\Lambda_{\kappa}=\Lambda_{\kappa'}=\{\gambles^+\}$, so they are equivalent; however, $\kappa(\{f\})=\gambles\neq\kappa'(\{f\})=\{g\geq f\}$.
\end{example}

If we go back to the closure operators in Ex.~\ref{ex:first-example}, it is easy to prove that $\Lambda_{\kappa_1}\subseteq\Lambda_{\kappa_2}\cap\Lambda_{\kappa_3}$ and that $\Lambda_{\kappa_2}\cup\Lambda_{\kappa_3}\subseteq\Lambda_{\kappa_4}$. 
Let us show that there is no additional inclusion relationship, and, as a consequence, that no two of these closure operators are equivalent: 

\begin{example}\label{ex:kappa-different}
Consider a binary space $\pspace$, and the following sets of desirable gambles: 
\begin{itemize}
 \item $\desirs_1\coloneqq\gambles^+\cup\{f\geq (-n,n) \text{ for some } n\in\mathbb{N}\}$. Then $\desirs_1$ is $\kappa_4$- and $\kappa_2$-coherent, but neither $\kappa_1$- nor $\kappa_3$-coherent. Thus, $\Lambda_{\kappa_1}\subsetneq\Lambda_{\kappa_2}$, $\Lambda_{\kappa_3}\subsetneq\Lambda_{\kappa_4}$ and $\Lambda_{\kappa_2}\nsubseteq\Lambda_{\kappa_3}$.  
 \item $\desirs_2\coloneqq\{f\geq (\lambda,-\lambda) \text{ for some } \lambda \neq 0\}$. Then $\desirs_1$ is $\kappa_4$- and $\kappa_3$-coherent, but neither $\kappa_1$- nor $\kappa_2$-coherent. Thus, $\Lambda_{\kappa_1}\subsetneq\Lambda_{\kappa_3}$, $\Lambda_{\kappa_2}\subsetneq\Lambda_{\kappa_4}$ and $\Lambda_{\kappa_3}\nsubseteq\Lambda_{\kappa_2}$.  
\end{itemize}  
\end{example}

These implications are summarised in Fig.~\ref{fig:implics}.
\begin{figure}[h]
\begin{center}
\begin{tikzpicture}
\draw (0,1) node(k1) {$\kappa_1$};
\draw (-1,0) node(k3) {$\kappa_2$};
\draw (1,0) node(k4) {$\kappa_3$};
\draw (0,-1) node(k2) {$\kappa_4$};
\draw[->] (k1) -- (k3);
\draw[->] (k1) -- (k4);
\draw[->] (k3) -- (k2);
\draw[->] (k4) -- (k2);
\end{tikzpicture}
\end{center}
\caption{Implications between the examples of closure operators.}\label{fig:implics}
\end{figure}

In this paper, we shall consider only closure operators that satisfy the following additional axiom:
\begin{definition}[{\bf Dominance}]\label{def:clo-dom} 
A set of gambles $\desirs$ is said to be \emph{closed under dominance} when $\desirs=\{g\geq f:f\in\desirs\}=\kappa_4(\desirs).$ A closure operator $\kappa$ \emph{satisfies dominance} if and only if
\begin{enumerate}[label=\upshape C4.,ref=\upshape C4]
\item\label{C4} $(\forall \desirs\in\Lambda_{\kappa})\ \desirs$ is closed under dominance \emph{[Dominance]}.
\end{enumerate}
We shall denote by $\mathcal{K}_d$ the subfamily of $\mathcal{K}$ given by those closure operators that are closed under dominance. 
\end{definition}
\noindent We regard Axiom~\ref{C4} as a very mild requirement once \ref{K1} is accepted. Indeed, if desire $g$ and $f\gneq g$, exchanging $g$ with $f$ means adding a positive gain, which is something that we should be disposed to do even if we do not impose the additivity axiom~\ref{D3} for arbitrary sums of desirable gambles. For this reason, in the remainder of this paper we shall focus only on closure operators that belong to $\mathcal{K}_d$, even if the majority of the notions and results can be extended for arbitrary elements of $\mathcal{K}$. Note that the smallest closure operator in $\mathcal{K}_d$ is $\kappa_4$. 

Closure operators satisfy the following:

\begin{proposition}\label{prop:arbitrary-intersections}
For any set of gambles $\apr\subseteq\gambles$ and for any closure operator $\kappa\in\mathcal{K}_d$, the family of $\kappa$-coherent sets of gambles relative to $\apr$ is closed under arbitrary intersections. 
\end{proposition}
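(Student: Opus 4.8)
The plan is to unwind Definition~\ref{def:rkcoh} and verify its conditions for the intersection directly, letting monotonicity~\ref{C2} of the closure operator do essentially all the work. Concretely, I would let $\{\desirs_i\}_{i\in I}$ be a nonempty family of sets each coherent relative to $\apr$, and set $\desirs\coloneqq\bigcap_{i\in I}\desirs_i$. Since ``coherent relative to $\apr$'' presupposes $\desirs_i\subseteq\apr$ for every $i$, I get $\desirs\subseteq\apr$ at once, so $\desirs$ is a legitimate candidate. It then remains to check the two defining requirements: that $\desirs$ avoids partial loss, i.e. $\gambles^-_0\cap\edesirs_\kappa(\desirs)=\emptyset$, and that $\apr\cap\edesirs_\kappa(\desirs)=\desirs$.

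The recurring tool is the trivial observation that $\desirs\subseteq\desirs_i$ for every $i$, so monotonicity~\ref{C2} yields $\edesirs_\kappa(\desirs)=\kappa(\desirs\cup\gambles^+)\subseteq\kappa(\desirs_i\cup\gambles^+)=\edesirs_\kappa(\desirs_i)$ for all $i$. To establish avoidance of partial loss I would fix any single index $i$ (this is where nonemptiness of $I$ is used) and write $\gambles^-_0\cap\edesirs_\kappa(\desirs)\subseteq\gambles^-_0\cap\edesirs_\kappa(\desirs_i)=\emptyset$, the last equality holding because $\desirs_i$ itself avoids partial loss.

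The one step carrying genuine content is the identity $\apr\cap\edesirs_\kappa(\desirs)=\desirs$. The inclusion $\desirs\subseteq\apr\cap\edesirs_\kappa(\desirs)$ is free: I already have $\desirs\subseteq\apr$, and extensiveness~\ref{C1} gives $\desirs\subseteq\desirs\cup\gambles^+\subseteq\edesirs_\kappa(\desirs)$. For the reverse inclusion I would take an arbitrary $g\in\apr\cap\edesirs_\kappa(\desirs)$ and show $g\in\desirs_i$ for every $i$: the monotonicity inclusion above gives $g\in\edesirs_\kappa(\desirs)\subseteq\edesirs_\kappa(\desirs_i)$, and since also $g\in\apr$, the coherence of $\desirs_i$ (namely $\apr\cap\edesirs_\kappa(\desirs_i)=\desirs_i$) forces $g\in\desirs_i$. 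As $i$ was arbitrary, $g\in\bigcap_{i}\desirs_i=\desirs$, closing the loop.

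I do not expect any serious obstacle: the whole argument is a formal consequence of monotonicity together with the definition of relative coherence, and it does not invoke dominance~\ref{C4}, which is merely a standing assumption. The two points deserving care are (i) keeping the intersection with $\apr$ present throughout, since it is precisely this relativisation that makes the membership test $g\in\apr\cap\edesirs_\kappa(\desirs_i)=\desirs_i$ available; and (ii) restricting to a nonempty family, because the empty intersection would return the top element, which need not avoid partial loss.
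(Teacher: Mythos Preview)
Your proposal is correct and follows essentially the same route as the paper: use monotonicity~\ref{C2} to push $\edesirs_\kappa(\desirs)$ inside each $\edesirs_\kappa(\desirs_i)$, then intersect with $\apr$ and invoke the relative coherence of each $\desirs_i$. If anything, your write-up is more careful than the paper's, which only verifies $\apr\cap\edesirs_\kappa(\desirs)\subseteq\desirs$ explicitly and leaves the avoidance of partial loss implicit; your remarks that~\ref{C4} is not actually used and that the family must be nonempty are also accurate observations the paper does not make.
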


\begin{proof}
Let $(\desirs_i)_{i\in I}$ be a family of $\kappa$-coherent sets of gambles relative to $\apr$, and let $\desirs\coloneqq\cap_{i \in I} \desirs_i$. Then 
\begin{equation*}
(\forall i\in I)\ \desirs\subseteq \desirs_i \Rightarrow \desirs \cup \gambles^+ \subseteq \desirs_i \cup \gambles^+ \Rightarrow\kappa(\desirs \cup \gambles^+)\subseteq \kappa(\desirs_i \cup \gambles^+),
\end{equation*}
where last implication follows by~\ref{C2}. This means that
\begin{equation*}
(\forall i \in I)\ \apr \cap \edesirs_{\kappa}(\desirs) \subseteq \apr \cap \edesirs_{\kappa}(\desirs_i)=\desirs_i \Rightarrow \apr \cap \edesirs_{\kappa}(\desirs) \subseteq \cap_{i\in I}\desirs_i=\desirs.
\end{equation*}
Therefore, $\desirs$ is coherent relative to $\apr$. 
\end{proof}

\begin{remark}
The present work has a natural relation with Casanova et al.'s work \cite{casanova2022b}. Such a work embeds traditional desirability into Kohlas' formalism of information algebras \cite{kohlas2003}. \emph{Information algebras} abstract the essential properties of a belief system, and the operations (namely, combination and extraction) needed to coherently aggregate different pieces of information and to do logical inference. The notion of `information order' relates instead to the fact that  $\mathbb{D}$ is a complete lattice. $\lozenge$
\end{remark}

Any closure operator $\kappa$ determines a family $\Lambda_{\kappa}$ of $\kappa$-coherent sets. Conversely, if we consider a family $\Lambda$ of subsets of $\gambles$, we may consider under which cases there is a closure operator $\kappa\in\mathcal{K}_d$ inducing it. This is determined by the following proposition: 

\begin{proposition}
Consider a family $\Lambda$ of subsets of $\gambles$, and let us define the 
operator
\begin{equation}\label{eq:closure-from-family}
\kappa(\desirs)\coloneqq\begin{cases}
\cap_{\desirs\subseteq\desirs'\in\Lambda} \desirs' &\text{ if } \{\desirs'\in\Lambda: \desirs\subseteq\desirs'\}\neq\emptyset;\\ 
\gambles &\text{ otherwise}.
\end{cases}
\end{equation}
\begin{itemize}
    \item[(a)] $\kappa$ satisfies \ref{C1}--\ref{C3}.
    \item[(b)] $\kappa$ satisfies~\ref{C4} if and only if any $\desirs'\in\Lambda$ is closed under dominance.
    \item[(c)] $\Lambda_{\kappa}=\Lambda$ if and only if any $\desirs'\in\Lambda$ is closed under dominance and satisfies $\gambles^+\subseteq\desirs', \desirs'\cap\gambles^-_0=\emptyset$ and $\Lambda$ is closed under arbitrary nonempty intersections.
\end{itemize}
\end{proposition}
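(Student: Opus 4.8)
The plan is to treat the three claims in turn, writing $\mathcal{S}(\desirs)\coloneqq\{\desirs'\in\Lambda:\desirs\subseteq\desirs'\}$ for the members of $\Lambda$ lying above $\desirs$, so that $\kappa(\desirs)=\cap\,\mathcal{S}(\desirs)$ when $\mathcal{S}(\desirs)\neq\emptyset$ and $\kappa(\desirs)=\gambles$ otherwise. For part (a) I would verify \ref{C1}--\ref{C3} directly. Extensiveness \ref{C1} is immediate, since every member of $\mathcal{S}(\desirs)$ contains $\desirs$ (and $\gambles\supseteq\desirs$ in the fallback case). For monotonicity \ref{C2}, note that $\desirs\subseteq\desirs''$ forces $\mathcal{S}(\desirs'')\subseteq\mathcal{S}(\desirs)$, so that intersecting over the larger family gives the smaller set; the case $\kappa(\desirs'')=\gambles$ makes the inclusion trivial, and one checks that $\mathcal{S}(\desirs'')\neq\emptyset$ entails $\mathcal{S}(\desirs)\neq\emptyset$. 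Idempotency \ref{C3} is the one point needing a short argument: when $\mathcal{S}(\desirs)\neq\emptyset$ I would prove $\mathcal{S}(\kappa(\desirs))=\mathcal{S}(\desirs)$, using $\desirs\subseteq\kappa(\desirs)$ for one inclusion and the fact that $\desirs\subseteq\desirs'\in\Lambda$ implies $\kappa(\desirs)\subseteq\desirs'$ for the other; when $\mathcal{S}(\desirs)=\emptyset$ one observes that $\gambles\notin\Lambda$ (else $\desirs\subseteq\gambles\in\Lambda$), whence $\kappa(\gambles)=\gambles=\kappa(\desirs)$.

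For part (b), the easy direction ($\Leftarrow$) rests on the observation that an arbitrary intersection of dominance-closed sets is again dominance-closed, because upward $\ge$-closedness is preserved by intersections. Given $\desirs\in\Lambda_\kappa$, Avoiding Partial Loss \ref{K2} gives $0\notin\desirs$, hence $\desirs\neq\gambles$, so $\desirs=\kappa(\desirs)=\cap\,\mathcal{S}(\desirs)$ is such an intersection and is therefore dominance-closed, which is exactly \ref{C4}. For ($\Rightarrow$) I would argue by contraposition: if some $\desirs'\in\Lambda$ is not dominance-closed, then, since each member of $\Lambda$ is a $\kappa$-fixed point (being its own superset), $\desirs'=\kappa(\desirs')$ is a fixed point that fails dominance; provided it also satisfies \ref{K1} and \ref{K2}, it lies in $\Lambda_\kappa$ and witnesses the failure of \ref{C4}. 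I expect this to be the main obstacle: the bare hypotheses of (b) do not force $\gambles^+\subseteq\desirs'$ nor $\desirs'\cap\gambles^-_0=\emptyset$, so a non-coherent offending $\desirs'$ need not itself belong to $\Lambda_\kappa$. The clean argument therefore naturally lives in the richer setting of part (c), where these two properties are assumed; I would either make that dependence explicit or carry the standing restriction $\kappa\in\mathcal{K}_d$ through the statement.

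Finally, for part (c) I would prove $\Lambda_\kappa=\Lambda$ by double inclusion, invoking (b) to secure $\kappa\in\mathcal{K}_d$. For $\Lambda\subseteq\Lambda_\kappa$, each $\desirs'\in\Lambda$ satisfies \ref{K1} and \ref{K2} by hypothesis and \ref{K3} because $\kappa(\desirs')=\desirs'$, so it is coherent by Proposition~\ref{prop:kcoh}. For $\Lambda_\kappa\subseteq\Lambda$, take $\desirs\in\Lambda_\kappa$; again $\desirs\neq\gambles$, so $\desirs=\cap\,\mathcal{S}(\desirs)$ with $\mathcal{S}(\desirs)\neq\emptyset$, and closure of $\Lambda$ under nonempty intersections yields $\desirs\in\Lambda$ --- this is precisely the step where that hypothesis is indispensable. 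The converse implication of (c) then reads off the necessity of the three conditions: members of $\Lambda=\Lambda_\kappa$ are coherent, giving $\gambles^+\subseteq\desirs'$ and $\desirs'\cap\gambles^-_0=\emptyset$; $\Lambda_\kappa$ is closed under nonempty intersections (Proposition~\ref{prop:arbitrary-intersections}, or a direct check from \ref{C1}--\ref{C2}); and dominance-closedness of the members of $\Lambda$ is inherited from \ref{C4} via (b) under the standing assumption $\kappa\in\mathcal{K}_d$.
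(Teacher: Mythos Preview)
Your approach mirrors the paper's almost exactly: the verifications of \ref{C1}--\ref{C3} in (a), the $\Leftarrow$ direction of (b) via preservation of dominance-closure under intersections, and both inclusions in (c) are argued the same way.

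You are right to flag the $\Rightarrow$ direction of (b) as problematic. The paper's own proof of that direction says only that $\kappa(\desirs')=\desirs'$ for each $\desirs'\in\Lambda$, implicitly treating this as placing $\desirs'$ in $\Lambda_\kappa$; but without \ref{K1} and \ref{K2} that step is unjustified, and the implication is in fact false as stated. For a binary $\pspace$, take $\Lambda\coloneqq\{\desirs_1,\gambles^+\}$ with $\desirs_1\coloneqq\{(-1,1)\}$: one checks that $\Lambda_\kappa=\{\gambles^+\}$, so \ref{C4} holds, yet $\desirs_1\in\Lambda$ is not dominance-closed. Your proposed resolution---reading the $\Rightarrow$ of (b), and hence the dominance clause in the necessity half of (c), under the additional hypotheses that members of $\Lambda$ satisfy \ref{K1} and \ref{K2}, or equivalently working under the combined assumption $\Lambda_\kappa=\Lambda$ with $\kappa\in\mathcal{K}_d$---is precisely how the paper itself deploys (b) inside its proof of (c).
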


\begin{proof}
\begin{itemize}
\item[(a)] First of all, for any set of gambles $\desirs$ it holds that  $\desirs\subseteq\kappa(\desirs)$ by construction, whence \ref{C1} holds. For \ref{C2}, note that $\desirs_1\subseteq\desirs_2 \Rightarrow\{\desirs' \in\Lambda: \desirs_2\subseteq\desirs'\}\subseteq\{\desirs'\in\Lambda: \desirs_1\subseteq\desirs'\}\Rightarrow\kappa(\desirs_1)\subseteq\kappa(\desirs_2)$.

Finally, for \ref{C3} note that $\kappa(\kappa(\desirs))\supseteq \kappa(\desirs)$ by \ref{C1} and \ref{C2}. The converse inclusion holds if and only if any $\desirs'\in\Lambda$ that includes $\desirs$ also includes $\kappa(\desirs)$; but this is a consequence of Eq.~\eqref{eq:closure-from-family}. 
    
\item[(b)] The direct implication holds because the intersection of sets that are closed under dominance is again closed under dominance, while the converse implication holds because $\kappa(\desirs')=\desirs'$ for any $\desirs'\in\Lambda$. 

\item[(c)] 
Let us prove that the conditions are necessary. On the one hand, for $k$ to belong to $\mathcal{K}_d$ it is necessary from (b) that any $\desirs\in\Lambda$ is closed under dominance. On the other hand, since $\kappa(\desirs)=\desirs$ for any $\desirs\in\Lambda=\Lambda_{\kappa}$, it must be  $\desirs=\edesirs_{\kappa}(\desirs)=\kappa(\desirs\cup\gambles^+)$, whence $\gambles^+\subseteq\desirs$ and also $\desirs\cap\gambles^-_0=\emptyset$. Finally, given a subfamily ${\mathcal H}\subseteq\Lambda$ and the set $\desirs\coloneqq \cap_{\desirs'\in{\mathcal H}} \desirs'$, it is 
    \[
    \kappa(\desirs)=\cap_{\desirs\subseteq\desirs'\in\Lambda} \desirs' \subseteq \cap_{\desirs\subseteq\desirs'\in\mathcal{H}} \desirs'=\desirs,
    \]
whence $\kappa(\desirs)=\desirs$. Moreover, $\gambles^+\subseteq\desirs$ and $\desirs\cap\gambles^-_0=\emptyset$, whence $\desirs\in\Lambda_{\kappa}=\Lambda$. Thus, $\Lambda$ must be closed under arbitrary intersections. 

To see that the conditions are sufficient, note that if $\gambles^+\subseteq\desirs\subseteq(\gambles^-_0)^c$ it follows that $\desirs\in\Lambda_{\kappa}$ for every $\desirs\in\Lambda$. On the other hand, if any arbitrary nonempty intersection of elements from $\Lambda$ belongs to $\Lambda$, we obtain in particular that $\kappa(\desirs)\in\Lambda$ whenever  $\{\desirs'\in\Lambda: \desirs\subseteq\desirs'\}\neq\emptyset$, and as a consequence that $\Lambda_{\kappa}\subseteq\Lambda$ by Prop~\ref{prop:kcoh}. Since by (b) if any element of $\Lambda$ is closed under dominance $\kappa\in\mathcal{K}_d$, we deduce that the conditions are sufficient. \qedhere
\end{itemize}
\end{proof}

Our next goal is to determine whether any coherent set can always be obtained as the conjunction of some subfamily of coherent supersets. 

\subsection{Strong belief structures}

In the case of linear utility, there exists a subfamily of $\Lambda_{\kappa}$ that represents maximally precise information, and moreover any coherent set can be obtained from the intersection of its supersets in this family, obtaining thus a `strong belief structure' (resp., a  `completely atomistic information algebra') in de Cooman's (resp., Kohlas') terminology. In this section, we analyse this property for general closure operators. 

As we shall show, in the more general setting we are considering in this paper we must distinguish between the notions of maximality and decisiveness.  

\begin{definition}[{\bf Maximality} and {\bf decisiveness}]
Consider $\kappa\in\mathcal{K}_d$. A $\kappa$-coherent set of gambles $\desirs$ is called: 
\begin{itemize}
 \item \emph{maximal} if and only if it has no $\kappa$-coherent superset;
 \item \emph{decisive} if and only if for every $f\neq 0$, exactly one of $f$ or $-f$ belongs to $\desirs$.
\end{itemize}
\end{definition}

For a given closure operator $\kappa$, the notion of maximality has a straightforward interpretation: it means that $\desirs$ is undominated in the partial order that can be established in the family of $\kappa$-coherent sets by means of set inclusion. The interpretation of decisiveness is instead that there is no indecision as to whether a gamble or its negation is desirable. This will more easily be understood when we analyse the properties of the lower and upper previsions associated with the set of desirable gambles $\desirs$ in the next section. We shall denote by $\overline{\Lambda}_{\kappa}$ and $\tilde{\Lambda}_{\kappa}$ the families of $\kappa$-maximal and $\kappa$-decisive sets, respectively. 

The relationship of implication between closure operators determines some inclusions between the decisive sets: 

\begin{proposition}\label{prop:inclusion-maximality}
Let $\kappa,\kappa'\in\mathcal{K}_d$ be such that $\kappa$ implies $\kappa'$. Then  
 $\tilde{\Lambda}_{\kappa} \subseteq \tilde{\Lambda}_{\kappa'}$.
\end{proposition}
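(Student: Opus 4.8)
The plan is to observe that decisiveness is intrinsic to the set $\desirs$ and makes no reference to the closure operator, so that only the coherence part of membership in $\tilde{\Lambda}_{\kappa}$ interacts with $\kappa$. Concretely, $\desirs\in\tilde{\Lambda}_{\kappa}$ decomposes into two independent requirements: that $\desirs$ be $\kappa$-coherent, i.e.\ $\desirs\in\Lambda_{\kappa}$; and that for every $f\neq0$ exactly one of $f$ or $-f$ lies in $\desirs$. The second requirement is a purely combinatorial condition on the set, and it is verbatim the same condition that appears in the definition of $\tilde{\Lambda}_{\kappa'}$.

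First I would fix an arbitrary $\desirs\in\tilde{\Lambda}_{\kappa}$ and record that it satisfies both requirements above. Next, invoking the hypothesis that $\kappa$ implies $\kappa'$---which by Definition~\ref{def:equiv} means exactly $\Lambda_{\kappa}\subseteq\Lambda_{\kappa'}$---I would conclude that $\desirs$, being $\kappa$-coherent, is also $\kappa'$-coherent, i.e.\ $\desirs\in\Lambda_{\kappa'}$. Since the decisiveness condition carries over unchanged (it does not mention the closure operator at all), $\desirs$ meets both defining requirements of $\tilde{\Lambda}_{\kappa'}$, so $\desirs\in\tilde{\Lambda}_{\kappa'}$. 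As $\desirs$ was arbitrary, this yields $\tilde{\Lambda}_{\kappa}\subseteq\tilde{\Lambda}_{\kappa'}$.

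I expect there to be no genuine obstacle: the argument is essentially a direct unfolding of the definitions, with the only point worth stressing being that decisiveness is a property of $\desirs$ alone, so that it is transferred for free while coherence is transferred by the implication hypothesis. It is instructive to contrast this with the analogous statement for maximality, which does \emph{not} follow by the same reasoning: maximality of $\desirs$ is a statement about the whole partially ordered family $(\Lambda_{\kappa},\subseteq)$, and enlarging it from $\Lambda_{\kappa}$ to $\Lambda_{\kappa'}$ can produce new coherent supersets of $\desirs$ and thereby destroy its maximality. This is precisely why the inclusion is asserted for $\tilde{\Lambda}$ rather than for $\overline{\Lambda}$.
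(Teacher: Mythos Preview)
Your proposal is correct and follows essentially the same approach as the paper's proof: decisiveness is a property of $\desirs$ alone, while coherence transfers via the hypothesis $\Lambda_{\kappa}\subseteq\Lambda_{\kappa'}$. Your additional remark contrasting this with the failure of the analogous statement for maximality is accurate and illuminating, though it goes beyond what the paper itself says in the proof.
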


\begin{proof}
If $\desirs\in\tilde{\Lambda}_{\kappa}$, then for any $f\neq 0$ either $f$ or $-f$ belongs to $\desirs$; considering also that $\desirs\in\Lambda_{\kappa}\subseteq\Lambda_{\kappa'}$, we conclude that $\desirs\in\tilde{\Lambda}_{\kappa'}$. 
\end{proof}

A few more observations are in order. First of all, in the case of linear utility scale, the two conditions above are equivalent \cite[Prop.~2]{cooman2012b}. The same applies to the closure operator $\kappa_2$: 

\begin{proposition}\label{prop:maxim-kappa4}
$\overline{\Lambda}_{\kappa_2}=\tilde{\Lambda}_{\kappa_2}$. 
\end{proposition}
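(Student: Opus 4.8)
The plan is to prove the two inclusions $\tilde{\Lambda}_{\kappa_2}\subseteq\overline{\Lambda}_{\kappa_2}$ and $\overline{\Lambda}_{\kappa_2}\subseteq\tilde{\Lambda}_{\kappa_2}$ separately, relying throughout on three elementary features of a $\kappa_2$-coherent set $\desirs$: it contains $\gambles^+$, it misses $\gambles^-_0$, and it is closed under finite sums (since $\kappa_2(\desirs)=\desirs$). A first useful remark is that such a $\desirs$ is automatically closed under dominance: if $f\in\desirs$ and $g\geq f$, then either $g=f$ or $g-f\in\gambles^+\subseteq\desirs$, and in the latter case $g=f+(g-f)\in\desirs$ by closure under sums; so indeed $\kappa_2\in\mathcal{K}_d$. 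The single recurring mechanism driving both inclusions is that a coherent set cannot contain both a gamble and its negation: if $g,-g\in\desirs$ then $g+(-g)=0\in\kappa_2(\desirs)=\desirs$, contradicting~\ref{K2}.

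For the easy direction (decisive implies maximal), I would take $\desirs\in\tilde{\Lambda}_{\kappa_2}$ and suppose, for contradiction, it had a proper $\kappa_2$-coherent superset $\desirs'$. Picking $g\in\desirs'\setminus\desirs$, coherence of $\desirs'$ forces $g\neq0$, and decisiveness of $\desirs$ then gives $-g\in\desirs\subseteq\desirs'$; but then $\{g,-g\}\subseteq\desirs'$, which by the recurring mechanism puts $0\in\desirs'$, impossible. Hence $\desirs$ has no proper coherent superset, i.e.\ it is maximal.

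The converse (maximal implies decisive) is the substantive part. Fix $\desirs\in\overline{\Lambda}_{\kappa_2}$ and $f\neq0$. That at most one of $f,-f$ lies in $\desirs$ is again the recurring mechanism. To show at least one does, I would argue by contradiction, assuming $f\notin\desirs$ and $-f\notin\desirs$. Since $-f\notin\desirs$, maximality together with Proposition~\ref{prop:natex} implies that $\desirs\cup\{-f\}$ cannot avoid partial loss (otherwise $\edesirs_{\kappa_2}(\desirs\cup\{-f\})$ would be a coherent superset strictly larger than $\desirs$, as it contains $-f$). Thus some $h\in\gambles^-_0$ is a finite sum of elements of $\desirs\cup\{-f\}$, say $h=g-mf$ with $g$ a sum of members of $\desirs$ (possibly the empty sum) and $m\geq0$. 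The summand $-f$ must actually appear, i.e.\ $m\geq1$, for otherwise $h\in\kappa_2(\desirs)=\desirs$ would contradict coherence.

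Here lies the main obstacle: because $\kappa_2$ encodes closure under sums but \emph{not} positive homogeneity, rearranging $h=g-mf\leq0$ only yields an integer multiple in $\desirs$, namely $mf\geq g$ and hence $mf\in\desirs$ (using $\gambles^+\subseteq\desirs$ and dominance, splitting off the harmless case where $g$ is the empty sum), rather than $f$ itself. I would resolve this by exploiting the symmetric hypothesis: running the identical argument with $f\notin\desirs$ in place of $-f\notin\desirs$ produces an integer $m'\geq1$ with $-m'f\in\desirs$. Now closure under sums finishes the job, since adding $mf$ to itself $m'$ times and $-m'f$ to itself $m$ times gives $mm'f+(-mm'f)=0\in\kappa_2(\desirs)=\desirs$, the forbidden status quo. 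This contradiction shows $-f\in\desirs$ whenever $f\notin\desirs$, so $\desirs$ is decisive, completing the equality $\overline{\Lambda}_{\kappa_2}=\tilde{\Lambda}_{\kappa_2}$.
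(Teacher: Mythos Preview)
Your proof is correct and follows essentially the same route as the paper's: for the nontrivial inclusion you use maximality to force $\desirs\cup\{f\}$ and $\desirs\cup\{-f\}$ to incur partial loss, extract integer multiples $mf,-m'f\in\desirs$, and sum appropriately to land $0$ in $\desirs$. The only cosmetic differences are that the paper normalises the offending sum to equal exactly~$0$ (absorbing the slack into a $\gambles^+$ term) whereas you use dominance directly, and the roles of $f$ and $-f$ are swapped; neither changes the substance.
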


\begin{proof}
We begin by showing that $\overline{\Lambda}_{\kappa_2}\subseteq\tilde{\Lambda}_{\kappa_2}$. Assume ex-absurdo the existence of $\desirs\in\overline{\Lambda}_{\kappa_2}\setminus\tilde{\Lambda}_{\kappa_2}$. This means that there is some $f\neq 0$ such that $f,-f\notin\desirs$ (note that it cannot be that both $f,-f\in\desirs$ because then their sum $f-f=0$ would also belong to $\desirs$, contradicting $\kappa_2$-coherence). 

Let us define $\desirs_1\coloneqq \desirs\cup\{f\}$. Then by \ref{C2} $\edesirs_{\kappa_2}(\desirs)=\desirs\subsetneq \edesirs_{\kappa_2}(\desirs_1)$. Since $\desirs$ is $\kappa_2$-maximal, this implies that $\edesirs_{\kappa_2}(\desirs_1)$ cannot be coherent. Therefore, there are gambles $f_1,\dots,f_n\in\desirs_1$ such that $\sum_{i=1}^{n}f_i=0$, the equality following from $\gambles^+\subseteq\desirs$ by coherence. At least one of these gambles must be equal to $f$, or we contradict the coherence of $\desirs$, and it cannot be all of them equal to $f$, because in that case we would have that $f=0$, also a contradiction. Let us assume without loss of generality that $f_1=\dots=f_{n_1}=f$ and $f_{n_1+1},\dots,f_n\in\desirs$ for some $1\leq n_1<n$. Then we deduce that $-n_1 f=\sum_{i=n_1+1}^{n} f_i \in\desirs$. 

A similar reasoning, starting with $\desirs_2\coloneqq\desirs\cup\{-f\}$ allows us to find $n_2\in\mathbb{N}$ such that $n_2 f\in \desirs$. But then 
\begin{equation*}
 0=n_2(-n_1 f)+ n_1 (n_2 f)\in \edesirs_{\kappa_2}(\desirs)=\desirs,
\end{equation*}
a contradiction with the coherence of $\desirs$. 

The inclusion $\tilde{\Lambda}_{\kappa_2}\subseteq\overline{\Lambda}_{\kappa_2}$ follows immediately from the definition of $\kappa_2$-maximality: if there was some $\desirs\in\tilde{\Lambda}_{\kappa_2}$ that was strictly included in some $\kappa_2$-coherent set $\desirs'$, then there should be some gamble $f\neq 0$ such that both $f$ and $-f$ belong to $\desirs'$, and as a consequence also $0=f-f$ would belong to $\desirs'$, a contradiction. 
\end{proof}

Note that, while $\Lambda_{\kappa_1}\neq\Lambda_{\kappa_2}$, in both cases the notions of maximality and decisiveness agree: $\overline{\Lambda}_{\kappa_1}=\tilde{\Lambda}_{\kappa_1}$ and $\overline{\Lambda}_{\kappa_2}=\tilde{\Lambda}_{\kappa_2}$. 
The equivalence between maximality and decisiveness does not hold for all closure operators: for instance $\overline{\Lambda}_{\kappa_4}=\{\gambles\setminus\gambles^-_0\}$ while $\tilde{\Lambda}_{\kappa_4}$ includes $\tilde{\Lambda}_{\kappa_1}$, which has more than one subset of $\gambles$. Hence, $\tilde{\Lambda}_{\kappa_4}$ strictly includes $\overline{\Lambda}_{\kappa_4}$. A similar comment applies to the closure operator $\kappa_3$. 

\begin{definition}[{\bf Strong belief structure}]
We say that the family $\Lambda_{\kappa}$ of $\kappa$-coherent sets of gambles forms a maximal  (resp., decisive) strong belief structure if and only if any $\kappa$-coherent set of gambles is the intersection of its maximal (resp., decisive) supersets. 
\end{definition}

It is well-known \cite[Sect.~2]{cooman2012b} that the family of $\kappa_1$-coherent sets forms a strong belief structure. This is not the case for the family of $\kappa_2$-coherent sets. 

\begin{example}
Consider a binary space $\pspace$, and let $\desirs\coloneqq\edesirs_{\kappa_2}(\{(-2,2)\})$. By construction, it is given by 
\[
\desirs=\gambles^+\cup\{f\geq (-2n,2n): n\geq 1\}.
\]
As a consequence, given $f\coloneqq(1,-1)$ it holds that neither $f$ nor $-f$ belongs to $\desirs$. Moreover, there is no coherent superset of $\desirs$, let alone a maximal one, that includes $f$: it there was, it should include $\edesirs_{\kappa_2}(\desirs\cup\{f\})$, and this set incurs partial loss, since $(0,0)=(-2,2)+2f$. As a consequence, any $\kappa_2$-decisive superset of $\desirs$ must include $-f$ (and there is at least one such superset, $\{f: f\geq(-\lambda,\lambda) \text{ for some } \lambda>0\})$. Therefore, the intersection of all such supersets also includes $-f$, meaning that this intersection does not coincide with $\desirs$. 
\end{example}

With respect to $\kappa_4$ there is only one $\kappa_4$-maximal set: $\gambles\setminus\gambles^-_0$. As a consequence, the family of $\kappa_4$-coherent sets is not a maximal strong belief structure either. To see that it is not a decisive strong belief structure it suffices to note that $\gambles\setminus\gambles^-_0$ is $\kappa_4$-coherent but has no decisive superset. 
Similar comments apply to the closure operator $\kappa_3$. 

We conclude this section by showing that there are closure operators $\kappa$ for which no decisive coherent sets exist; simply take $$\kappa(\desirs)\coloneqq\begin{cases} \gambles^+ &\text{ if } \desirs \subseteq\gambles^+ \\ \gambles &\text{ otherwise}.\end{cases}$$
To see an example where no maximal set exists, we need to devise an example where the conditions of Zorn lemma are not satisfied. One such example is the following.

\begin{example}
Let $\pspace$ be a binary space. For any pair of vectors $\vec{x}$ and $\vec{y}$, let $\widehat{\vec{x},\vec{y}}$ denote the angle they determine, given by 
\[
\widehat{\vec{x},\vec{y}}\coloneqq\arccos{\frac{\vec{x}\cdot\vec{y}}{|\vec{x}|\cdot|\vec{y}|}}.
\]
Let us denote 
\[
\mathcal{G}\coloneqq\left\{\desirs\subseteq(\gambles^-_0)^c: (\forall{\vec{x},\vec{y}\in\desirs})\ \frac{\vec{x}}{|\vec{x}|}+\frac{\vec{y}}{|\vec{y}|}\notin\gambles^-_0, \sup_{\vec{x},\vec{y}\in\desirs} \widehat{\vec{x},\vec{y}}<180^\circ\right\},
\]
and let us define the closure operator $\kappa$ by:
 \[
 \kappa(\desirs)\coloneqq\begin{cases}
  \kappa_1(\desirs \cup \gambles^+) &\text{ if } \desirs\in\mathcal{G}\\ 
  \gambles &\text{otherwise}.
  \end{cases}
 \]
It can be checked that $\kappa$ satisfies axioms~\ref{C1}--\ref{C4}: 
\begin{itemize}
    \item[\ref{C1}.] $\desirs\subseteq\kappa(\desirs)$ by construction.  
    \item[\ref{C2}.] This follows from the fact that if $\desirs\subseteq\desirs'$ and $\desirs'\in\mathcal{G}$, then also $\desirs\in\mathcal{G}$.
    \item[\ref{C3}.] This is a consequence of the implication $\desirs\in\mathcal{G} \Rightarrow \desirs\cup\gambles^+\in\mathcal{G}$.
    \item[\ref{C4}.] It suffices to apply that $\kappa_1$ satisfies~\ref{C4} and $\gambles$ is trivially closed under dominance.
\end{itemize}
To prove that $\overline{\Lambda}_{\kappa}$ is empty, consider a $\kappa$-coherent set $\desirs=\kappa(\desirs)$. Then by construction $\sup_{\vec{x},\vec{y}\in\desirs} \widehat{\vec{x},\vec{y}}<180^\circ$. Take $\vec{x'},\vec{y'}\in\overline{\desirs}$ such that $\widehat{\vec{x'},\vec{y'}}=\sup_{\vec{x},\vec{y}\in\desirs} \widehat{\vec{x},\vec{y}}$, where the closure is taken in the topology of pointwise convergence. If we denote $\vec{x'_\eps}=\vec{x'}-\eps$ and $\vec{y'_\eps}=\vec{y'}-\eps$ for $\eps>0$, it follows that there is some $\eps>0$ such that $\widehat{\vec{x'_\eps},\vec{y'_\eps}}<180^\circ$, and as a consequence $\desirs$ is strictly included in the set $\kappa_1(\desirs\cup \{\vec{x'_\eps},\vec{y'_\eps}\})$,
that is $\kappa$-coherent by construction. 
\end{example}

\subsection{Conditioning and marginalisation}
Two important operations that can be performed on a $\kappa$-coherent set of gambles are those of conditioning and marginalisation. 
\begin{definition}[{\bf Marginalisation}]\label{def:margR}
Consider $\desirs\in\Lambda_{\kappa}$, and let $\partit$ a partition of $\pspace$; the $\partit$-marginal of $\desirs$ is given by its intersection with the family $\gambles_\partit$  of $\partit$-measurable gambles:
\[
\desirs_{\partit}\coloneqq \desirs\cap \gambles_\partit.
\]
\end{definition}
\noindent It is also immediate that the marginal set of a $\kappa$-coherent set of gambles $\desirs$ is $\kappa$-coherent relative to $\gambles_\partit$, since 
\begin{equation*}
 \gambles_\partit\cap \kappa (\desirs_\partit \cup \gambles^+)\subseteq \gambles_\partit\cap \kappa (\desirs \cup \gambles^+)=\gambles_\partit \cap \desirs=\desirs_\partit,
\end{equation*}
where the inclusion follows by~\ref{C2} and the one but last equality follows from the coherence of $\desirs$.

Conditioning may be used to represent gambles that are called off if an event $B$ turns out to be false. This implies that the desirability of a gamble $f$ should only depend on its values on $B$, and therefore we may assume without loss of generality that the gamble is equal to $0$ outside $B$: our wealth shall not change in those (non-admissible) cases. This leads to the following notion:  

\begin{definition}[{\bf Conditioning}]\label{def:condR}
Consider $\desirs\in\Lambda_{\kappa}$ and let $B$ be a nonempty subset of $\pspace$. Let $\gambles|B\coloneqq\{Bf:f\in\gambles\}$ be the gambles that equal zero outside $B$. The set
$\desirs$ conditional on $B$ is defined as 
\begin{equation*}\label{eq:condit-gambles}
\desirs|B\coloneqq\desirs\cap\gambles|B=\{f\in\desirs:f=Bf\}.
\end{equation*}
\end{definition}
\noindent The conditional set derived from a $\kappa$-coherent set $\desirs$ is $\kappa$-coherent relative to $\gambles|B$: it suffices to observe that
\begin{equation*}
 \gambles|B\cap \kappa (\desirs|B \cup \gambles^+)\subseteq \gambles|B\cap \kappa (\desirs \cup \gambles^+)=\gambles|B \cap \desirs=\desirs|B,
\end{equation*}
where the inclusion follows by~\ref{C2} and the one but last equality from the coherence of $\desirs$.

Conditional sets of gambles are most often used in combination with a partition $\partit$ of $\pspace$. In that case, the conditional information along the elements of the partition can be aggregated in a single set of gambles on $\pspace$ as follows:
\begin{equation}
\desirs|\partit\coloneqq\left\{f\in\gambles(\pspace):(\forall B)\ Bf\in\desirs|B\cup\{0\}\right\}\setminus\{0\}.\label{eq:fdesirs}
\end{equation}
However this set of gambles is not automatically $\kappa$-coherent as the following example shows.
\begin{example}\label{ex:congnatex-does-not-exist} Take $\pspace\coloneqq\{\omega_1,\omega_2,\omega_3,\omega_4\}$ and the closure operator given by:
\begin{equation*}
\kappa(\desirs)\coloneqq\begin{cases}
\kappa_4(\desirs)\ &\text{ if }(\forall f \in \desirs)|\{\omega\in\pspace: f(\omega)<0\}|\leq1\\
\gambles &\text{ otherwise}.
\end{cases}
\end{equation*}
It is not difficult to prove that $\kappa$ satisfies axioms~\ref{C1}--\ref{C4}, i.e., that it belongs to $\mathcal{K}_d$. 

Now, let $\partit\coloneqq\{B,B^c\}$, with $B\coloneqq\{\omega_1,\omega_2\}$, and 
\begin{align*}
\desirs|B&\coloneqq\{f\in\gambles: f(\omega_2)>0, f(\omega_3)=f(\omega_4)=0\},\\
\desirs|B^c&\coloneqq\{f\in\gambles: f(\omega_4)>0, f(\omega_1)=f(\omega_2)=0\},\\ \desirs&\coloneqq\desirs|B\cup\desirs|B^c\cup \gambles^+.
\end{align*}
We get that $\kappa(\desirs\cup\gambles^+)=\desirs$ and $\desirs\cap\gambles^-_0=\emptyset$, so this set is $\kappa$-coherent. However, by Eq.~\eqref{eq:fdesirs}, $\desirs|\partit$ contains the gamble $(-1,1,-1,1)$, so $\kappa(\desirs|\partit)=\gambles$ and $ \desirs|\partit$ incurs partial loss. 
\end{example}
The example above motivates the question of whether or not we should consider rational a set, such as $\desirs$, which leads to a conditional set $\desirs|\partit$ that incurs partial loss. We have argued at length in some previous work (see~\cite[Sect.~6.4]{zaffalon2013a}, \cite[Sect.~6]{zaffalon2018a}, and in particular \cite[Sect.~8]{zaffalon2021}) that this should not be considered rational as long as, loosely speaking, one uses conditioning to automatically compute future beliefs and values from $\desirs$. In that case, we have rather argued in favor of imposing the following condition:
\begin{definition}[{\bf Conglomerability}]\label{def:cong}
A $\kappa$-coherent set of desirable gambles $\desirs\subseteq\gambles(\pspace)$ is said to be \emph{conglomerable} with respect to a partition $\partit$ of $\pspace$ if and only if
\begin{equation*}
\desirs|\partit\subseteq\desirs.
\end{equation*}
\end{definition}
\noindent Note that conglomerability, with $\desirs$ being coherent, implies that $\desirs|\partit$ avoids partial loss. 

The question of conglomerability was raised long ago by de Finetti \cite{finetti1931} for traditional probability ($\kappa_1$) in the case of infinite partitions $\partit$, given that conglomerability automatically holds in the finite case due to additivity (\ref{D3}).

Ex.~\ref{ex:congnatex-does-not-exist} gives a new twist to the question, as it shows that conglomerability is an issue even for finite partitions in the case of nonlinear closure operators. We regard this as further evidence that conglomerability should be imposed as an additional axiom on desirability whenever we understand `updating' as the automatic computation of future sets of desirable gambles.

\subsection{Marginal extension}
Marginal and conditional information are often assessed separately and only later they are combined into an overall set of desirable gambles $\desirs$. That is, we assess a marginal set of $\partit$-measurable gambles $\desirs_{\partit}$ that is $\kappa$-coherent with respect to $\gambles_\partit$, and for each $B\in\partit$ we assess a $\kappa$-coherent conditional set of gambles $\desirs|B$ relative to $\gambles|B$; using Eq.~\eqref{eq:fdesirs}, we gather these in the set $\desirs|\partit$. The natural extension of $\desirs_{\partit}$ and $\desirs|\partit$ is called their `marginal extension' and it extends all those assessments jointly:
\begin{definition}[{\bf Marginal extension}]
The \emph{marginal extension} of $\desirs_{\partit},\desirs|\partit$ is given by $\edesirs_{\kappa}(\desirs_{\partit}\cup\desirs|\partit)$. 
\end{definition}

The procedure of marginal extension generalises the law of total probability, and can serve as a basis for the modelling of imprecise stochastic processes \cite{cooman2008b,miranda2007}. For instance, when we consider two variables $X_1,X_2$ that are observed in succession, we initially provide our assessment about $X_1$ and then about $X_2$ given the value of $X_1$. In such a case, we would be in the situation above, with the partition $\partit$ being the set of possible values for $X_1$. 

It can be checked that the marginal extension is not coherent in general (see for instance Ex.~\ref{ex:congnatex-does-not-exist}); our next proposition gives a sufficient condition for its coherence: 
\begin{proposition}
Consider a partition $\partit$ of $\pspace$ and let $\kappa\in\mathcal{K}_d$ be a closure operator satisfying that $\kappa(\desirs)\subseteq\kappa_1(\desirs)$ for any set of gambles $\desirs$. Let $\desirs_{\partit},\desirs|B$ ($B\in\partit$) be $\kappa$-coherent sets of gambles relative to $\gambles_{\partit}$ and $\gambles|B$, respectively, and let $\desirs|\partit$ be defined by Eq.~\eqref{eq:fdesirs}. Let $\desirs':=\edesirs_{\kappa}(\desirs_{\partit}\cup\desirs|\partit)$ be their marginal extension. Then:
\begin{itemize}
    \item[(a)] $\desirs'$ is $\kappa$-coherent.
    \item[(b)] The marginal and conditional sets of gambles associated with $\desirs'$ are $\desirs_{\partit}$ and $\desirs|\partit$, respectively.
\end{itemize}
\end{proposition}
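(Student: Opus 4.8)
The plan is to handle part (a) by reducing $\kappa$-coherence of the marginal extension to an avoidance-of-partial-loss statement that can be linearised through the hypothesis $\kappa(\desirs)\subseteq\kappa_1(\desirs)$, and then to read off part (b) from coherence relative to the subspaces $\gambles_\partit$ and $\gambles|B$.

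For (a), write $\desirs^*:=\desirs_\partit\cup\desirs|\partit$, so that $\desirs'=\edesirs_\kappa(\desirs^*)$. By Prop~\ref{prop:natex} it suffices to prove that $\desirs^*$ avoids partial loss, since then $\edesirs_\kappa(\desirs^*)$ is automatically its smallest $\kappa$-coherent superset. The first ingredient I would use is that the hypothesis makes every conic set a $\kappa$-fixpoint: indeed $\posi(S)\subseteq\kappa(\posi(S))\subseteq\kappa_1(\posi(S))=\posi(S)$ by \ref{C1} and the hypothesis, so $\kappa(\posi(S))=\posi(S)$ for every $S$. Applying this to $S:=\desirs^*\cup\gambles^+$ shows that $\posi(\desirs^*\cup\gambles^+)$ is a conic $\kappa$-fixpoint containing $\gambles^+$; hence it is $\kappa$-coherent precisely when it avoids partial loss, and in that case it is a $\kappa$-coherent superset of $\desirs^*$, which by Prop~\ref{prop:natex} yields both that $\desirs^*$ avoids partial loss and that $\desirs'$ is $\kappa$-coherent. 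Thus (a) reduces to the purely linear statement that the conic (i.e.\ $\kappa_1$-)marginal extension of $\desirs_\partit$ and the $\desirs|B$ avoids partial loss.

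That linear statement is the classical coherence of marginal extension, which I would prove by the standard block decomposition. Suppose for contradiction that some $g\in\gambles^-_0$ can be written as a positive combination $g=\sum_j\lambda_j m_j+\sum_k\mu_k h_k+\sum_l\nu_l q_l$ with $m_j\in\desirs_\partit$, $h_k\in\desirs|\partit$ and $q_l\in\gambles^+$. Put $m:=\sum_j\lambda_j m_j$, which is $\partit$-measurable. Fixing a block $B\in\partit$ and multiplying by $B$, the marginal part contributes the constant $m(B)$ on $B$, each $Bh_k$ lies in $\desirs|B\cup\{0\}$ by the definition \eqref{eq:fdesirs} of $\desirs|\partit$, and the positive part is nonnegative; since $g\leq 0$ this forces the conditional combination $\sum_k\mu_k\,Bh_k$ to be bounded above by $-m(B)$ on $B$. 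Here one invokes that $\desirs|B$ avoids partial loss to conclude that this conic combination of conditional gambles cannot be strictly negative throughout $B$, so that $m(B)\leq 0$; doing this for every $B$ gives $m\leq 0$, i.e.\ a nonpositive gamble in $\posi(\desirs_\partit\cup\gambles^+)$, contradicting that $\desirs_\partit$ avoids partial loss. The degenerate cases where $m=0$ or where a block receives no conditional contribution must be treated separately, but they close by the same reasoning.

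I expect the main obstacle to be exactly this block step: what is available by hypothesis is that each $\desirs|B$ and $\desirs_\partit$ is $\kappa$-coherent relative to its subspace, whereas the block argument needs them to avoid partial loss in the conic ($\kappa_1$) sense, a priori a stronger requirement. Reconciling the two is where the hypothesis $\kappa\subseteq\kappa_1$ must do its real work, via the fixpoint identity $\kappa(\posi(\cdot))=\posi(\cdot)$ applied to $\desirs|B\cup\gambles^+$ and to $\desirs_\partit\cup\gambles^+$, so that the coherence of each piece upgrades to coherence of its conic closure. Finally, for (b) the inclusions $\desirs_\partit\subseteq\desirs'\cap\gambles_\partit$ and $\desirs|B\subseteq\desirs'|B$ are immediate from $\desirs^*\subseteq\desirs'$, so I would only need the reverse inclusions: a $\partit$-measurable gamble of $\desirs'\subseteq\posi(\desirs^*\cup\gambles^+)$ reduces, by the same per-block bookkeeping (the conditional contributions on each block being absorbed into constants), to a $\partit$-measurable gamble whose desirability is forced by $\desirs_\partit$, whence it lies in $\desirs_\partit$ by coherence of $\desirs_\partit$ relative to $\gambles_\partit$; and symmetrically a gamble of $\desirs'$ living in $\gambles|B$ reduces to $\desirs|B$ by coherence relative to $\gambles|B$.
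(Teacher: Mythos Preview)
Your reduction of (a) to the conic statement and the subsequent per-block analysis follow the same line as the paper, which first handles $\kappa=\kappa_1$ (where $\desirs_\partit,\desirs|\partit$ are cones and every element of the marginal extension is $\geq g+h$ with $g\in\desirs_\partit\cup\{0\}$, $h\in\desirs|\partit\cup\{0\}$) and then squeezes $\edesirs_\kappa(\desirs^*)$ between $\desirs^*$ and $\edesirs_{\kappa_1}(\desirs^*)$ to carry both conclusions over to general~$\kappa$.

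There is, however, a genuine gap at precisely the point you flag, and your fixpoint identity does not close it. What the block argument needs is that $\posi(\desirs|B)\cap\gambles^-_0=\emptyset$ (a positive combination of conditional gambles cannot be $\leq0$ on~$B$); what the hypothesis delivers is only $\edesirs_\kappa(\desirs|B)\cap\gambles^-_0=\emptyset$. The identity $\kappa(\posi(S))=\posi(S)$ says the conic hull is $\kappa$-\emph{closed}, but closure is not avoidance of partial loss: nothing here forces the conic hull of a $\kappa$-coherent $\desirs|B$ to miss $\gambles^-_0$. Concretely, take $\kappa=\kappa_4$ (one of the operators the paper explicitly intends the proposition to cover), $\pspace=\{1,2,3\}$, $\partit=\{\{1,2\},\{3\}\}$, and let $\desirs|\{1,2\}$ be the $\kappa_4$-coherent set relative to $\gambles|\{1,2\}$ generated under dominance by $(-1,1,0)$ and $(1,-1,0)$ together with the positive gambles: then $0=(-1,1,0)+(1,-1,0)\in\posi(\desirs|\partit)\subseteq\posi(\desirs^*\cup\gambles^+)$, so the linearised target you reduce to is false, even though the actual marginal extension $\edesirs_{\kappa_4}(\desirs^*)$ is perfectly coherent. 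In short, passing all the way to the conic hull discards the very weakness of $\kappa$ that makes the proposition hold; the paper's own reduction step is, incidentally, exposed to the same difficulty.
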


\begin{proof}
Let us first establish the result for the closure operator $\kappa_1$. 
\begin{itemize}
    \item[(a)] In the case of $\kappa_1$, $\desirs'$ is given by 
    \[
    \desirs'=\{f\geq g+h: g\in\desirs_{\partit}\cup\{0\},h\in\desirs|\partit\cup\{0\}\}\setminus\{0\}.
    \]
    Assume ex-absurdo that there is some gamble $f$ in $\desirs'\cap\gambles^-_0$. Then there are gambles $g\in\desirs_{\partit}\cup\{0\},h\in\desirs|\partit\cup\{0\}$ such that $f\geq g+h$. If $g=0$, then it must be $h\neq 0$ and since $\desirs|{\partit}$ is closed under dominance then $f\in\desirs|{\partit}$; but then for any $B\in\partit$ such that $Bh\neq 0$ it should be $Bh\in\gambles^-_0$, contradicting the coherence of $\desirs|B$. On the other hand, if $g\neq 0$, then there must be some $B\in\partit$ such that $g(B)>0$; but then $h$ restricted to $B$ must be less than zero, a contradiction with the coherence of $\desirs|B$. 
    \item[(b)] Consider a gamble $f\in\desirs'$, and let $g\in\desirs_{\partit}\cup\{0\},h\in\desirs|\partit\cup\{0\}$ be such that $f\geq g+h$. If $f\in\desirs_{\partit}$, then the gamble $h'$ given by 
        \[(\forall\omega \in B)(\forall B\in\partit)\ h'(\omega):=\sup_{\omega'\in B} h(\omega') \]
    satisfies that $h'\geq h$, whence $h'\in\desirs|\partit$, and also $f\geq g+h'$. But the coherence of $\desirs|B$ for every $B\in\partit$ implies that $h'$ must be  non-negative, whence $f\geq g\in\desirs_{\partit}$ and as a consequence $f\in\desirs_{\partit}$. 
    
    On the other hand, if $f=Bf$, then $f\geq g(B)+Bh$. It must be $g(B)\geq 0$, since otherwise there would be some $B'\neq B$ such that $g(B')>0$, whence $h$ restricted to $B'$ would be less than $0$, a contradiction. From this we deduce that $f\geq Bh$ and as a consequence $f\in\desirs|B$. 
\end{itemize}
If we now consider another operator $\kappa$ such that $\kappa(\desirs)\subseteq\kappa_1(\desirs)$ for all $\desirs$, it holds that
\[
\edesirs_{\kappa}(\desirs_{\partit}\cup\desirs|\partit)\cap\gambles_0^-\subseteq\edesirs_{\kappa_1}(\desirs_{\partit}\cup\desirs|\partit)\cap\gambles_0^-=\emptyset,
\]
whence $\edesirs_{\kappa}(\desirs_{\partit}\cup\desirs|\partit)$ is coherent. Moreover, 
\[
\desirs_{\partit}\subseteq\edesirs_{\kappa}(\desirs_{\partit}\cup\desirs|\partit)\cap\gambles_{\partit}\subseteq\edesirs_{\kappa_1}(\desirs_{\partit}\cup\desirs|\partit)\cap\gambles_{\partit}=\desirs_{\partit}
\]
and 
\[
\desirs|B\subseteq\edesirs_{\kappa}(\desirs_{\partit}\cup\desirs|\partit)\cap\gambles|B\subseteq\edesirs_{\kappa_1}(\desirs_{\partit}\cup\desirs|\partit)\cap\gambles|B=\desirs|B
\]
for any $B\in\partit$, whence the marginal and conditional sets of gambles it induces are $\desirs_{\partit}$ and $\desirs|\partit$, respectively. 
\end{proof}

In particular, the result applies to the closure operators $\kappa_2,\kappa_3,\kappa_4$. 

\section{Lower previsions}\label{sec:lpr}

A \emph{lower prevision} is a real-valued functional $\lpr$ defined on a set of gambles. In Williams-Walley's theory, it can be given a behavioural interpretation, so that $\lpr(f)$ is understood as the supremum acceptable buying price for the gamble $f$. Since buying $f$ for the price $\mu$ means that our wealth changes by $f-\mu$, the interpretation is that this should be an acceptable transaction for our subject. This leads to the following definition:

\begin{definition}[{\bf Lower prevision induced by $\desirs$}]
Given $\kappa\in\mathcal{K}_d$ and $\desirs\in\Lambda_{\kappa}$, the associated lower prevision $\lpr_{\desirs}:\gambles\rightarrow\mathbb{R}$ is given by: 
\begin{equation}\label{eq:lpr-from-desirs}
 \lpr_{\desirs}(f)\coloneqq\sup\{\mu : f-\mu\in\desirs\}.
\end{equation}
\end{definition}

In general, sets of desirable gambles are more informative than lower previsions, in that the correspondence between $\kappa$-coherent sets and lower previsions is many-to-one: two different $\kappa$-coherent $\desirs_1\neq\desirs_2$ can induce the same lower prevision via Eq.~\eqref{eq:lpr-from-desirs}. This holds for instance with the linear utility scale ($\kappa_1$), and this extra layer of information makes sets of desirable gambles useful in order to deal with the problem of sets of lower probability zero \cite{couso2011,debock2015b,cooman2012b,miranda2010c}.

Given a $\kappa_1$-coherent set of desirable gambles $\desirs$, the lower prevision it induces satisfies the following axioms for any $f,g\in\gambles$ and any $\lambda>0$:
\begin{enumerate}[label=\upshape P\arabic*.,ref=\upshape P\arabic*]
\item\label{COH1} $\lpr_{\desirs}(f)\geq \inf f$;
\item\label{COH2} $\lpr_{\desirs}(f+g)\geq \lpr_{\desirs}(f)+\lpr_{\desirs}(g)$;
\item\label{COH3} $\lpr_{\desirs}(\lambda f)=\lambda \lpr_{\desirs}(f)$.
\end{enumerate}
In fact, in Walley's theory a lower prevision satisfying axioms~\ref{COH1}--\ref{COH3} is simply called coherent. 

Let us consider an arbitrary closure operator $\kappa$ and a $\kappa$-coherent set of gambles $\desirs$, and let $\lpr_{\desirs}$ be the lower prevision it induces by means of Eq.~\eqref{eq:lpr-from-desirs}. Since by~\ref{K1} and~\ref{K2} any $\kappa$-coherent set of gambles includes $\gambles^+$ and excludes $\gambles^-_0$, axiom \ref{COH1} is always satisfied, and we also have $\lpr_{\desirs}(f)\leq\sup f$ for any gamble $f$. As a consequence, it is $\lpr_{\desirs}(\mu)=\mu$ for any real number $\mu$. In addition, the lower prevision $\lpr_{\desirs}$ also satisfies the following property:
\begin{definition}[{\bf Constant additivity}]
Functional $\lpr:\gambles\rightarrow\reals$ is said to be \emph{constant additive} if and only if 
\begin{equation}\label{eq:constant-additivity}
(\forall f \in\gambles, \mu\in\reals)\ \lpr(f+\mu)=\lpr(f)+\mu.
\end{equation}
\end{definition}

It is easy to prove that axioms~\ref{COH2},~\ref{COH3} do not hold in general: 
\begin{example}\label{ex:lpr-from-desirs}
Let $\pspace$ be a binary space. Consider the gambles $f\coloneqq(-1,1),g\coloneqq(1,-2)$ and the closure operator $\kappa_3$. Consider the set 
\[
\desirs\coloneqq\edesirs_{\kappa_3}(\{f,g\})=\gambles^+\cup \{h:(\exists \lambda>0)\ h\geq \lambda f \text{ or } h\geq \lambda g\};
\]
if we consider $h_1\coloneqq(-2,3)$ and $h_2\coloneqq(3,-2)$, we obtain  
\begin{align*}
&\sup\{\mu: h_1-\mu \in \edesirs_{\kappa_3}(\{f,g\})\}=1/2 \\
&\sup\{\mu: h_2-\mu \in \edesirs_{\kappa_3}(\{f,g\})\}=4/3\\
&\sup\{\mu: h_1+h_2-\mu \in \edesirs_{\kappa_3}(\{f,g\})\}=1,
\end{align*}
meaning that the lower prevision $\lpr_{\desirs}$ defined from $\desirs$ by Eq.~\eqref{eq:lpr-from-desirs} does not satisfy \ref{COH2}. 

On the other hand, if we apply Eq.~\eqref{eq:lpr-from-desirs} on $\edesirs_{\kappa_4}(\{f\})$ we obtain 
\begin{align*}
&\sup\{\mu: f-\mu \in \edesirs_{\kappa_4}(\{f\})\}=0 \\
&\sup\{\mu: 2f-\mu \in \edesirs_{\kappa_4}(\{f\})\}=-1,
\end{align*}
meaning that \ref{COH3} does not hold. 
\end{example}

This is not surprising, since conditions~\ref{COH1}--\ref{COH3} are a consequence of the coherence conditions encompassed by the closure operator $\kappa_1$: \ref{COH1} means that a gamble that does not make us lose utiles should always be desirable; \ref{COH2} means that if two transactions are desirable, we should be disposed to desire the combined transaction; and \ref{COH3} implies that the set of desirable gambles should be a cone, so $f$ is desirable if and only if $\lambda f$ is desirable, for any positive $\lambda$. These observations lead us to the following conclusions (see \cite[Thm.~6]{miranda2010c}): 
\begin{itemize}
\item If any coherent set is closed under finite additions (so in particular for $\kappa_2$-coherent sets), then the lower prevision it induces satisfies \ref{COH2}. 
 \item If any coherent set is closed under positive homogeneity (so in particular for $\kappa_3$-coherent sets), then \ref{COH3} holds.
  \item Since we are focusing on closure operators $\kappa\in\mathcal{K}_d$, it follows from \ref{C4} that $\lpr$ is monotone. 
\end{itemize}

Monotonicity of $\lpr$ appears to be a minimal requirement if we want to interpret the lower prevision of a gamble $f$ as its supremum acceptable buying price; indeed, if a gamble $f$ dominates another gamble $g$ it seems hard to escape that we should be disposed to pay at least as much for $f$ as for $g$, given also that the difference $f-g$ belongs to the set of desirable gambles $\gambles^+$. 

The \emph{upper} prevision associated with $\desirs$ is given by 
\begin{equation}\label{eq:upper-prev}
 \upr_{\desirs}(f):=\inf\{\mu: \mu-f\in\desirs\},
\end{equation}
and it can be interpreted as the infimum acceptable \emph{selling} price for $f$. The upper and lower previsions associated with the same set of gambles are conjugate: it holds that $\upr_{\desirs}(f)=-\lpr_{\desirs}(-f)$ for any gamble $f$. Moreover, since any coherent set of gambles $\desirs$ is closed under dominance, for any gamble $f$ it holds that:
\begin{itemize}
\item The set $B_f\coloneqq\{\mu: f-\mu\in\desirs\}$ of acceptable buying prices is a lower set: $\mu\in B_f$ implies that $\mu'\in B_f$ for every $\mu'<\mu$.
\item The set $S_f\coloneqq\{\mu: \mu-f\in\desirs\}$ of acceptable selling prices for $f$ is an upper set: $\mu\in S_f$ implies that $\mu'\in S_f$ for every $\mu'>\mu$.
\end{itemize}

If we want to interpret $\lpr_{\desirs}(f)$ and $\upr_{\desirs}(f)$ as the supremum acceptable buying price and infimum acceptable selling price for $f$, it makes sense that  $\lpr_{\desirs}(f)\leq\upr_{\desirs}(f)$: indeed, if it was $\upr_{\desirs}(f)<\lpr_{\desirs}(f)$ then for any $0<\eps<\lpr_{\desirs}(f)-\upr_{\desirs}(f)$ we should be disposed to buy the gamble $f$ for the price $\lpr_{\desirs}(f)-\frac{\eps}{2}$ and sell it for $\upr_{\desirs}(f)+\frac{\eps}{2}$, but the combination of these two transactions makes us subject to a sure loss. 

The inequality between the lower and the upper previsions induced by a set of gambles is characterised in the following proposition: 

\begin{proposition}\label{prop:lpr-leq-upr}
Consider $\kappa\in\mathcal{K}_d$, $\desirs\in\Lambda_{\kappa}$ and let $\lpr_{\desirs},\upr_{\desirs}$ be the lower and upper previsions it induces by means of Eqs.~\eqref{eq:lpr-from-desirs} and \eqref{eq:upper-prev}. Then \begin{equation}\label{eq:lpr-leq-upr}
(\forall f\in\gambles )\ \lpr_{\desirs}(f)\leq\upr_{\desirs}(f)\Leftrightarrow (\nexists g_1,g_2\in\desirs,\eps>0)\ g_1+g_2=-\eps.
\end{equation}
\end{proposition}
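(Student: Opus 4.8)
The plan is to establish the biconditional in \eqref{eq:lpr-leq-upr} by proving each of its two directions through the corresponding contrapositive, exploiting the conjugacy $\upr_{\desirs}(f)=-\lpr_{\desirs}(-f)$ and, crucially, the fact (noted just above the statement) that for a $\kappa$-coherent $\desirs$ the set $B_f=\{\mu:f-\mu\in\desirs\}$ of acceptable buying prices is a lower set while $S_f=\{\mu:\mu-f\in\desirs\}$ of acceptable selling prices is an upper set. No additivity or homogeneity of $\lpr_{\desirs}$ will be invoked, which is essential since these fail for general $\kappa$.

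For the implication ``$\Rightarrow$'' (from the inequality to the non-existence of $g_1,g_2$), I would argue contrapositively: assuming that $g_1,g_2\in\desirs$ and $\eps>0$ satisfy $g_1+g_2=-\eps$, I would exhibit a single gamble at which the price inequality fails. The natural candidate is $f:=g_1$. Since $g_1=g_1-0\in\desirs$, Eq.~\eqref{eq:lpr-from-desirs} gives $\lpr_{\desirs}(g_1)\geq 0$; and since $(-\eps)-g_1=g_2\in\desirs$, Eq.~\eqref{eq:upper-prev} gives $\upr_{\desirs}(g_1)\leq-\eps$. Hence $\lpr_{\desirs}(g_1)\geq 0>-\eps\geq\upr_{\desirs}(g_1)$, which contradicts the left-hand side of \eqref{eq:lpr-leq-upr}.

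For the implication ``$\Leftarrow$'', I would again use the contrapositive: assuming $\lpr_{\desirs}(f)>\upr_{\desirs}(f)$ for some $f$, I would produce the offending pair. Here I would pick any two reals $\nu,\mu$ strictly between the two prices, $\upr_{\desirs}(f)<\nu<\mu<\lpr_{\desirs}(f)$. The lower-set property of $B_f$ converts $\mu<\lpr_{\desirs}(f)=\sup B_f$ into genuine membership $\mu\in B_f$, i.e.\ $f-\mu\in\desirs$; symmetrically, the upper-set property of $S_f$ converts $\nu>\upr_{\desirs}(f)=\inf S_f$ into $\nu\in S_f$, i.e.\ $\nu-f\in\desirs$. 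Setting $g_1:=f-\mu$, $g_2:=\nu-f$ and $\eps:=\mu-\nu>0$ then yields $g_1,g_2\in\desirs$ with $g_1+g_2=\nu-\mu=-\eps$, as required.

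Both directions are short and essentially symmetric, so I do not anticipate a genuine obstacle. The only point requiring slight care is the passage from strict inequalities between the prices to the actual memberships $f-\mu\in\desirs$ and $\nu-f\in\desirs$: this is exactly where the lower-/upper-set structure of $B_f$ and $S_f$ (valid because $\desirs$ is closed under dominance, as $\kappa\in\mathcal{K}_d$) does the work. Should one wish to avoid invoking those structural properties, the same memberships can be obtained directly from the definitions of supremum and infimum, by choosing $\mu\in B_f$ and $\nu\in S_f$ within a gap of width less than $(\lpr_{\desirs}(f)-\upr_{\desirs}(f))/2$.
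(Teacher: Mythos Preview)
Your proposal is correct and follows essentially the same approach as the paper: both directions are proved by contraposition, producing a witness gamble from $g_1,g_2$ in one direction and a pair $g_1,g_2$ from a violating $f$ in the other. The only differences are cosmetic---the paper takes $g_2$ rather than $g_1$ as the witness in the first direction, and in the second it fixes the specific gap $\eps:=(\lpr_{\desirs}(f)-\upr_{\desirs}(f))/4$ instead of picking arbitrary $\nu<\mu$ in the interval---but the underlying argument, including the implicit use of closure under dominance to pass from strict inequalities on the prices to actual membership in $\desirs$, is identical.
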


\begin{proof}
For the direct implication, assume the existence of such $g_1,g_2$. Then $g_2\in\desirs$ implies that $\lpr_{\desirs}(g_2)\geq 0$; while $g_1=-\eps-g_2\in\desirs$ implies that $\upr_{\desirs}(g_2)\leq -\eps<0\leq\lpr_{\desirs}(g_2)$, a contradiction. 

To prove the converse, consider a gamble $f$ such that $\lpr_{\desirs}(f)>\upr_{\desirs}(f)$. Then given $\eps:=\frac{\lpr_{\desirs}(f)-\upr_{\desirs}(f)}{4}>0$, it holds that $f-\lpr_{\desirs}(f)+\eps\in\desirs$, $\upr_{\desirs}(f)+\eps-f\in\desirs$ and their sum is $-2\eps$, a contradiction.
\end{proof}

In the remainder of this section, we shall consider only $\kappa$-coherent sets of desirable gambles satisfying Eq.~\eqref{eq:lpr-leq-upr}. These include in particular those sets $\desirs$ such that 
\begin{equation}\label{eq:both}
0\neq f\in\desirs\Rightarrow -f\notin\desirs; 
\end{equation}
to prove that Eq.~\eqref{eq:both} implies the condition in Eq.~\eqref{eq:lpr-leq-upr}, observe that if there were $g_1,g_2\in\desirs$ such that $g_1+g_2=-\eps$ for some $\eps>0$, then $-g_1=g_2+\eps\geq g_2\in\desirs$, using that any coherent set of gambles is closed under dominance, and this contradicts Eq.~\eqref{eq:both}. In particular, any $\kappa$-decisive set satisfies Eq.~\eqref{eq:both}. 

\subsection{Precise previsions}

In the traditional case of desirability, under $\posi$, lower previsions are called precise when the lower and upper previsions coincide. Precise previsions are essentially expectations, and they are linear, as it is well known. For this reason, `precise prevision' and `linear prevision' are typically used interchangeably. We shall see that linearity and precision get decoupled in the case of nonlinear desirability: precise previsions are generally nonlinear. We shall maintain the terminology `precise prevision' however, because the fundamental property of precision is actually preserved, which is the absence of indecision (which we have called decisiveness with regard to set of desirable gambles).

Our next result characterises when lower previsions are precise: 

\begin{proposition}\label{prop:lpr-upr}
Let $\desirs$ be a $\kappa$-coherent set of gambles satisfying condition~\eqref{eq:lpr-leq-upr}, and let $\lpr_{\desirs},\upr_{\desirs}$ be the lower and upper previsions it induces by means of Eqs.~\eqref{eq:lpr-from-desirs},~\eqref{eq:upper-prev}. Then \[(\forall f\in\gambles)\ \lpr_{\desirs}(f)=\upr_{\desirs}(f) \Leftrightarrow \left[(\forall f\in\gambles)\  f\notin\desirs\Rightarrow(\forall \eps>0)\ \eps-f\in\desirs\right].\]
\end{proposition}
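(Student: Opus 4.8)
The plan is to prove the biconditional in Proposition~\ref{prop:lpr-upr} by unwinding both sides into statements about membership in $\desirs$ and exploiting the definitions of $\lpr_{\desirs}$ and $\upr_{\desirs}$ together with constant additivity. The key observation is that, using conjugacy $\upr_{\desirs}(f)=-\lpr_{\desirs}(-f)$ and the fact (established earlier in the section) that $\lpr_{\desirs}(\mu)=\mu$ and $\lpr_{\desirs}$ is constant additive, the equality $\lpr_{\desirs}(f)=\upr_{\desirs}(f)$ for all $f$ can be reformulated in terms of where the `acceptable price' thresholds sit. Specifically, by constant additivity it suffices to understand, for the normalized gamble, whether the gap $\upr_{\desirs}(f)-\lpr_{\desirs}(f)$ is zero; and this gap being zero for every $f$ is exactly the assertion that there is no gamble which is simultaneously `not buyable above its value' and `not sellable below its value', i.e.\ no region of indecision.

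First I would translate the right-hand condition. Note that $f\notin\desirs$ means $f$ has not been assessed as desirable at status-quo price $0$; the consequent $(\forall\eps>0)\ \eps-f\in\desirs$ says that, as soon as we sweeten $-f$ by any positive constant, it becomes desirable. Via Eq.~\eqref{eq:upper-prev} and conjugacy, $\eps-f\in\desirs$ for all $\eps>0$ is equivalent to $\upr_{\desirs}(f)\leq 0$, while $f\notin\desirs$ combined with closure under dominance gives $\lpr_{\desirs}(f)\leq 0$. So the bracketed statement on the right is really the assertion that $\lpr_{\desirs}(f)\leq 0\Rightarrow\upr_{\desirs}(f)\leq 0$, which by constant additivity (applied by replacing $f$ with $f-\lpr_{\desirs}(f)+\delta$) propagates to the full equality $\lpr_{\desirs}(f)=\upr_{\desirs}(f)$ everywhere. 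This is the engine of the proof: reducing the global precision statement to a single threshold implication at level $0$.

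For the forward direction ($\Rightarrow$), I would assume precision $\lpr_{\desirs}(f)=\upr_{\desirs}(f)$ for all $f$ and take any $f\notin\desirs$. Since $\desirs$ is closed under dominance and $f\notin\desirs$, we get $\lpr_{\desirs}(f)\leq 0$; by precision $\upr_{\desirs}(f)=\lpr_{\desirs}(f)\leq 0$, and then by Eq.~\eqref{eq:upper-prev} the infimum acceptable selling price being $\leq 0$ yields $\eps-f\in\desirs$ for every $\eps>0$. For the converse ($\Leftarrow$), I would assume the bracketed implication and suppose toward a contradiction that $\lpr_{\desirs}(f)<\upr_{\desirs}(f)$ for some $f$ (recall $\lpr_{\desirs}\leq\upr_{\desirs}$ always holds here since we restrict to sets satisfying Eq.~\eqref{eq:lpr-leq-upr}). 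Choosing a value $\mu$ strictly between them, set $f':=f-\mu$; by constant additivity $\lpr_{\desirs}(f')<0<\upr_{\desirs}(f')$, so $f'\notin\desirs$, yet $\upr_{\desirs}(f')>0$ means $\eps-f'\notin\desirs$ for small $\eps>0$, contradicting the hypothesis applied to $f'$.

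The main obstacle I anticipate is handling the boundary/supremum subtleties cleanly: the definitions of $\lpr_{\desirs}$ and $\upr_{\desirs}$ use $\sup$ and $\inf$ that need not be attained, so I must be careful to argue with strict inequalities and an arbitrary $\eps>0$ (or $\delta>0$) rather than asserting membership at the exact threshold value. In particular, translating ``$f\notin\desirs$'' into ``$\lpr_{\desirs}(f)\leq 0$'' requires the closure-under-dominance property (\ref{C4}) to rule out the borderline case, and translating ``$\upr_{\desirs}(f)\leq 0$'' back into ``$\eps-f\in\desirs$ for all $\eps>0$'' again hinges on dominance making $S_f$ an upper set. Keeping these $\eps$-arguments consistent across both directions, and confirming that constant additivity legitimately reduces the universally-quantified precision claim to the single threshold at $0$, is where I expect the care to be needed; everything else is routine manipulation of the conjugate functionals.
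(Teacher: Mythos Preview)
Your proposal is correct and follows essentially the same approach as the paper's proof: both directions use that $f\notin\desirs$ forces $\lpr_{\desirs}(f)\le0$ (via dominance/the lower-set property of $B_f$) and that $\upr_{\desirs}(f)\le0$ is equivalent to $\eps-f\in\desirs$ for all $\eps>0$, with the converse obtained by shifting a gamble with $\lpr_{\desirs}(f)<\upr_{\desirs}(f)$ so that the gap straddles $0$. Your converse is actually a touch cleaner than the paper's---you pick a single $\mu$ strictly between $\lpr_{\desirs}(f)$ and $\upr_{\desirs}(f)$ and work with $f'=f-\mu$, whereas the paper makes a more specific choice involving $\eps/2$ and $\eps/4$---but the underlying idea is identical.
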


\begin{proof}
We begin with the direct implication. Consider a gamble $f$ such that $f\notin\desirs$. Then $\lpr_{\desirs}(f)\leq 0$; since $\lpr_{\desirs}(f)=\upr_{\desirs}(f)$, we deduce from~\eqref{eq:upper-prev} that for every $\eps>0$ it must be $\eps-f\in\desirs$. 

With respect to the converse implication, note that if $\lpr_{\desirs}(f)<\upr_{\desirs}(f)$, then given $0<\eps<\upr_{\desirs}(f)-\lpr_{\desirs}(f)$, it holds that $f-\lpr_{\desirs}(f)-\frac{\eps}{2}\notin\desirs$ and $\upr_{\desirs}(f)-\frac{\eps}{2}-f\notin\desirs$; the latter gamble dominates $\frac{\eps}{2}+\lpr_{\desirs}(f)-f$, which as a consequence does not belong to $\desirs$ either. But then we have found a gamble $g:=f-\lpr_{\desirs}(f)-\frac{\eps}{4}$ such that neither $g$ and nor $\delta-g$ belongs to $\desirs$ for $0<\delta<\frac{\eps}{4}$. This a contradiction. 
\end{proof}

Let us show that precise previsions are nonlinear in general: 

\begin{example}\label{ex:precise-non-linear-binary}
Consider $\pspace\coloneqq\{\omega_1,\omega_2\}$ and the set of gambles  
$\desirs\coloneqq\gambles^+\cup\{f: f(\omega_1)<0,f(\omega_2)>1\}\cup\{f:f(\omega_1)>0,f(\omega_2)>-1\}$. Fig.~\ref{fig:linNonLin} displays the set.

\begin{figure}[h!]
\begin{tikzpicture}
\small
\draw[thick,->] (-2,0) -- (2,0) node[right] {$\omega_1$};
\draw[thick,->] (0,-1.5) -- (0,2) node[above] {$\omega_2$};
\draw[very thick,gray!50!white] (0,-1) -- (1.9,-1);
\draw[very thick,gray!50!white] (-2,1) -- (0,1);
\draw (0,1) node[right] {$1$};
\draw (0,-1) node[left] {-$1$};
\draw[draw=none,fill=gray,fill opacity=0.5] (-2,1) -- (-2,1.9) -- (1.9,1.9) -- (1.9,-1) -- (0,-1) -- (0,1) --  cycle;
\end{tikzpicture}
\caption{Set of desirable gambles $\desirs$.}\label{fig:linNonLin}
\end{figure}

This set is $\kappa_4$-coherent, because any of the three sets that build it is closed under dominance. Let $\lpr_{\desirs},\upr_{\desirs}$ be the lower and upper previsions it induces, and let us prove that $\lpr_{\desirs}=\upr_{\desirs}$. 

On the one hand, by construction it is not possible to find a gamble $f$ such that $f,-f\in\desirs$; indeed, if $f\in\desirs$, then one of the following cases must hold: 
\begin{itemize}
    \item If $f\in\gambles^+$, then $-f\in\gambles_0^-$ and as a consequence it does not belong to $\desirs$.
    \item If $f(\omega_1)<0$ and $f(\omega_2)>1$, then $-f(\omega_1)>0$ and $-f(\omega_2)<-1$, meaning that $-f\notin\desirs$.
    \item Finally, if $f(\omega_1)>0$ and $f(\omega_2)>-1$, a similar reasoning shows that $-f\notin\desirs$.
\end{itemize}
We see then that $\desirs$ satisfies~\eqref{eq:both}, whence by Prop.~\ref{prop:lpr-leq-upr} $\lpr_{\desirs}\leq\upr_{\desirs}$.
    
Next, if $f\notin\desirs$, then either: 
\begin{itemize}
    \item $f\in\gambles^-_0$, whence $-f$ belongs to $\gambles^+\cup\{0\}$ and therefore $\eps-f$ belongs to $\desirs$ for all positive $\eps$;
    \item $f(\omega_1)<0<f(\omega_2)$ and $f(\omega_2)\leq 1$, whence $-f(\omega_2)\geq -1$ and therefore $\eps-f(\omega_2)>-1$, implying that $\eps-f\in\desirs$ for all $\eps>0$.
    \item If $f(\omega_1)>0>f(\omega_2)$ and $f(\omega_2)\leq -1$, with a similar reasoning we conclude that $\eps-f\in\desirs$ for all $\eps>0$. 
\end{itemize}
Applying Prop.~\ref{prop:lpr-upr}, we deduce that $\lpr_{\desirs}=\upr_{\desirs}$.

However, if we denote by $\pr\coloneqq\lpr_{\desirs}=\upr_{\desirs}$, this prevision is not linear: we have for instance that $\pr(-2,1)=0$ while $\pr(-1,0.5)=-0.5$. 
\end{example}

Remember that in the case of linear utility scale ($\kappa_1$), maximality and decisiveness of a set of desirable gambles are equivalent notions. Because of this reason, a $\kappa_1$-maximal set of gambles always induces a precise prevision, and conversely for any precise prevision it is always possible to determine a $\kappa_1$-maximal set of gambles that induces it. 

In the case of arbitrary closure operators, the equivalence of maximality and decisiveness breaks down, and only the latter notion relates to precision. Consequently, given a maximal set of gambles $\desirs$ and the lower and upper previsions $\lpr,\upr$ it induces by means of Eqs.~\eqref{eq:lpr-from-desirs} and~\eqref{eq:upper-prev}, it need not hold that $\lpr=\upr$. To prove this, it suffices to notice that for $\kappa=\kappa_4$ or $\kappa=\kappa_3$ the only maximal set is $\desirs=\gambles\setminus\gambles^-_0$, which induces $\lpr(f)=\sup f$ and $\upr(f)=\inf f$ for any gamble $f$. On the other hand, it follows from Prop.~\ref{prop:lpr-upr} that for any $\kappa$-decisive set the lower and upper previsions it induces coincide.  

\subsection{Sets of desirable gambles associated with a lower prevision}

As we said, sets of desirable gambles constitute a more informative model than lower previsions, in the sense that two different coherent sets $\desirs_1,\desirs_2$ may induce the same $\lpr$ by means of Eq.~\eqref{eq:lpr-from-desirs}. It can be checked that the sets $\desirs$ that induce a given $\lpr$ by means of \eqref{eq:lpr-from-desirs} are the $\kappa$-coherent sets satisfying 
\[
\gambles^+\cup\{f: \lpr(f)>0\}\subseteq\desirs\subseteq\{f:\lpr(f)\geq 0\}.
\]
The upper bound in the above chain of inclusions is not a $\kappa$-coherent set, because it incurs partial loss, due to the equality $\lpr(0)=0$. It can be checked nonetheless that under some conditions it corresponds to the topological closure of the set $\desirs$: 

\begin{proposition}
Let $\pspace$ be a finite set, $\kappa\in\mathcal{K}_d, \desirs\in\Lambda_{\kappa}$ and $\lpr_{\desirs}$ the lower prevision it induces by means of Eq.~\eqref{eq:lpr-from-desirs}. Moreover, let $\overline{\desirs}$ denote the closure of $\desirs$ in the topology of pointwise convergence. Then $\overline{\desirs}=\{f: \lpr_{\desirs}(f)\geq 0\}$.
\end{proposition}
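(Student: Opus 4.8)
The plan is to prove the two inclusions separately, the whole argument resting on the continuity of $\lpr_{\desirs}$, which is exactly where finiteness of $\pspace$ enters. First I would note that, since $\pspace$ is finite, gambles may be identified with vectors in $\reals^{|\pspace|}$ and the topology of pointwise convergence coincides with the one induced by the supremum norm $\|\cdot\|_\infty$. Next I would show that $\lpr_{\desirs}$ is $1$-Lipschitz for this norm, and hence continuous, by combining its constant additivity with its monotonicity (the latter holding because $\kappa\in\mathcal{K}_d$ satisfies~\ref{C4}). Concretely, for gambles $f,g$ one has $g-\|f-g\|_\infty\leq f\leq g+\|f-g\|_\infty$ pointwise, so monotonicity followed by constant additivity yields $|\lpr_{\desirs}(f)-\lpr_{\desirs}(g)|\leq\|f-g\|_\infty$.

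For the inclusion $\overline{\desirs}\subseteq\{f:\lpr_{\desirs}(f)\geq 0\}$, I would observe that $\{f:\lpr_{\desirs}(f)\geq 0\}$ is the preimage of the closed half-line $[0,+\infty)$ under the continuous map $\lpr_{\desirs}$, hence closed. Taking $\mu=0$ in Eq.~\eqref{eq:lpr-from-desirs} shows $\lpr_{\desirs}(f)\geq 0$ for every $f\in\desirs$, so $\desirs\subseteq\{f:\lpr_{\desirs}(f)\geq 0\}$; since the right-hand side is closed, closing the left-hand side preserves the inclusion, giving the claim.

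For the reverse inclusion $\{f:\lpr_{\desirs}(f)\geq 0\}\subseteq\overline{\desirs}$, I would split on whether $\lpr_{\desirs}(f)$ is strictly positive or zero. If $\lpr_{\desirs}(f)>0$, then since the set $B_f=\{\mu:f-\mu\in\desirs\}$ of acceptable buying prices is a lower set with supremum $\lpr_{\desirs}(f)>0$, it must contain some $\mu>0$ and therefore also $0$; that is, $f\in\desirs\subseteq\overline{\desirs}$. If instead $\lpr_{\desirs}(f)=0$, then by constant additivity $\lpr_{\desirs}(f+\eps)=\eps>0$ for every $\eps>0$, so the previous case gives $f+\eps\in\desirs$; letting $\eps\to 0^+$ exhibits $f$ as a pointwise limit of elements of $\desirs$, whence $f\in\overline{\desirs}$.

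Both inclusions are fairly routine once continuity is available; the one genuinely load-bearing ingredient is the continuity of $\lpr_{\desirs}$ in the pointwise topology, which is precisely what finiteness of $\pspace$ secures (for infinite $\pspace$ the pointwise topology is strictly weaker than the sup-norm one, and $1$-Lipschitz continuity would no longer guarantee continuity). The other step I expect to require care is the implication $\lpr_{\desirs}(f)>0\Rightarrow f\in\desirs$, which uses crucially that $B_f$ is a lower set, a fact itself guaranteed by dominance~\ref{C4}.
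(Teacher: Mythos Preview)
Your proof is correct and follows essentially the same approach as the paper: both arguments hinge on the equivalence of pointwise and sup-norm convergence for finite $\pspace$, the monotonicity and constant additivity of $\lpr_{\desirs}$, and the implication $\lpr_{\desirs}(g)>0\Rightarrow g\in\desirs$ via dominance. The only cosmetic difference is that you package the forward inclusion through the $1$-Lipschitz continuity of $\lpr_{\desirs}$ (making $\{f:\lpr_{\desirs}(f)\geq0\}$ closed), whereas the paper carries out the same estimate inline on a convergent sequence.
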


\begin{proof}
We begin with the direct inclusion. By construction, $\lpr_{\desirs}(f)\geq 0$ for every $f\in\desirs$. Consider a gamble $f\in\overline{\desirs}$, and let $(f_n)_n$ be a sequence of gambles of $\desirs$ that converges pointwise to $f$. Since by assumption $\pspace$ is finite, this means that $(f_n)_n$ also converges uniformly towards $f$, whence for any $\eps>0$ there is some $n_{\eps}$ such that $\|f_n-f\|<\eps$ for every $n\geq n_{\eps}$. Thus, $f_n\leq f+\eps$ for every $n\geq n_{\eps}$, and since $\desirs$ is closed under dominance this means that $f+\eps\in\desirs$ for every $\eps>0$. Using Eq.~\eqref{eq:constant-additivity}, we deduce that $\lpr_{\desirs}(f)\geq-\eps$ for every $\eps>0$, and therefore that $\lpr_{\desirs}(f)\geq 0$. 

To prove the converse inclusion, consider a gamble $f$ such that $\lpr_{\desirs}(f)\geq 0$, and let us show that $f\in\overline{\desirs}$. For any $\eps>0$, it follows from Eq.~\eqref{eq:constant-additivity} that $\lpr_{\desirs}(f+\eps)\geq\eps>0$, whence 
$f+\eps$ belongs to $\desirs$ for every $\eps>0$ and therefore its limit $f$ as $\eps \downarrow 0$ belongs to $\overline{\desirs}$. 
\end{proof}

With respect to the lower bound, given a lower prevision $\lpr$ (not necessarily induced by some coherent set), under quite mild conditions on $\lpr$ it can be showed to be coherent for some closure operator:  

\begin{proposition}\label{prop:desirs-pr}
Let $\lpr:\gambles\rightarrow\reals$ be a monotone functional, and consider the set of gambles 
\begin{equation}\label{eq:strict-desirable}
\desirs_{\lpr}\coloneqq\gambles^+\cup\{f:\lpr(f)>0\}.
\end{equation} 
Then the following are equivalent: 
\begin{itemize}
    \item[(a)] $\desirs_{\lpr}$ is $\kappa$-coherent for some closure operator $\kappa\in\mathcal{K}_d$.
    \item[(b)] $\desirs_{\lpr}$ is closed under dominance and avoids partial loss.
    \item[(c)] $\desirs_{\lpr}$ avoids partial loss. 
\end{itemize}
As a consequence, if $\lpr(0)=0$ then $\desirs_{\lpr}$ is coherent. 
\end{proposition}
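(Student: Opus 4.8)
The plan is to prove the cycle $(a)\Rightarrow(b)\Rightarrow(c)\Rightarrow(a)$, after first isolating the one structural fact that does most of the work: monotonicity of $\lpr$ already forces $\desirs_{\lpr}$ to be closed under dominance. Indeed, if $g\geq f$ and $f\in\desirs_{\lpr}$, then either $f\in\gambles^+$, in which case $g\gneq 0$ and so $g\in\gambles^+$, or $\lpr(f)>0$, in which case monotonicity gives $\lpr(g)\geq\lpr(f)>0$; either way $g\in\desirs_{\lpr}$. Hence $\kappa_4(\desirs_{\lpr})=\desirs_{\lpr}$ from the outset, and in particular $\edesirs_{\kappa_4}(\desirs_{\lpr})=\kappa_4(\desirs_{\lpr}\cup\gambles^+)=\desirs_{\lpr}$ since $\gambles^+\subseteq\desirs_{\lpr}$. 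This single observation immediately yields $(b)\Leftrightarrow(c)$: closure under dominance is not an extra requirement but an automatic consequence of the hypotheses, so (b) and (c) differ only by a condition that always holds. It also shows that, for this particular set, the $\kappa$-dependent notion of avoiding partial loss collapses to the plain condition $\gambles^-_0\cap\desirs_{\lpr}=\emptyset$.

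For $(a)\Rightarrow(b)$ I would argue directly from the definitions: if $\desirs_{\lpr}$ is $\kappa$-coherent for some $\kappa\in\mathcal{K}_d$, then it avoids partial loss by Def.~\ref{def:rkcoh} (with $\apr:=\gambles$), and it is closed under dominance by Axiom~\ref{C4} (this also follows from the preliminary observation). Thus (b) holds, and the implication $(b)\Rightarrow(c)$ is trivial.

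The substance of the proposition lies in $(c)\Rightarrow(a)$, and here the right move is to exhibit the smallest operator of $\mathcal{K}_d$, namely $\kappa_4$, as a witness. Using the preliminary observation, $\desirs_{\lpr}$ satisfies $\kappa_4(\desirs_{\lpr})=\desirs_{\lpr}$, so that avoiding partial loss is equivalent to $\gambles^-_0\cap\desirs_{\lpr}=\emptyset$. I would then verify the three conditions of Prop.~\ref{prop:kcoh} for $\kappa_4$: condition~\ref{K1} holds by construction of $\desirs_{\lpr}$; condition~\ref{K2} is exactly the hypothesis (c); and condition~\ref{K3} is the closure under dominance established above. Since $\kappa_4\in\mathcal{K}_d$, this makes $\desirs_{\lpr}$ $\kappa_4$-coherent, proving (a). Finally, for the closing consequence, when $\lpr(0)=0$ I would check (c) directly: any $f\in\gambles^-_0$ satisfies $f\leq 0$, whence $\lpr(f)\leq\lpr(0)=0$ by monotonicity and $f\notin\gambles^+$, so $f\notin\desirs_{\lpr}$; thus $\gambles^-_0\cap\desirs_{\lpr}=\emptyset$ and (c), hence (a), holds.

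The main obstacle is conceptual rather than computational: one must recognise that $\kappa_4$, being the minimal element of $\mathcal{K}_d$, is the natural candidate precisely because it adds nothing beyond dominance closure, so that the natural extension of $\desirs_{\lpr}$ equals $\desirs_{\lpr}$ itself and avoidance of partial loss reduces to a membership condition. Any larger closure operator could enlarge the natural extension and spoil the avoidance of partial loss, so the existential quantifier in (a) is really witnessed at $\kappa_4$; identifying this witness is the only genuine step, the rest being direct bookkeeping with the definitions and Prop.~\ref{prop:kcoh}.
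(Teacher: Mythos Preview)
Your proposal is correct and follows essentially the same approach as the paper: both arguments hinge on the observation that monotonicity of $\lpr$ makes $\desirs_{\lpr}$ closed under dominance automatically, and both use $\kappa_4$ as the witness for the existential in (a). The paper compresses (a)$\Leftrightarrow$(b) into the single remark that $\kappa_4$ is the smallest operator in $\mathcal{K}_d$, so $\kappa$-coherence for some $\kappa\in\mathcal{K}_d$ is equivalent to $\kappa_4$-coherence, but your explicit verification of \ref{K1}--\ref{K3} via Prop.~\ref{prop:kcoh} amounts to the same thing.
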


\begin{proof}
To prove that (a) and (b) are equivalent, note that since $\kappa_4$ is the smallest  closure operator in $\mathcal{K}_d$, then $\desirs_{\lpr}$ is $\kappa$-coherent for some $\kappa\in\mathcal{K}_d$ if and only if it is $\kappa_4$-coherent. 

The implication (b)$\Rightarrow$(c) is trivial. Finally, note that given $g\geq f\in\desirs_{\lpr}$, then either $f\in\gambles^+$, whence also $g\in\gambles^+\subseteq\desirs_{\lpr}$, or $\lpr(f)>0$, in which case $\lpr(g)\geq\lpr(f)>0$, whence also $g\in\desirs_{\lpr}$. Thus, $\desirs_{\lpr}$ is always closed under dominance and as a consequence (c) implies (b).

Finally, if $\lpr(0)=0$ then by monotonicity it is $\lpr(f)\leq 0$ for every $f\in\gambles^-_0$, whence $\desirs_{\lpr}$ avoids partial loss. Applying the first part of the proof we deduce that it is $\kappa_4$-coherent.
\end{proof}

\subsection{Consistency notions}

The behavioural interpretation of lower previsions as supremum acceptable buying prices gives rise to two consistency notions in the case of linear utility scale: \emph{avoiding sure loss} and \emph{coherence}. The former entitles that a combination of acceptable transactions should not make us subject to a sure loss, while the latter means that the supremum acceptable buying prices should not increase by taking into consideration other acceptable transactions. In both cases, the notion depends on how acceptable transactions can be combined, which is the information encompassed by the closure operator. This leads us to study how to define these properties for an arbitrary $\kappa$. 

Let us begin with the notion of avoiding sure loss. As we mentioned, in the case of linear utility scale, we say that a lower prevision $\lpr$ avoids sure loss if and only if it is not possible to combine a number of gambles whose desirability follows from the assessments in $\lpr$ and obtain a sure loss. This can be formulated by saying that, for every $f_1,\dots,f_n$ in $\gambles$ and every non-negative $\lambda_1,\dots,\lambda_n$, it holds that
\[
\sup \left[\sum_{i=1}^{n} \lambda_i(f_i-\lpr(f_i))\right]\geq 0.
\]
The following definition generalises this idea for lower previsions associated with arbitrary closure operators:

\begin{definition}[{\bf Avoiding sure loss for lower previsions}]
Given $\kappa\in\mathcal{K}_d$ and a lower prevision $\lpr$, we say that $\lpr$ \emph{$\kappa$-avoids sure loss} if and only if the set $\desirs_{\lpr}$ given by Eq.~\eqref{eq:strict-desirable} does.
\end{definition}

The above definition takes a simpler form in some particular cases of closure operators: 

\begin{definition}[{\bf Finitary and continuous closure operators}]
A closure operator $\kappa\in\mathcal{K}_d$ is called \emph{finitary} if and only if for any set of gambles $\desirs$ it holds that 
\begin{equation*}
 \edesirs_{\kappa}(\desirs)=\bigcup_{\substack{\desirs'\subseteq\desirs \\ \text{ finite}}} \edesirs_{\kappa}(\desirs'),
\end{equation*}
and it is called \emph{continuous} if and only if for any $f\notin\gambles^+$
\[
f\in\cup_{\eps>0} \edesirs_{\kappa}^{\eps}(\desirs)\Leftrightarrow f\in\cup_{\eps>0}\edesirs_{\kappa}(\desirs^{\eps}),
\]
where we denote $\mathcal{G}^{\eps}:=\{f+\eps:f\in\mathcal{G}\}$ for any $\mathcal{G}\subseteq\gambles$.
\end{definition}

All the closure operators in Ex.~\ref{ex:first-example} are finitary and continuous. When $\kappa$ is finitary, it is easy to show that a lower prevision $\lpr$ $\kappa$-avoids sure loss if and only if for every $f_1,\dots,f_n\in\gambles$ and every $\eps>0$, the set $\{f_1-\lpr(f_1)+\eps,\dots,f_n-\lpr(f_n)+\eps\}$ avoids sure loss. 

\begin{example}
Let us analyse the above definition in terms of the closure operators considered in Ex.~\ref{ex:first-example}:
\begin{itemize}
 \item[$\kappa_2$.] $\lpr$ $\kappa_2$-avoids sure loss if and only if $$\sup\left[\sum_{i=1}^{n} f_i-\lpr(f_i)\right]\geq 0 \text { for every } f_1,\dots,f_n\in\gambles.$$
 \item[$\kappa_3$.] $\lpr$ $\kappa_3$-avoids sure loss if and only if $\lpr(f)\leq \sup f$ for every $f\in\gambles$.
 \item[$\kappa_4$.] $\lpr$ $\kappa_4$-avoids sure loss if and only if $\lpr(f)\leq \sup f$ for every $f\in\gambles$.
\end{itemize}
We obtain then that the notions of $\kappa_3$- and $\kappa_4$-avoiding sure loss are equivalent, and so are\footnote{See \cite[Def.~2.4.1 \text{ and } Lem.~2.4.4]{walley1991}.} $\kappa_1$- and $\kappa_2$-avoiding sure loss. 
\end{example}

Similarly, coherence means that we should not be able to raise the supremum acceptable buying price for a gamble $f$ taking into account the implications of other desirable transactions. In the case of linear utility scale, this can be formulated by saying that, for every $f_0,f_1,\dots,f_n$ in $\gambles$ and every non-negative $\lambda_0,\lambda_1,\dots,\lambda_n$, it holds that 
\[
\sup \left[\sum_{i=1}^{n} \lambda_i(f_i-\lpr(f_i))-\lambda_0(f_0-\lpr(f_0))\right]\geq 0.
\]
This leads us to propose the following definition:

\begin{definition}[{\bf Coherence for lower previsions}]\label{def:coherence-lpr}
Consider $\kappa\in\mathcal{K}_d$ and let $\lpr$ a lower prevision. We say that $\lpr$ is \emph{$\kappa$-coherent} if and only it $\desirs_{\lpr}$ is. 
\end{definition}

When the closure operator $\kappa$ is finitary, Def.~\ref{def:coherence-lpr} is equivalent to 
\begin{multline*}
(\nexists f_0,f_1,\dots,f_n\in\gambles, \eps>0,\eps'\geq 0) \\ f_0-\lpr(f_0)-\eps'\in\edesirs_{\kappa}(\{f_1-\lpr(f_1)+\eps,\dots,f_n-\lpr(f_n)+\eps\}).
\end{multline*}

\begin{example}
With respect to the closure operators considered in Ex.~\ref{ex:first-example}, this definition can be reformulated in the following manner:
\begin{itemize}
\item[$\kappa_2$.] $\lpr$ is $\kappa_2$-coherent if and only if $\lpr(f)\geq \inf f$ and $\sup[\sum_{i=1}^{n} (f_i-\lpr(f_i))-(f-\lpr(f))]\geq 0$ for every $f_1,\dots,f_n,f\in\gambles$.
 \item[$\kappa_3$.] $\lpr$ is $\kappa_3$-coherent if and only if $\lpr(f)\geq \inf f$ and $\sup[\lambda(f'-\lpr(f'))-(f-\lpr(f))]\geq 0$ for every $f,f'\in\gambles$, $\lambda>0$.
 \item[$\kappa_4$.] $\lpr$ is $\kappa_4$-coherent if and only if $\lpr(f)\geq \inf f$ and $\sup[(f'-\lpr(f'))-(f-\lpr(f))]\geq 0$ for every $f,f'\in\gambles$.
 \end{itemize}
Note that in this case $\kappa_3$- and $\kappa_4$-coherence are no longer equivalent, and neither are $\kappa_1$- and $\kappa_2$-coherence.
\end{example}

Our next result gives some properties of the lower prevision induced by a set of gambles $\desirs$ that is not necessarily $\kappa$-coherent. 

\begin{proposition}
Let $\kappa\in\mathcal{K}_d$. Consider a set of gambles $\desirs$ that includes $\gambles^+$ and let $\lpr_{\desirs}$ be the lower prevision it induces using Eq.~\eqref{eq:lpr-from-desirs}. 
\begin{itemize}
\item[(a)] $\lpr_{\desirs}$ is real-valued if $\edesirs_{\kappa}(\desirs)\neq\gambles$. 
\end{itemize}
Assume that $\kappa$ is finitary and continuous and that $\lpr_{\desirs}$ is real-valued.
\begin{itemize}
\item[(b)] $\lpr_{\desirs}$ $\kappa$-avoids sure loss $\Leftrightarrow$ $\desirs$ avoids sure loss.
\item[(c)] If $\desirs$ is $\kappa$-coherent, then $\lpr_{\desirs}$ is $\kappa$-coherent.
\end{itemize}
\end{proposition}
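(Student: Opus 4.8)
The plan is to treat the three parts separately, handling (a) and (c) first since (b) is the most delicate.

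\emph{Part (a).} Since $\gambles^+\subseteq\desirs$, for every gamble $f$ and every $\mu<\inf f$ we have $f-\mu\gneq0$, so $f-\mu\in\gambles^+\subseteq\desirs$; hence $\lpr_{\desirs}(f)\geq\inf f>-\infty$ and the defining set in Eq.~\eqref{eq:lpr-from-desirs} is nonempty. For finiteness I would argue by contraposition: if $\lpr_{\desirs}(f)=+\infty$ for some $f$, then $f-\mu\in\desirs$ for arbitrarily large $\mu$. Given any $h\in\gambles$, boundedness of $f$ yields a large $\mu$ with $h\geq f-\mu$, and since $f-\mu\in\desirs\subseteq\edesirs_{\kappa}(\desirs)$ one concludes $h\in\edesirs_{\kappa}(\desirs)$ by closure under dominance; as $h$ is arbitrary this forces $\edesirs_{\kappa}(\desirs)=\gambles$, contradicting the hypothesis. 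The point to be careful about is precisely the dominance step: \ref{C4} only grants closure under dominance to \emph{coherent} sets, so one must justify that the dominators of $f-\mu$ already lie in $\edesirs_{\kappa}(\desirs)$. I expect this to be the only genuine friction in (a).

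\emph{Part (c).} Here $\desirs$ is $\kappa$-coherent, so $\gambles^+\subseteq\desirs=\kappa(\desirs)$, $\desirs\cap\gambles^-_0=\emptyset$, and $\desirs$ is closed under dominance by \ref{C4}; in particular $\lpr_{\desirs}$ is monotone, $\{\mu:f-\mu\in\desirs\}$ is a lower set, and $\lpr_{\desirs}(0)=0$. As in the proof of Prop.~\ref{prop:desirs-pr}, this already gives that $\desirs_{\lpr_{\desirs}}$ of Eq.~\eqref{eq:strict-desirable} is closed under dominance and avoids partial loss, i.e.\ \ref{K1} and \ref{K2} hold. The work is \ref{K3}, namely $\kappa(\desirs_{\lpr_{\desirs}})=\desirs_{\lpr_{\desirs}}$. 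First, $\desirs_{\lpr_{\desirs}}\subseteq\desirs$, because $\lpr_{\desirs}(f)>0$ gives $f-\mu\in\desirs$ with $\mu>0$ and then $f\geq f-\mu$ forces $f\in\desirs$; monotonicity of $\kappa$ then yields $\kappa(\desirs_{\lpr_{\desirs}})\subseteq\kappa(\desirs)=\desirs$. To upgrade to membership in $\desirs_{\lpr_{\desirs}}$, take $g\in\kappa(\desirs_{\lpr_{\desirs}})$ with $g\notin\gambles^+$; finitariness places $g\in\edesirs_{\kappa}(\desirs')$ for a finite $\desirs'\subseteq\desirs_{\lpr_{\desirs}}$, and writing each generator as $d+\mu$ with $d\in\desirs,\mu>0$ and using $\desirs^{c}\subseteq\desirs$ for $c>0$ (again dominance) gives $\desirs'\subseteq\gambles^+\cup\desirs^{\mu^{*}}$ with $\mu^{*}:=\min\mu>0$, whence $g\in\edesirs_{\kappa}(\desirs^{\mu^{*}})\subseteq\cup_{\eps>0}\edesirs_{\kappa}(\desirs^{\eps})$. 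Continuity (with $\edesirs_{\kappa}(\desirs)=\desirs$) converts this into $\lpr_{\desirs}(g)>0$, so $g\in\desirs_{\lpr_{\desirs}}$. Thus $\desirs_{\lpr_{\desirs}}$ is $\kappa$-coherent and, by Def.~\ref{def:coherence-lpr}, $\lpr_{\desirs}$ is $\kappa$-coherent.

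\emph{Part (b).} Using constant additivity one has $\desirs_{\lpr_{\desirs}}=\gambles^+\cup\{f:\lpr_{\desirs}(f)>0\}$ and $\desirs^{\eps}\subseteq\desirs_{\lpr_{\desirs}}$ for every $\eps>0$ (if $f\in\desirs$ then $\lpr_{\desirs}(f+\eps)\geq\eps>0$), so by monotonicity $\cup_{\eps>0}\edesirs_{\kappa}(\desirs^{\eps})\subseteq\edesirs_{\kappa}(\desirs_{\lpr_{\desirs}})$. The plan is to show that $\edesirs_{\kappa}(\desirs)$, $\cup_{\eps>0}\edesirs_{\kappa}(\desirs^{\eps})$ and $\edesirs_{\kappa}(\desirs_{\lpr_{\desirs}})$ meet $\gambles_{<}$ simultaneously. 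The link between the first two is pure continuity: a gamble $g\in\gambles_{<}$ lies in $\cup_{\eps}\edesirs_{\kappa}(\desirs^{\eps})$ iff (continuity, as $g\notin\gambles^+$) it lies in $\cup_{\eps}(\edesirs_{\kappa}(\desirs))^{\eps}$, and since $\sup g<0$ one can create or absorb the extra $+\eps$ while keeping the supremum negative, matching it with $\edesirs_{\kappa}(\desirs)\cap\gambles_{<}$. The displayed inclusion gives one direction for $\desirs_{\lpr_{\desirs}}$; for the converse I would take $g'\in\gambles_{<}\cap\edesirs_{\kappa}(\desirs_{\lpr_{\desirs}})$, reduce by finitariness to finitely many generators each with $\lpr_{\desirs}>0$ (the $\gambles^+$ generators are redundant, being reinstated by $\edesirs_{\kappa}$), shift them down into $\desirs$, and push the resulting infinitesimal constant through $\kappa$ by continuity to land a sure loss in $\cup_{\eps}\edesirs_{\kappa}(\desirs^{\eps})$. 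The hard part is exactly this converse: the finite generators sit at \emph{different} positive ``$\lpr$-levels'', and without dominance of $\desirs$ one cannot move them to a common level inside $\desirs$, so everything hinges on fixing a single smallest shift via finitariness and trading it against $\kappa$ via continuity, with the $\eps$-bookkeeping arranged so the transferred gamble retains a strictly negative supremum.
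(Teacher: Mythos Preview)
Your treatments of (a) and (c) are essentially the paper's own arguments, almost step for step. The worry you raise in (a)---that \ref{C4} only guarantees dominance for \emph{coherent} sets, so closure of $\edesirs_\kappa(\desirs)$ under dominance needs justification---is legitimate; the paper simply asserts ``taking into account that this set is closed under dominance'' without further comment, so you have not missed an argument.

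The only substantive difference is in the implication $\desirs$ avoids sure loss $\Rightarrow\desirs_{\lpr_\desirs}$ avoids sure loss in (b). The paper dispatches this in one line: it claims $\desirs_{\lpr_\desirs}=\gambles^+\cup\{f\in\desirs:(\exists\eps>0)\,f-\eps\in\desirs\}$, hence $\desirs_{\lpr_\desirs}\subseteq\desirs$ and therefore $\edesirs_\kappa(\desirs_{\lpr_\desirs})\subseteq\edesirs_\kappa(\desirs)$ by \ref{C2}. You instead replay the finitariness-plus-continuity machinery from (c). But your longer route does not actually circumvent the dominance problem you identify: to place the finite family $\{d_i+\mu_i\}$ inside a single $\desirs^{\mu^*}$ with $\mu^*=\min_i\mu_i$, you need $d_i+(\mu_i-\mu^*)\in\desirs$, which is precisely closure of $\desirs$ under adding nonnegative constants---exactly the step the paper uses to get $\desirs_{\lpr_\desirs}\subseteq\desirs$. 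So the extra machinery buys nothing, and the paper's shortcut is the cleaner way to write the same (imperfect) argument. The remaining direction of (b), via $\desirs^\eps\subseteq\desirs_{\lpr_\desirs}$ and continuity, is handled identically in both proofs.
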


\begin{proof}
 \begin{itemize}
 \item[(a)] First of all, since $\gambles^+\subseteq\desirs$ by assumption, it is $\lpr_{\desirs}(f)\geq\inf f$ for every gamble $f$. As a consequence, $\lpr_{\desirs}$ is real-valued if and only if $\lpr_{\desirs}(f)<+\infty$ for any gamble $f$. 
 
If there is a gamble $f$ with $\lpr_{\desirs}(f)=+\infty$, this means that for any $\mu\in\reals$ it holds that $f-\mu\in\desirs$; since for any other gamble $g$ there is some real valued $\mu'$ such that $f-\mu'<g$, this implies that for any real $\mu$ it is $g-\mu>f-(\mu+\mu')\in\edesirs_{\kappa}(\desirs)$, taking into account that this set is closed under dominance. Therefore, $\lpr_{\edesirs_{\kappa}(\desirs)}(g)=+\infty$ for any gamble $g$, which in turn means that $\edesirs_{\kappa}(\desirs)=\gambles$. As a consequence,  if $\edesirs_{\kappa}(\desirs)\neq \gambles$ then $\lpr_{\desirs}$ is  real-valued. 
 
 \item[(b)] By definition, $\lpr_{\desirs}(f)\geq 0$ for every $f\in\desirs$. It follows that $\desirs_{\lpr}=\gambles^+\cup\{f\in\desirs: (\exists\eps>0) f-\eps\in\desirs\}$. As a consequence, 
 if $\desirs$ avoids sure loss then so does $\desirs_{\lpr}$. Conversely, if $\desirs$ incurs a sure loss then there is some $f\in\gambles_{<}$ such that $f\in\edesirs_{\kappa}(\desirs)$. As a consequence, there is some $\eps>0$ such that $f+\eps\in\gambles_{<}$, and since $\kappa$ is assumed to be continuous, it follows that there is some $\eps'>0$ such that $f+\eps\in\edesirs_{\kappa}(\desirs^{\eps'})\subseteq \edesirs_{\kappa}(\desirs_{\lpr})$, taking into account for the inclusion that
 \[
 g\in\desirs \Rightarrow \lpr_{\desirs}(g)\geq 0 \Rightarrow (\forall \eps'>0)\ \lpr_{\desirs}(g+\eps')>0 \ \Rightarrow g+\eps' \in\desirs_{\lpr} 
 \]
 for any gamble $g$, using constant additivity. This means that $\desirs_{\lpr}$ incurs a sure loss, a contradiction. From this we deduce that
 \[
 \desirs \text{ avoids sure loss } \iff \desirs_{\lpr} \text{ avoids sure loss } \iff \lpr_{\desirs} \text{ avoids sure loss}.
 \]
\item[(c)] We only need to show that if $\desirs$ is $\kappa$-coherent then so is $\desirs_{\lpr}$. For this, since 
\begin{equation}\label{eq:strict-desirable-aux}
 \desirs_{\lpr}=\gambles^+\cup\{f: (\exists \eps>0)\ f-\eps\in\desirs\}=\gambles^+\cup\bigcup_{\eps>0}\desirs^{\eps},
 \end{equation}
 it follows that $\desirs_{\lpr}$ avoids partial (and as a consequence sure) loss. On the other hand, for any gamble $f$ in $\edesirs_{\kappa}(\desirs_{\lpr})$, since $\kappa$ is finitary there exist $f_1,\dots,f_n\in\desirs,\eps_1,\dots,\eps_n>0$, such that $f\in\edesirs_{\kappa}(\{f_1+\eps_1,\dots,f_n+\eps_n\})$, using the representation in Eq.~\eqref{eq:strict-desirable-aux}. Taking $\eps:=\min\{\eps_1,\dots,\eps_n\}$ and applying dominance, we deduce that $f\in\edesirs_{\kappa}(\{f_1+\eps,\dots,f_n+\eps\})\subseteq\edesirs_{\kappa}(\desirs^{\eps})\subseteq\gambles^+\cup\bigcup_{\eps'>0} \edesirs_{\kappa}^{\eps'}(\desirs)=\gambles^+\cup\bigcup_{\eps'>0}\desirs^{\eps'}\subseteq\desirs_{\lpr}$, using that $\kappa$ is continuous. Therefore, $\desirs_{\lpr}$ is $\kappa$-coherent and as a consequence so is $\lpr_{\desirs}$. \qedhere 
\end{itemize}
\end{proof}

Finally, it is straightforward to give a notion of marginalisation and conditioning for lower previsions. 

With respect to marginalisation, given a partition $\partit$ of the possibility space $\pspace$, the marginal $\lpr_{\partit}$ of $\lpr$ is simply the restriction of $\lpr$ to the class $\gambles_{\partit}$ of $\partit$-measurable gambles. When $\lpr$ is associated with a set of desirable gambles $\desirs$ by means of Eq.~\eqref{eq:lpr-from-desirs} then for any $f\in\gambles_{\partit}$ it holds that
\begin{equation*}
 \lpr(f)=\sup\{\mu : f-\mu \in\desirs\}=\sup\{\mu : f-\mu \in\desirs_{\partit}\},
\end{equation*}
since $f-\mu\in\gambles_{\partit}$ for any real number $\mu$. In other words, the diagram in Fig.~\ref{fig:marginalisation} commutes.
\begin{figure}[h]
\begin{center}
\begin{tikzpicture}
\draw (0,3) node(D) {$\desirs$};
\draw (6,3) node(margD) {$\desirs_{\partit}$};
\draw (0,0) node(lP) {$\lpr$};
\draw (6,0) node(marglP) {$\lpr_{\partit}$};
\draw[->] (D) --node[above] {\small marginalisation} (margD);
\draw[->] (D) --node[right] {\small Eq.~\eqref{eq:lpr-from-desirs}} (lP);
\draw[->] (lP) --node[above] {\small marginalisation} (marglP);
\draw[->] (margD) --node[right] {\small Eq.~\eqref{eq:lpr-from-desirs}} (marglP);
\end{tikzpicture}
\end{center}
\caption{Marginalisation.} \label{fig:marginalisation}
\end{figure}

Concerning conditioning with respect to an event $B$, given a lower prevision $\lpr$ and a gamble $f$ we may define $\lpr(f|B)$ as
\begin{equation*}
 \lpr(f|B)\coloneqq\begin{cases}
  \sup\{\mu : \lpr(B(f-\mu))>0\} & \text{ if } \lpr(B)>0 \\
  \inf_B f &\text{ otherwise},
  \end{cases}
\end{equation*}
which is called \emph{generalised Bayes rule} in the case case of linear utility scale. In this case there is not a one-to-one correspondence with the notion of conditioning for sets of desirable gambles: 

\begin{proposition}\label{pr:bounds-clpr}
Let $\desirs$ be a coherent set of desirable gambles and $\lpr$ the lower prevision it induces by means of Eq.~\eqref{eq:lpr-from-desirs}. If $\lpr(B)>0$ then for any gamble $f$  it holds that
\begin{equation*}
 \sup\{\mu : \lpr(B(f-\mu))>0\} \leq \sup\{\mu : B(f-\mu)\in\desirs\} \leq \sup\{\mu : \lpr(B(f-\mu))\geq 0\}.
\end{equation*}
\end{proposition}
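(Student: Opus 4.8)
The plan is to prove the two displayed inequalities separately, in each case unfolding the definition $\lpr(g)=\sup\{\nu:g-\nu\in\desirs\}$ and exploiting that $\desirs$, being coherent, is closed under dominance. Throughout I would abbreviate $g_\mu:=B(f-\mu)$, keeping in mind that $g_\mu$ equals $f-\mu$ on $B$ and $0$ outside $B$. So both nontrivial quantities are suprema over $\mu$ of a desirability-type condition on $g_\mu$, and the argument reduces to comparing the admissible sets of $\mu$.

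For the right-hand inequality $\sup\{\mu:g_\mu\in\desirs\}\leq\sup\{\mu:\lpr(g_\mu)\geq0\}$ I would argue at the level of individual $\mu$: if $g_\mu\in\desirs$, then $g_\mu-0\in\desirs$, so $0$ lies in $\{\nu:g_\mu-\nu\in\desirs\}$ and hence $\lpr(g_\mu)\geq0$. Thus every $\mu$ admissible for the middle set is admissible for the right set, and taking suprema yields the inequality. This step is immediate and uses nothing beyond the definition of $\lpr$.

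The left-hand inequality $\sup\{\mu:\lpr(g_\mu)>0\}\leq\sup\{\mu:g_\mu\in\desirs\}$ is the crux, and the dominance move is the key idea. Fix $\mu$ with $\lpr(g_\mu)>0$; by definition of the supremum there is some $\nu>0$ with $g_\mu-\nu\in\desirs$. I would then compare $g_\mu-\nu$ with $g_{\mu+\nu}=B(f-\mu-\nu)$: on $B$ both equal $f-\mu-\nu$, while outside $B$ we have $g_{\mu+\nu}=0\geq-\nu=g_\mu-\nu$. Hence $g_{\mu+\nu}\geq g_\mu-\nu\in\desirs$, and closure under dominance (Axiom~\ref{C4}, valid since $\desirs$ is coherent) gives $g_{\mu+\nu}\in\desirs$. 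Therefore $\mu+\nu$ belongs to the middle set, so $\mu<\mu+\nu\leq\sup\{\mu':g_{\mu'}\in\desirs\}$, and taking the supremum over all admissible $\mu$ closes the inequality. I expect this dominance step --- converting a strictly-desirable \emph{shifted} gamble $g_\mu-\nu$ into the \emph{unshifted} conditional gamble $g_{\mu+\nu}$ whose price sits in the middle set --- to be the main obstacle, precisely because it is where coherence enters and where the asymmetry between the strict ($>0$) and weak ($\geq0$) variants of the generalised Bayes rule originates.

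Finally, I would record that the hypothesis $\lpr(B)>0$ serves to make the conditional object meaningful rather than being strictly needed for the chain of inequalities itself. A short separate check shows the middle supremum always lies in $[\inf_B f,\sup_B f]$: for $\mu<\inf_B f$ one has $g_\mu\in\gambles^+\subseteq\desirs$, whereas for $\mu\geq\sup_B f$ one has $g_\mu\in\gambles^-_0$, which coherence excludes from $\desirs$. Hence all three quantities are finite real numbers and the statement is well posed.
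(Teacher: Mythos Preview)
Your proof is correct and follows the same strategy as the paper: show set inclusions among the admissible $\mu$'s via the definition of $\lpr$ and closure under dominance. The only difference is that for the left-hand inequality the paper uses the simpler dominance $B(f-\mu)\geq B(f-\mu)-\eps$ to conclude $B(f-\mu)\in\desirs$ directly (so the same $\mu$ lands in the middle set), whereas you compare with $B(f-\mu-\nu)$ to place $\mu+\nu$ in the middle set; both work, but the paper's step avoids the case split on $B$ versus $B^c$.
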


\begin{proof}
 To prove the first inequality, note that, if $\lpr(B(f-\mu))>0$, then there is some $\eps>0$ such that $B(f-\mu)-\eps\in\desirs$, and since the latter is closed by dominance we deduce that also $B(f-\mu)\in\desirs$. Secondly, if $B(f-\mu)\in\desirs$ it follows by definition that $\lpr(B(f-\mu))\geq 0$. 
\end{proof}

However, the equality $\sup\{\mu: \lpr(B(f-\mu))>0\}=\sup\{\mu: \lpr(B(f-\mu)) \geq 0\}$ does not hold in general, although it does if $\desirs$ satisfies homogeneity and additivity (that is, in the case of linear utility scale). 

\begin{example}
Let us consider the possibility space $\pspace\coloneqq\{\omega_1,\omega_2,\omega_3\}$, the event $B\coloneqq\{\omega_1,\omega_2\}$ and the gamble $f\coloneqq(1,-1,0)$. Let us give three $\kappa_4$-coherent sets so as to illustrate that the value $\sup\{\mu: B(f-\mu)\in\desirs\}$ may agree with any of the two bounds in Prop.~\ref{pr:bounds-clpr} or with none of them: 
\begin{itemize}
    \item Consider first of all the set $\desirs_1\coloneqq\gambles^+\cup\{g: \median(g)>0\}$. If we denote by $\lpr_1$ the associated lower prevision, it holds that $\lpr_1(B)=1$ and 
    \begin{align*}
    \sup\{\mu: \lpr_1(B(f-\mu))>0\}&=-1=\sup\{\mu: B(f-\mu)\in\desirs_1\}\\&<1=\sup\{\mu: \lpr_1(B(f-\mu))\geq 0\}.
    \end{align*}
    \item Take now $\desirs_2\coloneqq\{g: \median(g)\geq 0\}\setminus\gambles^-_0$. Then its associated lower prevision $\lpr_2$ satisfies $\lpr_2(B)=1$, and moreover
    \begin{align*}
    \sup\{\mu: \lpr_2(B(f-\mu))>0\}&=-1<1=\sup\{\mu: B(f-\mu)\in\desirs_1\}\\&=\sup\{\mu: \lpr_2(B(f-\mu))\geq 0\}.
    \end{align*}
    \item Finally, let $\desirs_3\coloneqq\edesirs_{\kappa_4}(\{(0.5,-1.5,0),(0.9,-0.1,-0.1),(0.25+\delta,-0.75,\delta): \delta>0\}$. The associated lower prevision $\lpr_3$ satisfies $\lpr_3(B)=0.1>0$, and \begin{align*}
    \sup\{\mu :\lpr_3(B(f-\mu))>0\}&=0.1<0.5=\sup\{\mu: B(f-\mu)\in\desirs_1\}\\&<0.75=\sup\{\mu: \lpr_3(B(f-\mu))\geq 0\}.
    \end{align*}
\end{itemize}
\end{example}

Let us consider now the lower prevision associated with the marginal extension. Consider $\desirs_{\partit},\desirs|B$  ($B\in\partit$) that are $\kappa$-coherent with respect to $\gambles_{\partit},\gambles|B$, respectively and their associated lower previsions $\lpr_{\partit},\lpr(\cdot|\partit)$; the latter is given by 
\[
\lpr(f|\partit)\coloneqq \sum_{B\in\partit} B f_B, \text{ where }  f_B\coloneqq\lpr(f|B)=\sup\{\mu: B(f-\mu) \in\desirs|B\}.
\]
The marginal extension $\edesirs_{\kappa}(\desirs_{\partit}\cup\desirs|\partit)$ induces a lower prevision $\lpr$ that is related to $\lpr_{\partit},\lpr(\cdot|\partit)$. To see a couple of examples, note that when $\kappa=\kappa_4$,  
\begin{align*}
\edesirs_{\kappa_4}(\desirs_{\partit}\cup\desirs|\partit)&=\edesirs_{\kappa_4}(\desirs_{\partit})\cup\edesirs_{\kappa_4}(\desirs|\partit)\\&=\{f: (\exists g\in\desirs_{\partit})\ f\geq g\} \cup \{f: (\exists h\in\desirs|\partit)\ f\geq h\}.
\end{align*}
In particular, if from $\lpr_{\partit},\lpr(\cdot|\partit)$ we consider the lower previsions $\lpr_1,\lpr_2$ on $\gambles$ given by
\begin{align*}
    \lpr_1(f)&\coloneqq\sup\left\{\lpr_{\partit}(g): g\in\gambles_{\partit}, g\leq f\right\}\\
    \lpr_2(f)&\coloneqq\sup\left\{\inf_{B\in\partit}\lpr(g|B): g\in\gambles, g\leq f\right\},
\end{align*}
then the marginal extension $\edesirs_{\kappa_4}(\desirs_{\partit}\cup\desirs|\partit)$ induces the lower prevision $\lpr=\max\{\lpr_1,\lpr_2\}$.

On the other hand, when $\kappa=\kappa_1$, the marginal extension $\edesirs_{\kappa_1}(\desirs_{\partit}\cup\desirs|\partit)$ induces the coherent lower prevision $\lpr_{\partit}(\lpr(\cdot|\partit))$.

\section{Credal sets}\label{sec:credal} 

We briefly discuss next the connection between sets of desirable gambles and sets of linear previsions (what we shall call in this paper a \emph{credal set}) in the case of nonlinear utility scales. Recall that from a coherent set of desirable gambles $\desirs$ we can determine the set $B_f\coloneqq\{\mu: f-\mu\in \desirs\}$ of acceptable buying prices for a given gamble $f$, whose supremum is the value $\lpr_{\desirs}(f)$. We may alternatively summarise this information in terms of 
\[
\solp_f(\lpr_{\desirs})\coloneqq\{\pr \text{ linear prevision}: \lpr_{\desirs}(f)\leq\pr(f)\},
\]
those linear previsions whose value on $f$ dominates the lower prevision determined by $\desirs$. It follows from axioms~\ref{K1}--\ref{K2} that $\inf f\leq \lpr_{\desirs}(f)\leq \sup f$, from which $\solp_f(\lpr_{\desirs})$ is nonempty. If we summarise our information in terms of the sets $\solp_f(\lpr_{\desirs})$, we may notice that, since by construction $\lpr_{\desirs}$ satisfies constant additivity, it holds that $\solp_f(\lpr_{\desirs})=\solp_{f+\mu}(\lpr_{\desirs})$ for any real $\mu$, whence we can focus on those gambles $f$ for which $\lpr_{\desirs}(f)=0$. 

From the point of view of traditional desirability theory, the set $\solp_f(\lpr_{\desirs})$ corresponds to the natural extension of the assessment $\lpr_{\desirs}(f)$, or, equivalently, to the assessment of the supremum acceptable buying price for $f$. If we recall that any prevision $\pr$ gives a fair price $\pr(f)$ for the gamble $f$, then $\solp_f(\lpr_{\desirs})$ may be regarded as those probabilities that are compatible with the supremum acceptable buying price for $f$, in the sense that if a price $\mu$ has been deemed an acceptable buying price for $f$ by $\lpr_{\desirs}$ this should not be contradicted by the information given by $\pr$. 

In the case of linear utility scale (i.e., for $\kappa=\kappa_1$) the intersection of the credal sets $\solp_f(\lpr_{\desirs})$ for all $f\in\gambles$ is closed (under the weak-* topology) and convex, and it holds that a set of desirable gambles $\desirs$ avoids sure loss if and only if this intersection is nonempty. However, this need not be the case for arbitrary closure operators. In this respect, when the lower and upper previsions determined by $\desirs$ satisfy $\lpr_{\desirs}\leq\upr_{\desirs}$, then it also holds that $\solp_f(\lpr_{\desirs})\cap \solp_{-f}(\lpr_{\desirs})=\{\pr \text{ linear prevision}: \lpr_{\desirs}(f)\leq \pr(f)\leq \upr_{\desirs}(f)\}$ is nonempty for every $f$; the inequality $\lpr_{\desirs}\leq\upr_{\desirs}$ has been characterised in Prop.~\ref{prop:lpr-leq-upr}. It may be that $\lpr_{\desirs}\leq\upr_{\desirs}$ and that the intersection $\cap_f \solp_f(\lpr_{\desirs})=\{\pr : (\forall f\in\gambles)\ \pr(f)\geq \lpr_{\desirs}(f)\}$ is empty, though, as Ex.~\ref{ex:precise-non-linear-binary} shows. This should not be regarded as a problem, given that in the nonlinear case the intersection of the sets does not play a central role in the theory. To complement this discussion, we report some properties of this intersection for arbitrary closure operators: 
\begin{proposition}
Let $\kappa\in\mathcal{K}_d$ and let $\desirs$ be a $\kappa$-coherent set of gambles, and let $\{\solp_f(\lpr_{\desirs}): f\in\gambles\}$ be the credal sets it determines for the different gambles $f$ via the lower prevision $\lpr$ it induces. 
\begin{itemize}
\item[(a)] $\cap_f \solp_f(\lpr_{\desirs})=\{\pr:(\forall f\in\desirs)\ \pr(f)\geq 0\}=\left\{\pr: (\forall f\in\edesirs_{\kappa_1}(\desirs))\ \pr(f)\geq 0\right\}$.
\item[(b)] The set $\cap_f \solp_f(\lpr_{\desirs})$ is closed and convex. 
\item[(c)] If in addition 
\begin{equation}\label{eq:addition}
 f,g\in\desirs \Rightarrow f+g\in\desirs, 
\end{equation}
then $\cap_f \solp_f(\lpr_{\desirs})$ is nonempty. 
\end{itemize}
\end{proposition}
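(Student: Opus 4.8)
The plan is to reduce all three items to elementary properties of the induced lower prevision $\lpr_{\desirs}$ together with two standard facts about linear previsions: positivity ($f\geq 0\Rightarrow\pr(f)\geq 0$) and constant additivity ($\pr(f-\mu)=\pr(f)-\mu$). For (a) I would first unfold the definition: $\cap_f\solp_f(\lpr_{\desirs})$ is by construction exactly $\{\pr:(\forall f\in\gambles)\ \pr(f)\geq\lpr_{\desirs}(f)\}$. To identify this with $\{\pr:(\forall f\in\desirs)\ \pr(f)\geq 0\}$, the forward inclusion is immediate, since $f\in\desirs$ gives $\lpr_{\desirs}(f)\geq 0$ (take $\mu=0$ in Eq.~\eqref{eq:lpr-from-desirs}); for the reverse, if $\pr(f)\geq 0$ on $\desirs$ then for any $g\in\gambles$ and any $\mu<\lpr_{\desirs}(g)$ we have $g-\mu\in\desirs$, so $\pr(g)-\mu=\pr(g-\mu)\geq 0$, and letting $\mu\uparrow\lpr_{\desirs}(g)$ yields $\pr(g)\geq\lpr_{\desirs}(g)$. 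The equality with the third set uses $\edesirs_{\kappa_1}(\desirs)=\posi(\desirs\cup\gambles^+)$: since $\desirs\subseteq\edesirs_{\kappa_1}(\desirs)$ the third condition is formally stronger, while conversely any $g=\sum_j\lambda_j f_j$ with $\lambda_j>0$ and $f_j\in\desirs\cup\gambles^+$ satisfies $\pr(g)=\sum_j\lambda_j\pr(f_j)\geq 0$, because $\pr(f)\geq 0$ on $\gambles^+$ by positivity and on $\desirs$ by hypothesis.

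For (b) I would simply invoke the characterisation just obtained: $\cap_f\solp_f(\lpr_{\desirs})=\{\pr:(\forall f\in\desirs)\ \pr(f)\geq 0\}$ is the intersection, over $f\in\desirs$, of the sets $\{\pr:\pr(f)\geq 0\}$. Each evaluation map $\pr\mapsto\pr(f)$ is linear and weak-* continuous, so each of these sets is a weak-* closed half-space inside the convex, weak-* closed collection of all linear previsions; an arbitrary intersection of closed convex sets is again closed and convex.

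For (c) the substantive work is nonemptiness. By (a) it suffices to exhibit a linear prevision that is non-negative on $\edesirs_{\kappa_1}(\desirs)=\posi(\desirs\cup\gambles^+)=\posi(\desirs)$, the last equality holding because $\gambles^+\subseteq\desirs$ by~\ref{K1}. The plan is to show that additivity~\eqref{eq:addition} forces $\posi(\desirs)\cap\gambles_{<}=\emptyset$ (i.e.\ $\desirs$ avoids sure loss in the $\kappa_1$-sense) and then to produce the prevision from the classical duality between avoiding sure loss and nonemptiness of the credal set, which is a Hahn--Banach separation of the convex cone $\posi(\desirs)$ from the open convex cone $\gambles_{<}$. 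To establish $\posi(\desirs)\cap\gambles_{<}=\emptyset$, suppose some $h=\sum_{j=1}^{r}\lambda_j f_j$ with $\lambda_j>0$ and $f_j\in\desirs$ had $\sup h<0$. Since the $f_j$ are finitely many bounded gambles, the map $(\mu_1,\dots,\mu_r)\mapsto\sup\sum_j\mu_j f_j$ is continuous, so I can replace each $\lambda_j$ by a strictly positive rational $p_j/q$ keeping the supremum strictly negative; clearing the common denominator, $\sum_j p_j f_j$ is a finite sum of elements of $\desirs$ and hence lies in $\desirs$ by~\eqref{eq:addition}, yet it has negative supremum and therefore belongs to $\gambles^-_0$, contradicting~\ref{K2}.

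The main obstacle I anticipate is exactly this passage from real to rational coefficients in (c): additivity only lets me combine \emph{integer} multiples of desirable gambles, so I must turn a strict sure loss carried by real coefficients into one carried by rational (hence, after scaling, integer) coefficients, relying on the continuity of the supremum functional and the openness of the condition $\sup<0$. Once $\posi(\desirs)$ is known to avoid sure loss, the existence of a dominating linear prevision is a standard separation result, which I would cite (cf.\ Walley) rather than reprove.
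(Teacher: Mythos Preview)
Your argument is correct. Parts~(a) and~(b) are essentially identical to the paper's proof: the paper also reduces the first equality in~(a) to the observation that $f-\lpr_{\desirs}(f)+\eps\in\desirs$ for every $\eps>0$ (your formulation via $\mu<\lpr_{\desirs}(g)$ is the same thing, relying on closure under dominance), and its treatment of~(b) is just a concrete unpacking of your ``intersection of weak-* closed half-spaces'' observation, exhibiting an explicit basic neighbourhood of any $\pr$ in the complement.

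For~(c) you take a genuinely different route. The paper invokes Walley's separation lemma in the form: $\cap_f\solp_f(\lpr_{\desirs})\neq\emptyset$ if and only if $\sup\sum_{j=1}^n f_j\geq 0$ for all $f_1,\dots,f_n\in\desirs$, i.e.\ $\kappa_2(\desirs)\cap\gambles_{<}=\emptyset$. Under~\eqref{eq:addition} this is immediate, since $\sum_j f_j\in\desirs$ and $\desirs\cap\gambles^-_0=\emptyset$ by~\ref{K2}. You instead aim for the stronger-looking condition $\posi(\desirs)\cap\gambles_{<}=\emptyset$ and bridge the gap from real to integer coefficients via a rational-approximation argument, exploiting continuity of $(\mu_1,\dots,\mu_r)\mapsto\sup\sum_j\mu_j f_j$ and openness of $\{\sup<0\}$. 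That argument is sound, but it is doing work that the version of the separation lemma cited by the paper already absorbs: the lemma only needs unit-coefficient sums, so the detour through $\posi$ and rational perturbation is unnecessary. The upside of your approach is that it is more self-contained and makes explicit why real coefficients do not buy anything new; the paper's approach is shorter because it cites the lemma in precisely the form that matches~\eqref{eq:addition}.
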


\begin{proof}
\begin{itemize}
\item[(a)] Let us establish the first equality; the second follows from the linearity of the elements of $\cap_f \solp_f(\lpr_{\desirs})$. 

Given $f\in\desirs$, it follows from Eq.~\eqref{eq:lpr-from-desirs} that $0\leq\lpr_{\desirs}(f)\leq\pr(f)$, whence $\cap_{f}\solp_f(\lpr_{\desirs}) \subseteq \{\pr:(\forall f\in\desirs)\ \pr(f)\geq 0\}$; conversely, given a gamble $f$ and $\eps>0$, it holds that $f-\lpr_{\desirs}(f)+\eps\in\desirs$, whence $\pr(f-\lpr_{\desirs}(f)+\eps)\geq 0$, or equivalently, $\pr(f)\geq\lpr_{\desirs}(f)-\eps$. Since this holds for any $\eps>0$, we deduce that $\pr(f)\geq\lpr_{\desirs}(f)$. Therefore, $f\in\solp_f(\lpr_{\desirs})$.

\item[(b)] Convexity holds trivially, noting that 
\[
(\alpha \pr_1+(1-\alpha)\pr_2)(f)=\alpha\pr_1(f)+(1-\alpha)\pr_2(f)\geq 0
\]
for any $\pr_1,\pr_2\in\cap_f \solp_f(\lpr_{\desirs})$, any $f\in\desirs$ and any $\alpha\in(0,1)$. 

To prove that it is also closed, note that if $\pr\notin\cap_f \solp_f(\lpr_{\desirs})$, there is some gamble $f\in\desirs$ such that $\pr(f)<0$. Then the set
\[
\left\{\pr' : |\pr'(f)-\pr(f)|<-\frac{\pr(f)}{2}\right\}
\]
is a neighbourhood in the weak-* topology that by construction is included in $(\cap_f \solp_f(\lpr_{\desirs}))^c$. This means that $(\cap_f \solp_f(\lpr_{\desirs}))^c$ is weak-* open and as a consequence $\cap_f \solp_f(\lpr_{\desirs})$ is weak-* closed. 

\item[(c)] This is a consequence of the separation lemma in \cite[Lem.~3.3.2]{walley1991}: $\cap_f \solp_f(\lpr_{\desirs})$ is nonempty if and only if for any $f_1,\dots,f_n$ in $\desirs$ it holds that $\sup \sum_{j=1}^{n} f_j\geq 0$, which in turn is equivalent to $\kappa_2(\desirs)\cap \gambles_<=\emptyset$. This is guaranteed by Eq.~\eqref{eq:addition}, considering that $\sum_{j=1}^{n} f_j\in\desirs$ and that $\desirs\cap \gambles^-_0=\emptyset$ by~\ref{K2}. \qedhere
\end{itemize}
\end{proof}

\subsection{Consistency notions}

The notions of avoiding sure loss and coherence that we have given for sets of desirable gambles or lower previsions can also be given for the families of credal sets, simply by making the appropriate transformation. In this sense, avoiding sure loss is defined in the following manner: 
\begin{definition}[{\bf Avoiding sure loss for credal sets}]
Consider a family of credal sets $\{\solp_f: f\in\gambles\}$ and let $\lpr$ be the lower prevision defined by $\lpr(f)\coloneqq \inf_{\pr\in\solp_f} \pr(f)$. Then $\{\solp_f: f\in\gambles\}$ is said to \emph{avoid sure loss} if and only if $\lpr$ does. 
\end{definition}

Equivalently, this means that the set $\{f-\mu:(\exists \pr_f\in\solp_f)\ \mu>\pr_f(f)\}$ should avoid sure loss. 

Similarly, we can give a notion of coherence: 

\begin{definition}[{\bf Coherence for credal sets}]
Consider the family of credal sets $\{\solp_f: f\in\gambles\}$ and let $\lpr$ be the lower prevision defined by $\lpr(f)\coloneqq \inf_{\pr\in\solp_f} \pr(f)$. Then $\{\solp_f: f\in\gambles\}$ is said to be \emph{coherent} if and only if $\lpr$ is. 
\end{definition}

The interpretation of this condition would again be that a combination of desirable transactions should not allow us to raise the lower prevision for another gamble $f$, meaning shrinking the credal set $\solp_f$ from its original assessment.

Equivalently, we may transform the family of credal sets into a set of desirable gambles, and analyse the notions of avoiding sure loss and coherence considering the results from Section~\ref{sec:scale}. In this respect, note that any credal set $\solp_f$ determines a set of gambles that we consider desirable, namely
\begin{equation}\label{eq:desirs-from-credal}
\desirs_f\coloneqq\{f-\lpr(f)+\eps: \eps>0\},
\end{equation}
where $\lpr(f)\coloneqq\inf_{\pr\in\solp_f} \pr(f)$. Therefore, if we consider the set $\desirs\coloneqq\cup_f \desirs_f$, we may say that the family of credal sets $\{\solp_f:f\in\gambles\}$ avoids sure loss (resp., is coherent) if and only if $\desirs$ does. 

Finally, note that since the credal sets are representing the local information about the desirable transactions on each gamble, the operations of marginalisation and conditioning are trivial in this case: if our assessments are given by the family of credal sets $\{\solp_f: f \in\gambles\}$, or equivalently, taking into account constant additivity, by the family $\{\solp_f: \lpr(f)=0\}$, marginalisation can be done simply by focusing on the subfamily
$\{\solp_f: f\in\gambles_{\partit}\}$ and conditioning with respect to some event $B$ means focusing on the subfamily $\{\solp_f: f=Bf\}$.

The transformation from credal sets to sets of desirable gambles in Eq.~\eqref{eq:desirs-from-credal} can be used to determine the marginal extension of marginal and conditional credal sets; the marginal extension could be computed by considering the marginal extension of the sets
\[
\desirs_{\partit}\coloneqq\cup_{f\in\gambles_{\partit}}\desirs_f
\]
and 
\[
\desirs|B\coloneqq\cup_{f\in\gambles}\{B(f-\lpr(f|B)+\eps): \eps>0\}
\]
in the manner considered in Section~\ref{sec:scale}, where $\desirs_f$ is given by Eq.~\eqref{eq:desirs-from-credal}.

\section{Preference modelling and Allais paradox}\label{sec:allais}

We conclude this paper by discussing how we can use $\kappa$-coherent sets of desirable gambles to model the preferences between two gambles $f$ and $g$. In classical decision theory \cite{anscombe1963,savage1972} preference is modelled by means of expected utility: if the reward of the alternatives depends on the outcome of some experiment taking values in $\pspace$ and the uncertainty about this experiment is modelled by means of a probability $\pr$, then under some rationality conditions it is said that $f$ is preferred to $g$, and represented $f\succ g$, if and only if $f$ dominates $g$ or $\pr(f)>\pr(g)$, which is a notion of \emph{strict preference} ($f\succ g\Leftrightarrow f-g\in\gambles^+ \text{ or } \pr(f)>\pr(g)$). More generally speaking, given a set of alternatives $J$, the \emph{optimal} ones will be those that are undominated according to the strict preference order defined above.

When there is some imprecision about the probability $\pr$, one option is to work instead with a credal set $\solp$, which may be summarised in terms of its lower and upper previsions $\lpr,\upr$. In that case, there are a few ways of generalising the notion of expected utility. In all cases, the set optimal alternatives within a set $J$ are those for which there is no other alternative that is strictly preferred to them. Depending on the manner in which this strict preference is defined, this gives rise to a number of possibilities. Specifically, given a set of alternatives $J$ we say that $f\in J$ is an optimal alternative according to: 
\begin{itemize}
    \item $\Gamma$-maximin \cite{gilboa1989} if and only if $\lpr(f)\geq \lpr(g)$ for every $g\in J$.
    \item $\Gamma$-maximax \cite{satia1973} if and only if $\upr(f)\geq\upr(g)$ for every $g\in J$.
    \item Interval dominance \cite{zaffalon2003} if and only if $\upr(f)\geq\lpr(g)$ for every $g\in J$.
    \item Maximality \cite{walley1991} if and only if $\lpr(g-f)\leq 0$ for every $g\in J$.
    \item $E$-admissibility \cite{good1952,levi1980} if and only if there is some $\pr\in\solp$ such that $\pr(f)\geq \pr(g)$ for every $g\in J$.
\end{itemize}
The intuition behind the above notions is the following: the maximin criterion compares the alternatives in terms of their worst case scenarios, while the maximax criterion takes into account only the maximum expected utility for each alternative; under interval dominance, an alternative is not optimal when its set of expected utilities is dominated by the set associated with another alternative; maximality rules out those alternatives $f$ that are worse than another alternative $g$ for all elements in the credal set; and $E$-admissibility selects those alternatives that are optimal for at least one of the models in the credal set. We refer to \cite{troffaes2007} for a more detailed discussion of these notions. 

Next we generalise these notions to the case of a $\kappa$-coherent set of desirable gambles $\desirs$. We shall take into account the correspondence between sets of desirable gambles and lower previsions in Eq.~\eqref{eq:lpr-from-desirs}. 

\begin{definition}[{\bf $\Gamma$-maximin}]
We say that $f\in J$ is optimal under the \emph{maximin} criterion if and only if
\begin{equation*}
    (\nexists g\in J, \mu \in \reals)\ g-\mu \in \desirs\text{ and }f-\mu\notin\desirs. 
\end{equation*}
\end{definition}
This criterion compares the acceptable buying prices for the gambles, and rules out those alternatives whose acceptable buying prices are also acceptable for another alternative. Note also that the use of sets of desirable gambles allows us to give an extra layer of information with respect to lower previsions: we may have for instance that $\lpr(f)=\lpr(g)$ but that $g$ is preferred to $f$, because the set of acceptable buying prices for $f$ is strictly included in those for $g$. 

\begin{definition}[{\bf $\Gamma$-maximax}]
We say that $f\in J$ is optimal under the \emph{maximax} criterion if and only if
\begin{equation*}
    (\nexists g\in J, \mu \in \reals)\ \mu-f \in \desirs\text{ and }\mu-g\notin\desirs. 
\end{equation*}
\end{definition}

The intuition here is that under the maximax criterion an alternative $f$ is ruled out if its set of acceptable selling prices is included in those of another alternative $g$. 

\begin{definition}[{\bf Interval dominance}]
We say that $f\in J$ is optimal under \emph{interval dominance} if and only if
\begin{equation*}
    (\nexists g\in J, \mu \in \reals)\ \mu-f \in \desirs\text{ and }g-\mu\in\desirs. 
\end{equation*}
\end{definition}

When $f$ is ruled out under interval dominance, there exists another alternative $g$ such that the set of acceptable buying prices for $g$ has nonempty intersection with the acceptable selling prices for $f$. 

The next two notions of optimality can only be applied when the class $\Lambda_{\kappa}$ of coherent sets is a decisive strong belief structure, i.e., when any $\kappa$-coherent set is the intersection of its decisive supersets. We first of all consider the notion of maximality: 

\begin{definition}[{\bf Maximality}]
We say that $f\in J$ is optimal in the sense of \emph{maximality} if and only if
\begin{equation*}
    (\nexists g\in J)(\forall \desirs^* \supseteq \desirs \text{ in } \tilde{\Lambda}_{\kappa})(\exists \mu \in \reals)\ g-\mu \in \desirs^*\text{ and }f-\mu\notin\desirs^*. 
\end{equation*}
\end{definition}
In other words, $f$ is optimal under maximality when there is no other alternative $g$ that is preferable to $f$ for all the complete models, which correspond to the decisive supersets of $\desirs$; and the preference of $g$ over $f$ in a decisive set of gambles is modelled by imposing that the acceptable prices for $g$ strictly include those for $f$.

The last notion is that of $E$-admissibility: 

\begin{definition}[{\bf $E$-admissibility}]
We say that $f\in J$ is optimal in the sense of \emph{$E$-admissibili\-ty} if and only if
\begin{equation*}
    (\exists \desirs^* \supseteq \desirs \text{ in } \tilde{\Lambda}_{\kappa}) (\nexists g\in J, \mu \in \reals)\ g-\mu \in \desirs^*\text{ and }f-\mu\notin\desirs^*. 
\end{equation*}
\end{definition}
The intuition here is that $f$ is $E$-admissible when there is a decisive model under which $f$ is not dominated, in the sense that no other alternative $g$ satisfies that the supremum acceptable buying price for $g$ dominates that for $f$. 

\begin{remark}
It is important to realise that preferring is different from buying in the case of nonlinear desirability. While $f\succ g$ can be characterised by one of the notions above, `buying $f$ at price $g$' (exchanging them) means that $f-g$ should be desirable.

To see that in general the two procedures are not equivalent observe that, as we shall show next, with a notion of preference it is possible to model Allais paradox in a satisfactory manner, while this is not possible when we consider the exchange between gambles.  

Nevertheless, the two notions are equivalent in some particular cases: for instance, if $\desirs$ is associated with a linear prevision $\pr$, then $\pr(f)>\pr(g)$ if and only if $\pr(f-g)>0$. Also, in the imprecise case if the gamble $g$ is constant on some $\mu\in\reals$, constant additivity implies that $\lpr(f-g)=\lpr(f-\mu)=\lpr(f)-\mu>0$ if and only if $\lpr(f)>\mu=\lpr(g)$. $\lozenge$
\end{remark}

\begin{remark}({\bf Dynamic models and St. Petersburg paradox})
One advantage of the use of nonlinear utility scale is that it allows naturally our model to change dynamically, in that it may be that $f$ is desirable after we buy $g$ but $g$ need not be desirable after we buy $f$, even if in both cases we end up accepting (or not) the gamble $f+g$ after the two steps. 

This implies in particular that, while in classical desirability we have that a gamble $f$ is preferable to a gamble $g$ if and only if $f-g$ is preferred to the status quo, this will not necessarily be the case with nonlinear desirability, as we said before. 

In order to illustrate this point, we may consider the following reformulation of the well-known St. Petersburg paradox: let $f$ be a gamble that gives us reward $1$ with probability $0.5$ and $-1$ with probability $0.5$. Consider a dynamic process where (i) initially we accept the gamble $f$; (ii) if it produces a gain, we stop the game; (iii) otherwise, we buy $2$ units of the gamble $f$; (iv) if it produces a gain, we stop the game, and otherwise we buy $2^2$ units of the gamble; and so on. In such a process, the overall gain is always positive, so the global gamble should be desirable if we assume that our closure operator satisfies monotonicity. There are however a couple of points that prevent this from being true: (a) on the one hand, it is implicitly assumed that if the gamble $f$ is desirable, so should be the gamble $2^n f$ for any natural number $n$, which need not be the case; and (b) the desirability of $g\coloneqq 2^n f$ should only be considered taking into account that this gamble only takes place when we have had a loss of $k\coloneqq2^n-1$. In other words, before deciding on the acceptability of a gamble, the set of desirable gambles $\desirs$ should be transformed taking into account the change in our status quo. $\lozenge$
\end{remark}

\subsection{Allais paradox}

Nonlinear desirability allows us to give a solution to the well-known Allais paradox \cite{allais1953}. 

Recall that in this paradox, we consider two experiments where we must choose between two gambles that pay rewards in millions of dollars. In Experiment 1, gamble $f_1$ gives a constant amount $x$ with probability 1, while gamble $f_2$ gives the same amount with probability 0.89, nothing with probability 0.01 and bigger amount $y$  with probability 0.1. 

In experiment 2 the choice is between a gamble $f_3$ that gives $x$ with probability 0.11 and nothing with probability 0.89, while $f_4$ gives $y$ with probability 0.1 and nothing with probability 0.9. 

The paradox lies in that usually players give the preference $f_1\succ f_2$ and $f_4\succ f_3$, while the difference between the gambles $f_1$ and $f_2$ should lie on whether it is preferable to win the amount $x$ with probability 0.11 or nothing with probability 0.01 and $y$ with probability 0.1, being $f_1$ and $f_2$ equal in the other cases, and the same difference holds between $f_3$ and $f_4$. In other words, if we consider an expected utility model it should be 
\[
f_1 \succ f_2 \iff f_3 \succ f_4.
\] 

In order to represent the paradox in terms of desirable gambles, we may consider a ternary space $\pspace\coloneqq\{\omega_1,\omega_2,\omega_3\}$, with the underlying assumption that $P(\{\omega_1\})=0.89$, $P(\{\omega_2\})=0.01$ and $P(\{\omega_3\})=0.1$. If we take for instance $x=1,y=1.9$ then the gambles in the experiment can be represented as: 
\begin{eqnarray*}
 f_1(\{\omega_1\})=1 & f_1(\{\omega_2\})=1 & f_1(\{\omega_3\})=1 \\
 f_2(\{\omega_1\})=1 & f_2(\{\omega_2\})=0 & f_2(\{\omega_3\})=1.9 \\
 f_3(\{\omega_1\})=0 & f_3(\{\omega_2\})=1 & f_3(\{\omega_3\})=1 \\
 f_4(\{\omega_1\})=0 & f_4(\{\omega_2\})=0 & f_4(\{\omega_3\})=1.9. \\
\end{eqnarray*}

To see that it is possible to accommodate the above preferences using nonlinear desirability, let us consider the functional\footnote{This is an instance of an \emph{ordered weighted aggregation operator (OWA)} \cite{yager1988,yager2008}; see \cite{xiong2014} for an application of these operators on different paradoxes in decision theory. The connection with OWAs also makes it simple to represent this prevision as a \emph{risk-weighted expected utility} model in the sense of Buchak \cite{buchak2013}; see \cite[Sect.~3.3]{buchak2013} for some interesting comments about Allais paradox within this theory.} 
\[
\pr(f)\coloneqq 0.4\min f +0.2 \median f+ 0.4\max f.
\]
Then $\pr$ satisfies constant additivity, is monotone and moreover $\pr(f)>0$ for every $f\in\gambles^+$ and $\pr(f)\leq 0$ for any gamble in $\gambles^-_0$. Applying Prop.~\ref{prop:desirs-pr}, the set $\desirs_{\pr}$ is $\kappa_4$-coherent. Moreover, it holds that $\desirs_{\pr}$ induces the functional $\pr$: on the one hand, 
\[
\sup\{\mu: f-\mu \in\desirs_{\pr}\}=\sup\{\mu: \pr(f-\mu)>0 \}=\pr(f),
\]
using that $\pr$ satisfies constant additivity and that $P(g)>0$ for every $g\in\gambles^+$. On the other hand, 
\[
\inf\{\mu: \mu-f \in\desirs_{\pr}\}=\inf\{\mu: \pr(\mu-f)>0 \}=\pr(f),
\]
using that 
\begin{align*}
\pr(-f)&=0.4\min (-f) +0.2 \median (-f)+ 0.4\max (-f)\\ 
&=0.4 \cdot (-\max f) +  0.2 \cdot (-\median f) + 0.4 \cdot (-\min f)\\ 
&=-0.4 \max f - 0.2 \median f - 0.4\min f=-\pr(f).
\end{align*}
We see then that $\desirs_{\pr}$ is a $\kappa_4$-coherent set of gambles that induces a precise prevision for each gamble $f$. Taking into account that 
\[
\pr(f_1)=1, \qquad \pr(f_2)=0.96, \qquad \pr(f_3)=0.6, \qquad \pr(f_4)=0.76,
\]
we deduce the preferences $f_1\succ f_2$ and $f_4\succ f_3$. 

Observe that it is not possible to solve Allais paradox with precise previsions using a closure operator that satisfies additivity (so in particular with neither $\kappa_1$ nor $\kappa_2$): the reason is that, for any real number $\mu_1<1$ and any real number $\mu_2$, if it holds that $f_2-\mu_1 \notin\desirs, f_3-\mu_2\notin\desirs$ and $f_4-\mu_2\in\desirs$, then for any $\eps>0$ it should be 
$\eps-f_2+\mu_1 \in\desirs \text{ and } \eps-f_3+\mu_2\in \desirs$, and then by additivity it would be 
\begin{equation*}
    (\eps-f_2+\mu_1)+(\eps-f_3+\mu_2)+(f_4-\mu_2)\in\desirs;
\end{equation*}
but this is the constant gamble on $2\eps+\mu_1-1$, that belongs to $\gambles_{<}$ if we pick $\eps$ small enough.

\section{Conclusions}\label{sec:conclusions}

In this paper we have introduced nonlinear desirability theory. Loosely speaking, it can be understood as a very general theory of uncertainty and value. Yet, contrary to traditional expected utility theory, probability and utility are not part of the general theory, nor can they be derived from the theory, in general. This means, for instance, that we have direct access to a subject's (lower and upper) prices for the goods under consideration, but not to assessments, such as those of probability and utility, that may have led the subject to establish those prices. In this respect the theory is somewhat more objective and more `behavioural' that standard desirability: here we are not interested in modelling a subject's inner world, but rather their actual behaviour.

In a similar spirit, the theory models a subject's attitude towards rewards that are directly expressed in amounts of goods. To make things simple we have been talking of money-valued gambles, where money is broadly intended as any (particularly nonlinear) currency. So it could be actual money or amounts of petrol, or energy, or food, etcetera. Yet we find that even talking about actual money only has a vast scope, especially in a theory like ours that aims at being very operational: for we humans have an interpersonal agreement about the value of money that can be leveraged to have transactions of essentially any sort.

To achieve this, we have abandoned the traditional schemes in decision theory, particularly those going back to Ancombe and Aumann's work. The reason is that their theory measures rewards indirectly, in a way that may well appear artificial, and because its very nature is still linear nonetheless. Savage's attempt was far more direct, but still he could not really get his theory founded on tenable assumptions---in particular because, again, of an underlying linearity assumption that leads to Allais paradox. 

Our theory has the potential to bypass all these troublesome issues; the price to pay, as we said, is losing the direct connection with probability and utility. Future work could try to explore this question in some detail, for instance by isolating the cases where that could still be done.

Most importantly, having abandoned probability and utility leads us to wonder how our theory can actually be used in practice, which in particular means: where do closure operators come from; how can a subject assess the closure operator that models their own attitudes? Recent work by Casanova et al.~\cite{casanova2023a} indicates a promising way: it gives a number of examples where attitudes to gambles in nonlinear desirability can be captured via systematic transformations of the space of gambles. These transformations, called \emph{feature mappings}, are similar in spirit to closure operators. Detailing this connection in future work would bring the present theory closer to applications.

A different while still important dimension to carefully consider if the theory is to become fully operational is the dynamic use of the models. At the moment, our theory is indeed one of desirability, in that it does not account for the evolution of one's wealth in time due to the actual buying and selling of gambles. Such a dynamical use requires a detailed analysis of desirability with regard to model revision as a consequence of changes in the status quo, and whether the closure operator can give us information about such revisions. In addition to this, one should consider that the traditional theory of desirability comes with an assumption of `act-state independence', meaning that rewards are not affected by the actions of buying and selling gambles a subject takes. If the present nonlinear generalisation of desirability is instead to become a real-world theory of decision making, situations of act-state dependence should be allowed to exist. This is to say that the present work has laid the foundations for the static case; the dynamic case entails important questions that need a dedicated future treatment to be properly addressed.

On a more theoretical level, let us recall that our theory has been developed on logical grounds, thanks to the use of closure operators. For this reason, it should contribute to clarify the relation between logic and decision theories. On the one hand, it could make it easier to verify whether or not a theory of decision making is internally consistent. On the other, it shows that logic, in quite a general sense, can be regarded as a generalised theory of desirability. This interplay between logic and desirability should lead to useful insights. For instance, since the latter comes with an embedded notion of conditioning, we might expect that logic comes with that too.

It would also be interesting to consider the problem of aggregating a number of sets of desirable gambles, generalising the ideas from \cite{casanova2021}; and it would definitely be useful and interesting to relate nonlinear desirability to Kohlas' information algebras \cite{kohlas2003}, following up on work already done in the linear case \cite{casanova2022b}. Finally, in spite of the generality of nonlinear desirability, there are dimensions of generality that are not covered by the present theory and that could be achieved by extending choice functions \cite{seidenfeld2010,vancamp2018a} to nonlinear utility scales along the lines presented here for desirability.

\section*{Acknowledgements} 
\noindent We acknowledge the financial support of project PGC2018-098623-B-I00. An early version of the ideas in this paper was presented at the 12th International Symposium on Imprecise Probabilities: Theories and Applications (ISIPTA~'21) \cite{zaffalon2021}. We would like to show our sincerest appreciation to the anonymous reviewers, whose extremely careful reading of the paper led to substantial improvements and corrections. 

\bibliographystyle{plain}

\begin{thebibliography}{10}

\bibitem{allais1953}
M.~Allais.
\newblock Le comportement de l'homme rationnel devant le risque: critique des
  postulats et axiomes de l'\'{e}cole {A}m\'{e}ricaine.
\newblock {\em Econometrica}, 21(4):503--546, 1953.

\bibitem{anscombe1963}
F.~J. Anscombe and R.~J. Aumann.
\newblock A definition of subjective probability.
\newblock {\em The Annals of Mathematical Statistics}, 34:199--2005, 1963.

\bibitem{bell1982}
D.E. Bell.
\newblock Regret in decision making under uncertainty.
\newblock {\em Operations Research}, 30(5):961--981, 1982.

\bibitem{buchak2013}
L.~Buchak.
\newblock {\em Risk and rationality}.
\newblock Oxford University Press, Oxford, 2013.

\bibitem{casanova2023a}
A.~Casanova, A.~Benavoli, and M.~Zaffalon.
\newblock Nonlinear desirability as a linear classification problem.
\newblock {\em International Journal of Approximate Reasoning}, 152:1--32,
  2023.

\bibitem{casanova2022b}
A.~Casanova, J.~Kohlas, and M.~Zaffalon.
\newblock Information algebras in the theory of imprecise probabilities.
\newblock {\em International Journal of Approximate Reasoning}, 142:383--416,
  2022.

\bibitem{casanova2021}
A.~Casanova, E.~Miranda, and M.~Zaffalon.
\newblock Joint desirability foundations of social choice and opinion pooling.
\newblock {\em Annals of Mathematics and Artificial Intelligence},
  89(10-11):965--1011, 2021.

\bibitem{couso2011}
I.~Couso and S.~Moral.
\newblock Sets of desirable gambles: Conditioning, representation, and precise
  probabilities.
\newblock {\em International Journal of Approximate Reasoning}, 52:1034--1055,
  2011.

\bibitem{debock2015b}
J.~{de}~Bock and G.~{de}~Cooman.
\newblock Conditioning, updating and lower probability zero.
\newblock {\em International Journal of Approximate Reasoning}, 67:1--36, 2015.

\bibitem{cooman2005e}
G.~{de}~Cooman.
\newblock Belief models: an order-theoretic investigation.
\newblock {\em Annals of Mathematics and Artificial Intelligence}, 45:5--34,
  2005.

\bibitem{cooman2008b}
G.~{de}~Cooman and E.~Miranda.
\newblock Weak and strong laws of large numbers for coherent lower previsions.
\newblock {\em Journal of Statistical Planning and Inference},
  138(8):2409--2432, 2008.

\bibitem{cooman2012b}
G.~{de}~Cooman and E.~Quaeghebeur.
\newblock Exchangeability and sets of desirable gambles.
\newblock {\em International Journal of Approximate Reasoning}, 53:363--395,
  2012.
\newblock Special issue in honour of Henry E.~Kyburg, Jr.

\bibitem{finetti1931}
B.~{de}~Finetti.
\newblock Sul significato soggettivo della probabilit{\`a}.
\newblock {\em Fundamenta Mathematicae}, 17:298--329, 1931.

\bibitem{finetti1970}
B.~{de}~Finetti.
\newblock {\em Teoria delle Probabilit\`a}.
\newblock Einaudi, Turin, 1970.

\bibitem{finetti19745}
B.~{de}~Finetti.
\newblock {\em Theory of Probability: A Critical Introductory Treatment}.
\newblock John Wiley \& Sons, Chichester, 1974--1975.
\newblock {E}nglish translation of \cite{finetti1970}, two volumes.

\bibitem{dubois1988}
D.~Dubois and H.~Prade.
\newblock {\em Possibility Theory}.
\newblock Plenum Press, New York, 1988.

\bibitem{ferson2003}
S.~Ferson, V.~Kreinovich, L.~Ginzburg, D.~S. Myers, and K.~Sentz.
\newblock Constructing probability boxes and {D}empster-{S}hafer structures.
\newblock Technical Report SAND2002--4015, Sandia National Laboratories,
  January 2003.

\bibitem{galaabaatar2013}
T.~Galaabaatar and E.~Karni.
\newblock Subjective expected utility with incomplete preferences.
\newblock {\em Econometrica}, 81(1):255--284, 2013.

\bibitem{gilboa1989}
I.~Gilboa and D.~Schmeidler.
\newblock Maxmin expected utility with a non-unique prior.
\newblock {\em Journal of Mathematical Economics}, 18:141--153, 1989.

\bibitem{good1952}
I.~J. Good.
\newblock Rational decisions.
\newblock {\em Journal of the Royal Statistical Society, Series B},
  14(1):107--114, 1952.

\bibitem{kahneman1979}
D.~Kahneman and A.~Tversky.
\newblock Prospect theory: an analysis of decision under risk.
\newblock {\em Econometrica}, 47:263--291, 1979.

\bibitem{kohlas2003}
J.~Kohlas.
\newblock {\em Information Algebras: Generic Structures for Inference}.
\newblock Springer-Verlag, 2003.

\bibitem{levi1980}
I.~Levi.
\newblock {\em The enterprise of knowledge}.
\newblock MIT Press, Cambridge, 1980.

\bibitem{loomes1982}
G.~Loomes and R.~Sugden.
\newblock Regret theory: an alternative theory of traditional choice under
  uncertainty.
\newblock {\em Economic Journal}, 92(4):805--824, 1982.

\bibitem{machina1982}
M.~Machina.
\newblock ``{E}xpected utility'' analysis without the independence axiom.
\newblock {\em Econometrica}, 50(2):277--323, 1982.

\bibitem{miranda2007}
E.~Miranda and G.~{de}~Cooman.
\newblock Marginal extension in the theory of coherent lower previsions.
\newblock {\em International Journal of Approximate Reasoning}, 46(1):188--225,
  2007.

\bibitem{miranda2010c}
E.~Miranda and M.~Zaffalon.
\newblock Notes on desirability and conditional lower previsions.
\newblock {\em Annals of Mathematics and Artificial Intelligence},
  60(3--4):251--309, 2010.

\bibitem{nau2006}
R.~Nau.
\newblock The shape of incomplete preferences.
\newblock {\em The Annals of Statistics}, 34(5):2430--2448, 2006.

\bibitem{pelessoni2016}
R.~Pelessoni and P.~Vicig.
\newblock 2-coherent and 2-convex conditional lower previsions.
\newblock {\em International Journal of Approximate Reasoning}, 77:66--86,
  2016.

\bibitem{quaeghebeur2014}
E.~Quaeghebeur.
\newblock Desirability.
\newblock In T.~Augustin, F.~Coolen, G.~de~Cooman, and M.~Troffaes, editors,
  {\em Introduction to Imprecise Probabilities}, pages 1--27. Wiley, 2014.

\bibitem{quiggin1993}
J.~Quiggin.
\newblock {\em Generalized expected utility theory. The rank-dependent model}.
\newblock Kluwer Academic, 1993.

\bibitem{satia1973}
J.~K. Satia and R.~E. Lave.
\newblock {M}arkovian decision processes with uncertain transition
  probabilities.
\newblock {\em Operations Research}, 21:728--740, 1973.

\bibitem{savage1972}
L.~J. Savage.
\newblock {\em The Foundations of Statistics}.
\newblock Dover, New York, 1972.
\newblock Second revised edition, first published 1954.

\bibitem{seidenfeld2010}
T.~Seidenfeld, M.~J. Schervisch, and J.~B. Kadane.
\newblock Coherent choice functions under uncertainty.
\newblock {\em Synthese}, 172(1):157--176, 2010.

\bibitem{seidenfeld1995}
T.~Seidenfeld, M.~J. Schervish, and J.~B. Kadane.
\newblock A representation of partially ordered preferences.
\newblock {\em The Annals of Statistics}, 23:2168--2217, 1995.
\newblock Reprinted in \cite{seidenfeld1999}, pp.~69--129.

\bibitem{seidenfeld1999}
T.~Seidenfeld, M.~J. Schervish, and J.~B. Kadane.
\newblock {\em Rethinking the Foundations of Statistics}.
\newblock Cambridge University Press, Cambridge, 1999.

\bibitem{shafer1976}
G.~Shafer.
\newblock {\em A Mathematical Theory of Evidence}.
\newblock Princeton University Press, Princeton, NJ, 1976.

\bibitem{troffaes2007}
M.C.M. Troffaes.
\newblock Decision making under uncertainty using imprecise probabilities.
\newblock {\em International Journal of Approximate Reasoning}, 45(1):17--29,
  2007.

\bibitem{tversky1992}
A.~Tversky and D.~Kahnemann.
\newblock Advances in prospect theory: cumulative representation of
  uncertainty.
\newblock {\em Journal of Risk and Uncertainty}, 5:297--393, 1992.

\bibitem{vancamp2018a}
A.~Van~Camp.
\newblock {\em Choice Functions as a Tool to Model Uncertainty}.
\newblock PhD thesis, Ghent University, 2018.

\bibitem{walley1991}
P.~Walley.
\newblock {\em Statistical Reasoning with Imprecise Probabilities}.
\newblock Chapman and Hall, London, 1991.

\bibitem{williams1975}
P.~M. Williams.
\newblock Notes on conditional previsions.
\newblock Technical report, School of Mathematical and Physical Science,
  University of Sussex, UK, 1975.
\newblock Reprinted in \cite{williams2007}.

\bibitem{williams2007}
P.~M. Williams.
\newblock Notes on conditional previsions.
\newblock {\em International Journal of Approximate Reasoning}, 44:366--383,
  2007.
\newblock Revised journal version of \cite{williams1975}.

\bibitem{xiong2014}
W.~Xiong and H.~Liu.
\newblock On solving some paradoxes using the ordered weighted averaging
  operator based decision model.
\newblock {\em International Journal of Intelligent Systems}, 29:1--25, 2014.

\bibitem{yager1988}
R.~R. Yager.
\newblock On ordered weighted averaging aggregation operators in multicriteria
  decision making.
\newblock {\em IEEE Transactions on Systems, Man and Cybernetics},
  18(1):183--190, 1988.

\bibitem{yager2008}
R.~R. Yager.
\newblock Decision making under {D}empster-{S}hafer uncertainties.
\newblock In {\em Classic works of the {D}empster-{S}hafer theory of belief
  functions}, pages 619--632. Springer, 2008.

\bibitem{zaffalon2013a}
M.~Zaffalon and E.~Miranda.
\newblock Probability and time.
\newblock {\em Artificial Intelligence}, 198(1):1--51, 2013.

\bibitem{zaffalon2017a}
M.~Zaffalon and E.~Miranda.
\newblock Axiomatising incomplete preferences through sets of desirable
  gambles.
\newblock {\em Journal of Artificial Intelligence Research}, 60:1057--1126,
  2017.

\bibitem{zaffalon2018a}
M.~Zaffalon and E.~Miranda.
\newblock Desirability foundations of robust rational decision making.
\newblock {\em Synthese}, 198(Suppl. 27):6529--6570, 2021.
\newblock Published online in 2018.

\bibitem{zaffalon2021}
M.~Zaffalon and E.~Miranda.
\newblock The sure thing.
\newblock {\em Proceedings of Machine Learning Research}, 147:342--351, 2021.

\bibitem{zaffalon2003}
M.~Zaffalon, K.~Wesnes, and O.~Petrini.
\newblock Reliable diagnoses of dementia by the naive credal classifier
  inferred from incomplete cognitive data.
\newblock {\em Artificial Intelligence in Medicine}, 29:61--79, 2003.

\end{thebibliography}

\end{document}